\documentclass{article}

\usepackage[nonatbib, preprint]{neurips_2026}
\usepackage[numbers]{natbib}

\usepackage[utf8]{inputenc} %
\usepackage[T1]{fontenc}    %
\usepackage{hyperref}       %
\usepackage{url}            %
\usepackage{booktabs}       %
\usepackage{amsfonts}       %
\usepackage{nicefrac}       %
\usepackage{microtype}      %
\usepackage[svgnames]{xcolor} %
\usepackage{wrapfig}
\usepackage{algorithmic}
\usepackage{amsmath,amssymb,amsthm,mathtools,bm}
\usepackage{titletoc}
\usepackage[most]{tcolorbox}
\usepackage{bbm}

\usepackage{subcaption}
\usepackage{wrapfig}
\usepackage{amsmath}
\usepackage{enumitem}

\usepackage[most]{tcolorbox}
\usepackage{fontawesome5}

\newtcolorbox{badbox}{
    enhanced,
    colback=red!5,
    colframe=red!60!black,
    arc=8pt,
    boxrule=0.6pt,
    left=6pt, right=6pt, top=6pt, bottom=6pt,
    drop shadow={black!30!white},
}

\newtcolorbox{goodbox}{
    enhanced,
    colback=green!5,
    colframe=green!50!black,
    arc=8pt,
    boxrule=0.6pt,
    left=6pt, right=6pt, top=6pt, bottom=6pt,
    drop shadow={black!30!white},
}

\usepackage{algorithm}

\newcommand{\MethodS}{ASTEROID}
\newcommand{\MethodH}{\textcolor{teal}{\textbf{A}}symmetric \textcolor{teal}{\textbf{S}}tudent \textcolor{teal}{\textbf{Te}}acher \textcolor{teal}{\textbf{R}}oll-\textcolor{teal}{\textbf{O}}ut \textcolor{teal}{\textbf{I}}n-context \textcolor{teal}{\textbf{D}}istillation}

\newtheorem{theorem}{Theorem}
\newtheorem{lemma}{Lemma}

\newtheorem{corollary}{Corollary}

\newtheorem{assumption}{Assumption}
\newtheorem{remark}{Remark}

\usepackage{xcolor}
\definecolor{locA}{HTML}{D85A30}
\definecolor{locB}{HTML}{1D9E75}
\definecolor{locC}{HTML}{534AB7}
\newcommand{\search}[2]{\textcolor{#1}{\texttt{search}(\texttt{#2})}}

\newcommand{\sB}{\search{locB}{B}}
\newcommand{\sC}{\search{locC}{C}}

\usepackage[textsize=tiny]{todonotes}
\title{Asymmetric On-Policy Distillation Learns In-Context Exploration}

\author{%
  \textbf{Sriyash Poddar}$^{1}$ \quad
  \textbf{Yanda Bao}$^{1}$ \quad
  \textbf{Jacob Krantz}$^{2}$ \quad
  \textbf{Matthew Chang}$^{2}$ \quad
  \textbf{Xavier Puig}$^{2}$ \\[0.4em]
  \textbf{Roozbeh Mottaghi}$^{2}$ \quad
  \textbf{Natasha Jaques}$^{1*}$ \quad
  \textbf{Abhishek Gupta}$^{1*}$ \\[0.6em]
  {\normalfont\footnotesize Project website: \url{https://weirdlabuw.github.io/asteroid/}}
}

\begin{document}

\maketitle

\begingroup
\renewcommand\thefootnote{}
\footnotetext{%
  \parbox[t]{\textwidth}{%
    $^{1}$Paul G. Allen School of Computer Science and Engineering, University of Washington \\
    $^{2}$Work partially done while at Meta \\
    $^{*}$Equal advising.%
  }%
}
\endgroup

\begin{abstract}
  Sequential decision-making under partial observability is difficult because agents must explore to infer hidden, task-relevant information before acting effectively. 
  Learning such information-gathering behaviors \emph{tabula rasa} via online reinforcement learning is sample-inefficient, whereas offline imitation averages expert actions across hidden states and fails to learn exploration. In this work, we propose \MethodS, a simple and effective student--teacher supervised learning framework that can learn test-time exploration. Although a privileged expert never needs to explore, we demonstrate that a history-conditioned student can learn to do so when trained with on-policy contexts: the student first collects on-policy context under partial observability, and then imitates the privileged expert conditioned on that context. By distilling expert actions directly conditioned on the student’s on-policy history, the student learns to systematically resolve partial observability, i.e., search while uncertain and exploit once the hidden state is revealed. This enables efficient test-time exploration (similar to Bayesian posterior sampling) without incurring the impractical training-data costs of reinforcement learning. Across a diverse range of eleven different environments, including discrete exploration benchmarks, robotic manipulation, navigation, and Procgen environments, \MethodS{} achieves $\sim$2$\times$ higher returns under fixed exploration budgets, uses up to $\sim$100$\times$ fewer interactions than RL baselines, and outperforms frontier models at exploration. Finally, our method transfers policies trained in simulation to vision-denied real-world robot manipulation, exhibiting efficient exploration and increasing success by $+40\%$ over standard distillation. Overall, it provides a simple yet surprisingly effective framework to learn efficient test-time exploration. 
\end{abstract}

\begin{figure}
    \centering
    \includegraphics[width=\linewidth]{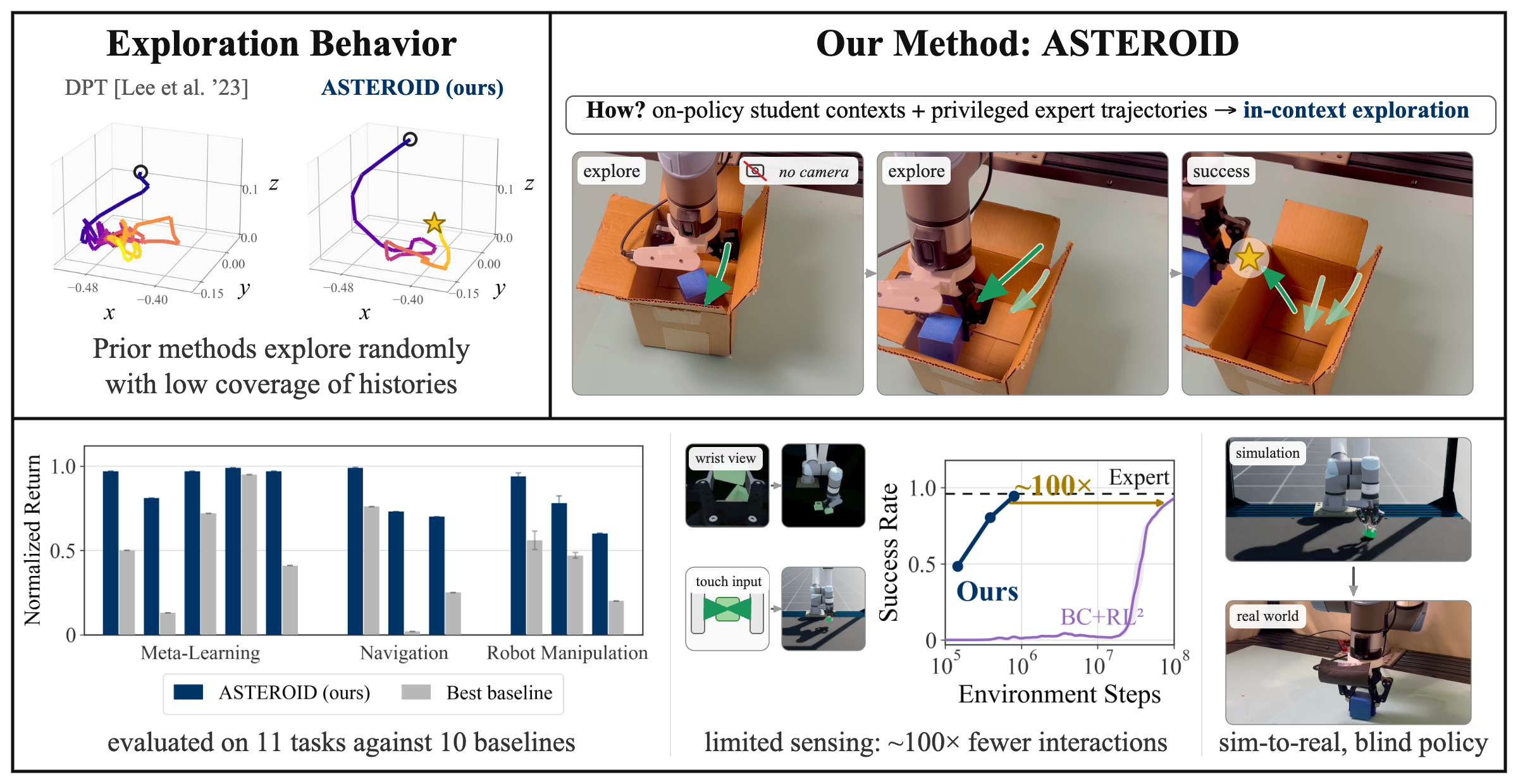}
    \caption{ \MethodS{} provides a simple and reliable method for learning efficient exploration in partially observed domains via on-policy asymmetric student-teacher distillation. 
}
    \label{fig:teaser_fig}
    \vspace{-0.4cm}
\end{figure}

\section{Introduction}\label{sec:intro}

Sequential decision-making under partial observability is a central challenge in reinforcement learning (RL)~\citep{kaelbling1998planning, osband2013psrl}. In the real world, from robotics to embodied AI, agents receive only partial, noisy observations of the environment. To act successfully, agents must first explore to uncover hidden, task-relevant information~\citep{Igl2018DeepVR, zintgraf2021varibad, lee2023supervised, advisor, memmel2024asid, wan2025enhancing}. For example, a household robot searching for an object must use onboard sensing to infer likely object locations, track locations it has already searched, and adapt future exploration based on prior attempts~\citep{Chaplot2020NeuralTS, chaplot2020object, hu2025realworld}. These problems are formalized as partially observable Markov decision processes (POMDPs)~\citep{kaelbling1998planning}.

Solving these tasks requires policies with memory~\citep{hausknecht2015deep, ni2021recurrent}. Learning them from scratch via recurrent RL is sample-inefficient, since the agent must learn to gather information and exploit it at the same time~\citep{bauer2023human, zintgraf2021varibad, duan2016rl}. One way to improve sample efficiency is to assume training-time access to a privileged expert that observes the hidden state and acts accordingly. For example, in household navigation, an oracle planner could use the full map of the training environments, which is unavailable at deployment.  In practice, these teachers can be trained in fully observed tasks using RL, imitation learning, or hybrid approaches~\citep{zhang2026balanced, yin2026emergent,schulman2017ppo,haarnoja2018sacapps,osa2018algorithmic}. This expert can teach the student how to act once the hidden state is known, but not how to find it~\citep{pmlrv139liu21s}. As a result, naively copying the expert does not teach the student to explore~\citep{walsman2023impossibly}. Interactive imitation~\citep{swamy2022sequence} offers a natural fix by relabeling student mistakes with expert actions. However, standard approaches~\citep{ross2011reduction} train the student to predict the average expert action across possible hidden states, so it dithers between hypotheses instead of resolving them~\citep{kim2025distilling, walsman2023impossibly}. Other approaches propose to address the student-teacher mismatch using expensive online RL~\citep{advisor}, differentiable experts~\citep{warrington2021robust}, or offline coverage assumptions~\citep{lee2023supervised}.

In this work, we propose \MethodS{} (\MethodH), a framework that shows that privileged experts can teach exploration through interactive imitation. The idea is simple: the student generates the history online, and the expert provides the behavior to imitate. \MethodS{} rolls out the current student under partial observability, producing a history of observations, actions, and rewards, then trains it to imitate the expert conditioned on that history. The expert never explores. Instead, the student's failed searches and probes become useful context: they specify evidence accumulated so far, while the expert provides next actions given the hidden state. With on-policy history, the policy can identify inconsistent hypotheses and adapt its exploration.

Why does this teach exploration? A partial history represents uncertainty over the hidden state. Returning to household navigation: suppose the object could be in location $A$, $B$, or $C$. If the student searches $A$ and finds nothing, its history rules out $A$ but is still consistent with $B$ and $C$. A privileged expert would then go to \sB{} when the object is at $B$, and \sC{} when it is at $C$. The student therefore learns to act and explore the remaining valid actions: if it fails at $A$, it samples \sB{} or \sC{}. If it then fails at $B$, every expert target is \sC{}, since only $C$ remains. Unlike random sampling of locations, the policy conditions on its history and never retries hypotheses it has ruled out. Across many training tasks, ambiguous histories are labeled with a distribution of potential, consistent expert trajectories. The student samples from this distribution to explore, and each new observation and feedback eliminates additional hypotheses. The key insight: the set of histories that the student needs to learn to adapt from (e.g., failing at $A$, failing at $B$ and then moving to $C$) is entirely covered by on-policy behavior. This is \emph{in-context exploration}, i.e., searching while uncertain, and exploiting once the hidden state is revealed. Additionally, we show that a curriculum over the student rollout horizon improves sample efficiency and enables longer-context adaptation (Section~\ref{sec:method}). \emph{Overall, \MethodS{} incorporates very simple changes into standard online student-teacher distillation to enable test-time exploration behavior}.

We show both empirically (Section~\ref{sec:exp}) and theoretically (Appendix~\ref{app:theory}) that \MethodS{} approximates Bayesian posterior sampling, a provably efficient exploration strategy~\citep{osband2013psrl}. However, posterior sampling has been hard to scale: prior approaches maintain an explicit posterior~\citep{asmuth2012boss} or assume infinite offline data~\citep{lee2023supervised}, which are intractable in realistic settings. \MethodS{} avoids both: it does not maintain an explicit posterior, and on-policy supervised learning avoids impractical data costs. As it is a simple extension of student--teacher distillation, \MethodS{} works with modern architectures and existing sim-to-real frameworks~\cite{yin2026emergent}. In summary, we make the following contributions:
\begin{enumerate}[topsep=1pt, itemsep=-0.5ex, partopsep=1ex, parsep=1ex, leftmargin=0.5cm]
    \item We present \MethodS, a simple student--teacher distillation framework that enables in-context exploration and test-time improvement learned solely from privileged experts.
    \item We evaluate our method across 11 environments, including navigation in realistic household simulators and real-world robot manipulation, where it outperforms RL and supervised baselines under fixed exploration budgets while using two orders of magnitude fewer training interactions.
    \item We show that learned policies exhibit efficient real-world exploration and, under appropriate assumptions, recover posterior sampling with error linear rather than quadratic in context length.
\end{enumerate}

\section{Related Work}\label{sec:related_work}

\textbf{Student--Teacher Distillation.}
Learning from privileged experts arises in settings from self-driving \citep{chen2020learning} to robotic exploration \citep{choudhury2017adaptive}. Interactive imitation addresses the covariate shift that drives the learner into unseen states \citep{ross2011reduction}, and prior work suggests that on-policy imitation is necessary under partial observability \citep{swamy2022sequence}. \MethodS{} is also on-policy, but only in its contexts, not in its action targets. Our work targets this non-realizable regime, where the learner must take information-gathering actions that are not recoverable from a fully observed expert \citep{spencer2021feedback, lee2021bayesian}. Prior approaches handle this via additional assumptions, such as differentiable experts \citep{warrington2021robust}, predicting hidden information~\citep{kumar2021rma}, or incorporating RL with bootstrapping to induce exploration \citep{advisor, walsman2023impossibly,shenfeld2023tgrl,kim2025distilling}. In contrast, we learn information-gathering behavior using iterative distillation of expert actions conditioned on on-policy histories, which is more sample-efficient than RL-based alternatives.

\textbf{Meta Learning.}
Meta-RL learns adaptive exploration across task distributions using recurrent policies~\citep{hausknecht2015deep, duan2016rl, zintgraf2021varibad,pmlrv139liu21s}, but requires extensive online interaction \citep{bauer2023human}, making it difficult to scale to high-dimensional, sparse-reward domains. Our approach replaces this expensive RL-based adaptation with supervised learning, making it efficient to learn long-horizon exploration. Further, unlike prior work limited to Bayes-Adaptive MDPs \citep{lee2021bayesian, zintgraf2021varibad, lee2023supervised}, our method scales to POMDPs, where the policy explores within a single episode to estimate the underlying state and achieve task success.

\textbf{Posterior Sampling.}
Posterior sampling methods \citep{thompson1933, osband2013psrl} drive exploration by maintaining a belief over hidden parameters, acting optimally under samples from this belief, and updating it based on feedback. Despite strong empirical and theoretical results \citep{chapelle2011thompsoneval, osband2013psrl}, such approaches require maintaining an explicit posterior and solving for policies under sampled parameters \citep{osband2013psrl, asmuth2012boss}, limiting scalability to high-dimensional settings. \MethodS{} does not explicitly optimize a posterior-sampling objective. Instead, posterior-sampling-like behavior emerges through iterative on-policy distillation, even in high-dimensional continuous environments (Section~\ref{sec:exp}), and we include theoretical insights to show convergence to posterior sampling on the covered history support (Appendix~\ref{app:theory}). In Section~\ref{sec:exp}, we show that current frontier models are inefficient at performing such exploration under partial observability, motivating the need to explicitly learn such behavior.

\textbf{In-Context Decision-Making.}
Transformers trained autoregressively on sequential data exhibit in-context learning \citep{brown2020language, garg2022can}, which Decision Transformer \citep{chen2021decisiontransformer} and Algorithm Distillation \citep{Laskin2023InContextRL} leverage for decision-making. These works focus on using online rollouts as context to adapt behavior without external prompts.  Closest to our work, Decision-Pretrained Transformer \citep{lee2023supervised} learns an in-context policy that performs posterior sampling by pairing randomly generated contexts with privileged expert supervision. Such methods assume the offline dataset covers all possible interaction histories, which is intractable in high-dimensional domains. \MethodS{}, instead, uses on-policy contexts, removing the coverage assumption and scaling to much harder, higher-dimensional problems (Section~\ref{sec:exp}).

We include an extended discussion of related work in Appendix~\ref{app:related_work}.

\section{Problem Statement}\label{sec:problem}

We study decision-making under \emph{partial observability}, modeled as a partially observable Markov decision process (POMDP) \citep{kaelbling1998planning}. A POMDP is represented by the tuple \((\mathcal{S},\mathcal{A}, \mathcal{O},E,T,R, \rho_0,\gamma)\), where $\mathcal{S}$ is the state space, $\mathcal{A}$ is the action space, $\mathcal{O}$ is the observation space and $E,T,R,\rho_0, \gamma$ are the emission function, transition kernel, reward function, initial state distribution and discount factor, respectively. In a POMDP, the policy does not observe the state $s_t$ but receives partial observations $o_t \sim E(\cdot \mid s_t)$. For simplicity, we also define the Bayes-Adaptive Markov decision process (BAMDP), a special case of POMDPs in which the only hidden variable is a latent parameter, denoted by $z$, i.e., $s_t = (o_t, z) $. For each interaction, $z\in\mathcal{Z}$ is sampled from a prior $p(z)$ and remains fixed for $K$ episodes (unlike a POMDP, where hidden states may vary at each step). Given $z$, BAMDPs are fully observed and can be represented as a collection of MDPs $M_z$ (or tasks).

As a running example, consider a three-door environment: a BAMDP with latent parameter $z$ denoting the target door. At the start of each interaction, $z$ is sampled uniformly from the three doors and remains fixed across multiple episodes, where the objective is to navigate to the correct door. The target is hidden, and the agent must use its interaction history to infer the hidden goal and then exploit that information. We can construct a POMDP by adding noise to agent observations at each time step. 

Since observations $o_t$ are partial, optimal policies depend on the current observation $o_t$ and the history $h_t = (o_0, a_0, r_0, \ldots, o_{t-1}, a_{t-1}, r_{t-1})$. Let $d_\pi(h)$ denote the distribution over histories for a policy $\pi$. Learning information-gathering behavior directly from sparse rewards is highly sample-inefficient, often requiring impractical amounts of experience~\citep{bauer2023human}. To mitigate this, we study an \emph{asymmetric} training--testing setting in which a privileged teacher $\pi^*(a_t\mid s_t)$ can access the true state $s_t$ during training, while the deployed student $\pi_\theta(a_t\mid o_t, h_t)$ acts under partial observability using $o_t$ and $h_t$. During training, the learner interacts with the POMDP to collect transition tuples $(o_t, a_t, r_t, o_{t+1})$ and queries the teacher for actions $a^*_t \sim \pi^*(\cdot \mid s_t)$. Our objective is to use these interactions and teacher supervision to learn a student $\pi_\theta(\cdot\mid o_t, h_t)$ that maximizes the expected return $J(\pi_\theta)=\mathbb{E}_{\pi_\theta}\!\left[\sum_{t=0}^{\infty}\gamma^t r_t\right]$. This asymmetric distillation setting arises naturally in navigation~\citep{jain2021gridtopix}, sim-to-real robotics~\citep{yin2026emergent}, human-in-the-loop learning~\citep{kelly2019hg}, and knowledge distillation~\citep{gu2024minillm}.

\section{Background: Student--Teacher Distillation}\label{sec:prelim}
In the following sections, we focus on the key paradigms for teacher--student distillation under partial observability, discuss why they break down at learning how to explore in asymmetric settings, and motivate \MethodS{} as an iterative on-policy distillation method for learning in-context exploration.

\paragraph{Naive asymmetric student--teacher distillation cannot learn adaptive exploration.} ~\label{sec:tsd}
\begin{wrapfigure}{r}{0.5\linewidth}
    \centering
    \includegraphics[width=\linewidth]{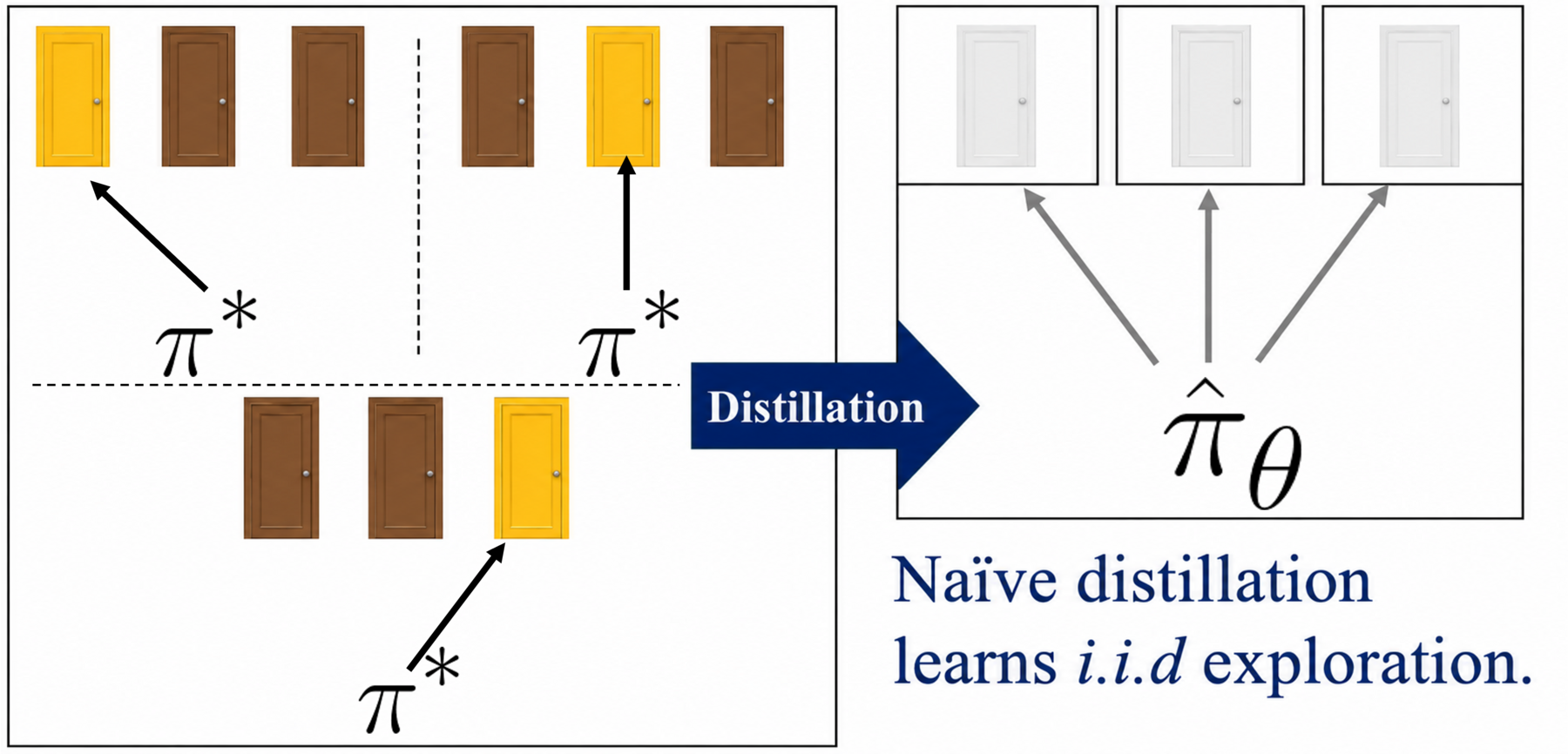}
    \label{fig:tsd}
\end{wrapfigure}
A natural approach to train partially observed student policies is via behavior-cloning trajectories generated solely by a privileged policy \cite{spoc2023ehsani, peng2018sim}. But there is a conflict: the privileged teacher policy $\pi^*(\cdot|s_t)$ solves the task without exploration, whereas the unprivileged student policy $\pi_\theta$ must gather information through exploration before exploiting for task success~\cite{hausknecht2015deep}. Following the three-door example from Section~\ref{sec:problem}, privileged experts observe $z$, while the student only receives partial observations $o_t$. At $t=0$, $o_0$ is identical for all values of $z$. Here, a student policy acting on partial observation $o_0$ cannot distinguish between actions provided for the three goals and learns the marginal expert behavior induced by the training data ${d}_{\pi^*}$, i.e., choosing a door uniformly at random. Thus,

\begin{badbox}
Naive asymmetric distillation from expert trajectories learns \emph{i.i.d.} exploration, but fails to adaptively exploit sequences of environment rewards and feedback.
\end{badbox}

\paragraph{In-context learning via Decision-Pretrained Transformer (DPT) \citep{lee2023supervised} can learn exploration.}

Recent work \citep{lee2023supervised} has shown that asymmetric student--teacher distillation can learn exploration when the training dataset is constructed with specific design choices. 
\citet{lee2023supervised} introduce DPT, an in-context learning framework that constructs a dataset of contexts, observations and target actions: $\mathcal{D}^\text{DPT}=\{(h_t,o_t,a^*_t)^i\}_{i=1}^N$ for supervised learning via random sampling. Here, $d_{\mathcal{D}}$ denotes the empirical distribution of histories and query observations in the dataset $\mathcal{D}$. DPT generates its offline histories with an independent behavior policy.
Further, it labels randomly sampled query observations $o_t \in \mathcal{O}$ with the privileged expert $a^*_t \sim \pi^*(\cdot|o_t,z)$. The history $h_t$ provides context about the underlying hidden parameter, which the policy learns to use at the query observation $o_t$ to predict actions. DPT trains a causal transformer policy maximizing action likelihood over this dataset:

\begingroup
\setlength{\abovedisplayskip}{1pt}
\setlength{\belowdisplayskip}{1pt}
\setlength{\abovedisplayshortskip}{1pt}
\setlength{\belowdisplayshortskip}{1pt}
\begin{equation}
\mathcal{L}(\theta)
\;=\;
\mathop{\mathbb{E}}_{(h_t, o_t, a^*_t) \sim {\mathcal{D}}}
\left[
-\log \pi_{\theta}\!\left(a_t^* \mid o_t, h_t \right)
\right].
\label{eq:loss}
\end{equation}
\endgroup

Under coverage assumptions, DPT provably converges to posterior sampling: it uses $h_t$ to form an implicit posterior over the latent state, and acts according to the expert policy averaged under this posterior, i.e., $\pi(a_t|o_t,h_t) = \int \pi^*(a_t|o_t,z) \;\mathrm{p}(z|o_t,h_t)\; dz$.

Let $\mathcal{H}_t =  (\mathcal{O}\times\mathcal{A}\times\mathcal{R})^{t}\times\mathcal{O}$ denote the space of length-$t$ history and observation tuples $(h_t, o_t)$.

\begin{badbox}
The primary limitation of DPT is the reliance on a full-coverage assumption: the dataset must cover the combinatorial space of arbitrary contexts and query observations, i.e., we require
\[  \mathop{\bigcup}\limits_{t=0}^{H} {\mathcal{H}_t} \subseteq \operatorname{supp}(d_\mathcal{D}),\]
which scales exponentially with the interaction horizon $H$.
\end{badbox}

\begin{wrapfigure}{r}{0.5\linewidth}
    \vspace{-0.5cm}
    \centering
    \includegraphics[width=\linewidth]{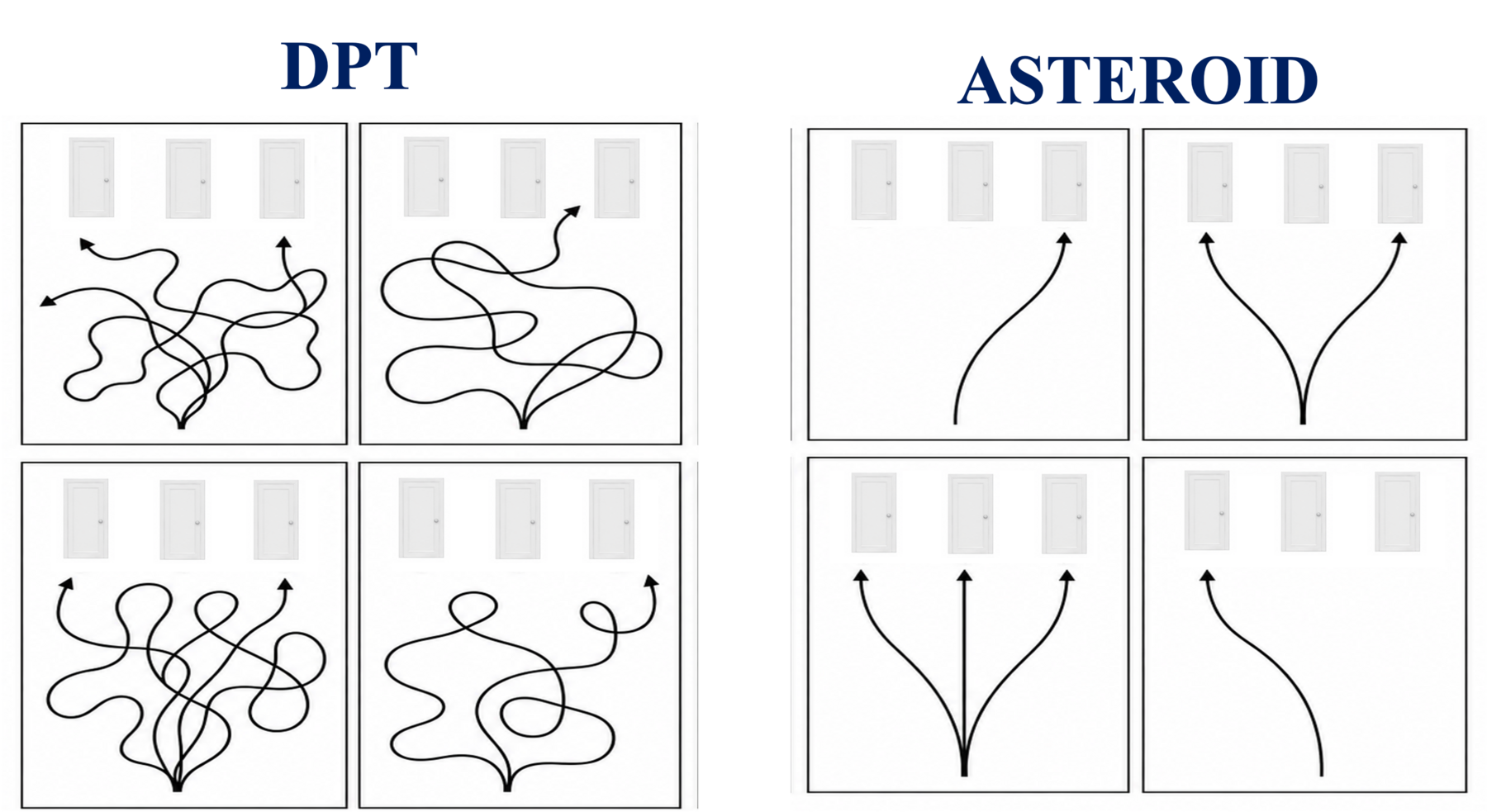}
    \caption{DPT needs intractable full-context coverage for training, while \MethodS{} generates on-policy rollouts to learn scalable exploration. \label{fig:coverage}}
    \vspace{-0.5cm}
\end{wrapfigure}

The above assumption holds in the infinite-data limit, when randomly sampling histories and query observations provides complete coverage (see Figure~\ref{fig:coverage}). Under finite data, this quickly becomes impractical in high-dimensional observation and action spaces, where the number of possible histories grows exponentially, and informative feedback is sparse~\citep{Bellman1966}. 
Consequently, purely offline context datasets do not scale to learning exploration in realistic, large-scale decision-making problems. This has fundamentally limited the applicability of methods like DPT in robotics and other high-dimensional problems.

\section{\MethodH}\label{sec:method}

\begin{figure*}
    \centering
    \includegraphics[width=\linewidth]{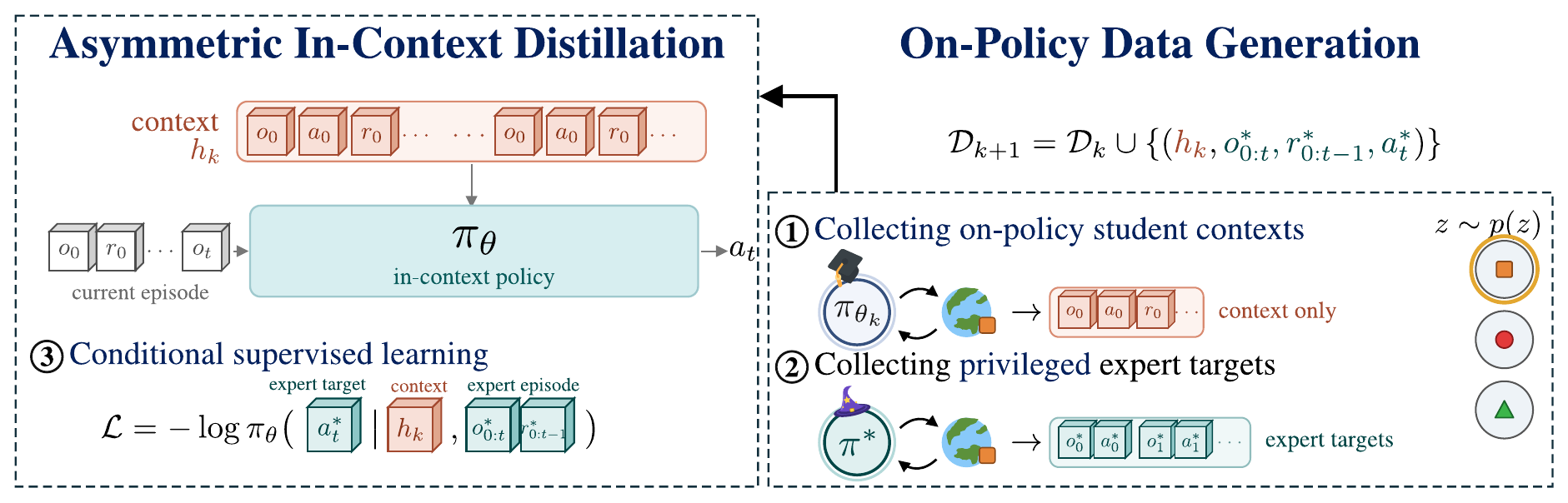}
    \caption{In each iteration, \MethodS{} (1) collects on-policy context from the student $\pi_{\theta_k}$ and then (2) expert trajectories from $\pi^*$, under the same hidden parameters $z$. The student rollouts are used as context, while expert behavior is used as the target. With this dataset, we can (3) train policies that perform in-context exploration.}
    \label{fig:method}
\end{figure*}

In Section~\ref{sec:prelim}, we identified a central bottleneck in supervised learning for in-context exploration: generating data that covers the histories a partially observed agent will experience at deployment.

\begin{goodbox}
Our key insight is that the training dataset $\mathcal{D}$ should not necessarily cover the entire combinatorial history space $\mathcal{H}_t$, but simply the distribution of trajectories generated by the student itself: the ``on-policy'' rollouts $h_t \sim d_{\pi_\theta}$ coupled with expert labels provide sufficient coverage in the dataset to learn an adaptive exploration strategy, i.e. 
\[\operatorname{supp}(d_{\pi_\theta}) \subseteq \operatorname{supp}(d_{\mathcal{D}})\]
\end{goodbox}

Consequently, this relaxes the strong assumption of infinite coverage over histories that renders DPT intractable. Our method \MethodS{} is an iterative asymmetric distillation procedure that alternates between collecting on-policy histories from the student and training a history-conditioned student policy from privileged expert labels. Collecting supervision on student-induced histories reduces the distribution shift between training and deployment. Unlike standard interactive imitation~\citep{ross2011reduction}, which aggregates per-step action labels, \MethodS{} supervises with full expert trajectories and conditions on environment feedback (Appendix~\ref{app:dagger}).

\theoremstyle{definition}
\newtheorem*{informaltheorem}{(Informal) Theorem}
\theoremstyle{plain}

\begin{goodbox}
\begin{informaltheorem}
Under compliance (i.e., independence between the sources of on-policy contexts and supervision labels), on-policy coverage, and bounded episode cost, \MethodS{} approximately
recovers posterior sampling on the histories it visits. Given a supervised learner with
average error $\epsilon$ on its own histories, its gap to posterior sampling over $K$ episodes
is $O(\epsilon K)$. The total regret is 
\[
\mathrm{Reg}_{\mathrm{PS}}(K) + O(\epsilon K)
\]
i.e., posterior sampling's regret plus a linear approximation term.
\end{informaltheorem}
\end{goodbox}

\noindent The formal statements and proofs are given in
Appendix~\ref{app:theory}. The key step is that expert actions paired with 
the student's own histories are exactly posterior-sampling labels
(Lemma~\ref{lem:teacher_ps}). In addition, we show that an offline
in-context learner with the same training error can incur $O(\epsilon K^2)$
error (Lemma~\ref{lem:offline_compounding}), because the test-time histories it
encounters drift away from its fixed training distribution.

\begin{algorithm}[H]
\caption{\MethodS}
\label{alg:method}
\begin{algorithmic}[1]
\setlength{\itemsep}{0pt}
\setlength{\parskip}{0pt}
\REQUIRE iterations $K$, tasks per iteration $N$, horizon $H$, privileged teacher $\pi^*$
\STATE Initialize $\mathcal{D}_0 \leftarrow \text{expert trajectories}$, student $\pi_{\theta_0} \leftarrow \textit{i.i.d}\;\text{policy}$

\FOR{$k=1,\dots,K$}
    \STATE $\mathcal{D}_{k} \leftarrow \mathcal{D}_{k-1}$
    \STATE \texttt{\textcolor{NavyBlue}{// Collect on-policy contexts and expert targets}}
    \FOR{$i=1,\dots,N$}
        \STATE Sample $z \sim p(z)$ and instantiate $M_z$
        \STATE Roll out $\pi_{\theta_{k-1}}$ in $M_z$ for $k$ episodes: $h_k^{(i)} = (\tau^0,\dots,\tau^{k-1})$
        \STATE Roll out $\pi^*$ in $M_z$ (or use offline expert data):
        $\tau^{*,(i)} = (o^*_0,a^*_0,r^*_0,\dots,o^*_{H-1},a^*_{H-1},r^*_{H-1})$
        \STATE Add labels $ \mathcal{D}_{k} \leftarrow \mathcal{D}_{k}\cup\{(h_k^{(i)}, o^*_{0:t}, r^*_{0:t-1}, a^*_t)\}_{t=0}^{H-1}$
    \ENDFOR

    \STATE Train $\pi_{\theta_{k}}$ on $\mathcal{D}_{k}$ using Eq.~\ref{eq:loss} \texttt{\textcolor{NavyBlue}{// Imitate experts conditioned on context}}
\ENDFOR

\STATE \textbf{return} $\pi_{\theta_K}$

\end{algorithmic}
\end{algorithm}

\subsection{How does \MethodS{} learn in-context exploration?}\label{sec:asteroid_practice}

Figure~\ref{fig:method} and Algorithm~\ref{alg:method} summarize \MethodS{}. Let $\pi_{\theta_k}$ denote the student policy and $\mathcal{D}_k$ the dataset at iteration $k$. We first instantiate the method in a BAMDP, where a latent $z \sim p(z)$ stays fixed while the policy interacts for $k$ episodes, and extend it to general POMDPs in Section~\ref{sec:pomdp}. Each iteration has two phases. In the \emph{data generation} phase, we roll out the student for $k$ episodes to produce the on-policy context $h = (\tau^0, \dots, \tau^{k-1})$, then roll out the privileged expert $\pi^*(\cdot \mid o, z)$ under the same $z$ to produce an expert trajectory $\tau^*$. The expert observations $o^*_{0:t}$ and rewards $r^*_{0:t-1}$ from $\tau^*$ serve as the query, and the expert action $a^*_t$ as the label. Unlike DPT, which samples query observations randomly, our queries come from expert rollouts. In the \emph{supervised learning} phase, we train $\pi_{\theta_k}$ to predict $a^*_t$ given $(h, o^*_{0:t}, r^*_{0:t-1})$ with the DPT loss (Eq.~\ref{eq:loss}). In practice, we use a causal transformer~\citep{lee2023supervised} and compute the loss only on expert actions. We repeat this process of collecting on-policy contexts and expert labels, followed by supervised learning iteratively. After iteration $k$, $\pi_{\theta_k}$ adapts using up to $k$ test-time episodes. Revisiting the three-door problem from Section~\ref{sec:problem},

\setlength{\intextsep}{2pt}
\setlength{\columnsep}{10pt}
\begin{wrapfigure}{r}{0.55\linewidth}
    \centering
    \includegraphics[width=\linewidth]{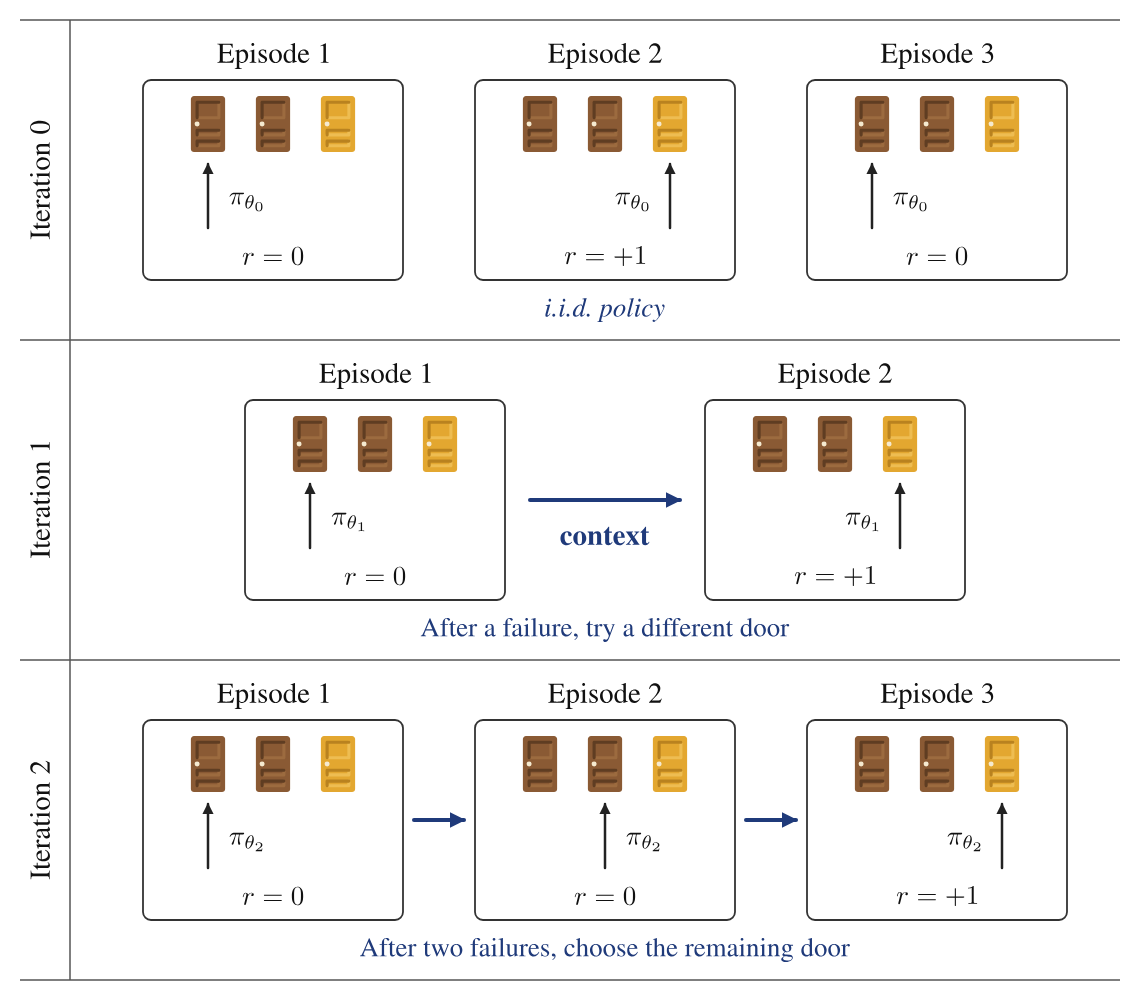}
    \caption{We iteratively learn to adapt exploration using feedback over longer contexts.}
    \label{fig:asteroid-three-door}
    \vspace{-0.25cm}
\end{wrapfigure}


\noindent\textbf{Iteration 0.}
$\pi_{\theta_0}$ learns an \emph{i.i.d.} policy with no in-context learning. It randomly chooses a door every episode, ignoring previous episodes.

\noindent\textbf{Iteration 1.}
$\pi_{\theta_0}$ generates an on-policy trajectory, which visits a random door. The reward reveals whether that door is correct: $+1$ if correct, $0$ otherwise. The expert label then returns to the same door on success or points to a different door after failure. Thus, $\pi_{\theta_1}$ learns to use the first episode as context for choosing the second. However, $\pi_{\theta_1}$ has a context window of one episode, so it cannot reason over more than one attempt, motivating more iterations to expand the window.

\noindent\textbf{Iteration 2.}
$\pi_{\theta_1}$ now generates two context trajectories: it repeats a successful door, or tests a new door after failure. If both tested doors fail, the expert label points to the remaining door. Thus, $\pi_{\theta_2}$ learns in-context exploration that identifies the true goal in this partially observed task.

To further illustrate \MethodS, Appendix~\ref{app:toy} provides another dry run showing that the student behavior approaches posterior sampling. We discuss certain practical details of the method below.

\subsection{Practical Implementation}

\textbf{Initialization.}
We initialize $\mathcal{D}_0$ with privileged expert trajectories. The initial policy $\pi_{\theta_0}$ is therefore trained without contextual information and resembles naive student--teacher distillation; see Section~\ref{sec:tsd}.

\textbf{Deployment.}
At deployment, the policy starts from a null context, $h_0=\emptyset$ (see Algorithm~\ref{alg:deploy}). We do not use any training context; at each step, the policy uses online interaction to build context and infer hidden information. For evaluations longer than the training context length, we use a sliding window.

\begin{algorithm}[H]
\caption{\MethodS{} Deployment}
\label{alg:deploy}
\begin{algorithmic}[1]
\setlength{\itemsep}{0pt}
\setlength{\parskip}{0pt}
\REQUIRE trained student $\pi_{\theta_K}$, horizon $T$
\STATE Initialize context $h \leftarrow \emptyset$
\FOR{$t=0,\dots,T-1$}
    \STATE Observe $o_t$, sample $a_t \sim \pi_{\theta_K}(\cdot \mid h, o_t)$
    \STATE Execute $a_t$, receive $r_t$
    \STATE Update context $h \leftarrow (h, o_t, a_t, r_t)$ \texttt{\textcolor{NavyBlue}{// Build context from online interaction}}
\ENDFOR
\end{algorithmic}
\end{algorithm}

\textbf{Context-length curriculum.}
At each iteration $k$, the policy is rolled out for $k$ episodes, inducing a curriculum on rollout horizons. As the horizon expands, the student can eliminate task instances that are inconsistent with its history over a longer horizon, making exploration efficient. This also makes our method more sample-efficient, and we discuss this further in Appendix \ref{app:curriculum_ablation}.

\subsection{How does \MethodS{} work on arbitrary POMDPs?}\label{sec:pomdp}

We described \MethodS{} in a BAMDP, where a latent variable $z$ remains fixed across multiple episodes. In a general POMDP, the hidden state evolves within a single episode. For example, in a locally observed maze, the agent does not observe its global position, and this hidden position changes after every move. At test time, the policy interacts with the environment for a single episode over a fixed horizon $H$. As the underlying state changes at each step, the agent faces a branching inference problem: each action and observation changes which hidden states remain plausible. Thus, instead of coverage over repeated episodes for a fixed $z$, we need coverage over on-policy histories $h_t$ across timesteps $t\in[H]$. 

In practice (see Algorithm~\ref{alg:method2}), \MethodS{} samples a timestep $t$, rolls out the student to produce an on-policy history $h_t$, and rolls out the expert from the reached state, producing a partial trajectory with observations $o^*_{t:}$ and action labels $a^*_{t:}$ (similar to takeovers in~\cite{kelly2019hg}). The resulting tuples $(h_t, o^*_{t:}, a^*_{t:})$ train the policy to map its history to actions that
are optimal for the hidden state under that history. At the takeover step, it
approximates $\pi(a_t \mid o_t, h_t) = \int \pi^*(a_t \mid s_t)\, p(s_t \mid o_t, h_t)\, ds_t$, and over
the future steps, the expert trajectory teaches it to follow one hypothesis
rather than dithering.
For example, in the maze, after each move, the local observations and rewards prune some possible positions while keeping others plausible. Further, we induce a curriculum in the context horizon by iteratively biasing the sampled timestep $t$ toward longer horizons. Thus, we adapt the method to a POMDP: collect on-policy histories, then use privileged labels to teach it how to guide future behavior.

\section{Experiments}\label{sec:exp}

In our experiments, we aim to answer the following questions: 
\begin{enumerate}[topsep=1pt, itemsep=-0.5ex, partopsep=1ex, parsep=1ex, leftmargin=*]
    \item Does \MethodS{} learn to solve partially observed problems efficiently on deployment?
    \item Is \MethodS{} sample-efficient compared to baselines?
    \item What kind of exploratory behavior do policies trained with \MethodS{} display at test time?
    \item Does \MethodS{} learn exploration policies that can be deployed on real-robot hardware?
    \item Do frontier models perform in-context exploration?
    \item Is learning from on-policy contexts necessary to learn exploration efficiently?
\end{enumerate}

\subsection{Training and Evaluation Details}

We evaluate all methods across 11 environments and three seeds\footnote{We run Habitat on a single seed and the wrist-camera RL baseline on two seeds due to high wall-clock time.}. We include error bars denoting the standard error across the seeds. For model architecture and tokenization, we use the causal transformer implementation from~\cite{lee2023supervised}. We include the hyperparameters, training, and environment details in Appendix~\ref{app:experiments}.  We also include ablations over \MethodS{} in Appendix~\ref{app:ablation}, and qualitative videos on the website\footnote{Project
webpage: \href{https://weirdlabuw.github.io/asteroid/}{weirdlabuw.github.io/asteroid}}. 
We evaluate the policy's performance and exploration efficiency across multiple test-time episodes on held-out goals. We compare policy performance across training iterations, with a fixed test-time budget.

\begin{figure}
    \centering
    \includegraphics[width=\linewidth]{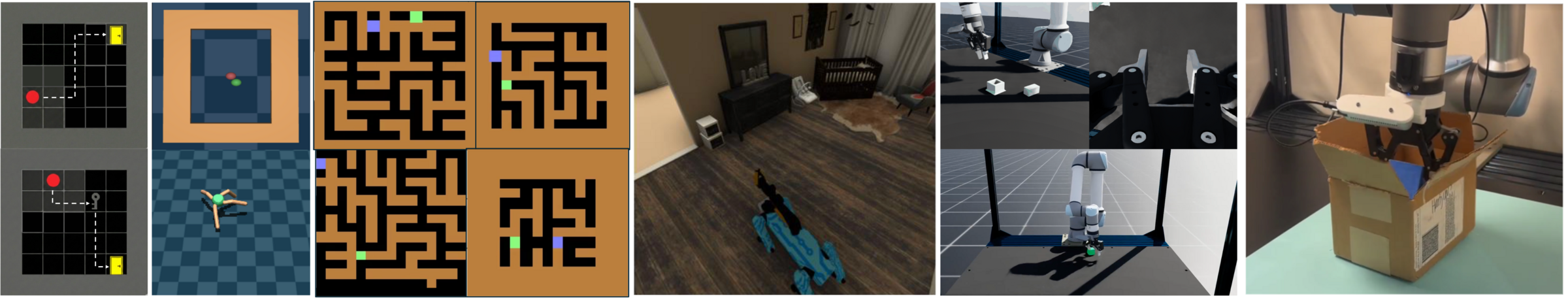}
    \caption{Tasks: (left to right) meta-learning gridworld benchmark, partially observed navigation, Procgen maze navigation, embodied object search, simulated robot and real robot manipulation.
    \label{fig:envs}}
\end{figure}

\subsection{Tasks}
We evaluate our approach on a diverse suite of partially observable decision-making problems across robotic manipulation, discrete exploration, and continuous navigation (Figure~\ref{fig:envs}), requiring strong elements of exploration, memory, and adaptation under limited sensing.

\begin{itemize}[topsep=1pt, itemsep=-0.5ex, partopsep=1ex, parsep=1ex, leftmargin=*]
    \item  \textbf{Partially observed gridworlds~\cite{MinigridMiniworld23}} are a classic benchmark consisting of discrete search problems, where an agent acts over multiple episodes in a gridworld to reach one or more goals. The agent is in an $n\times n$ grid and must perform targeted exploration to solve the task, such as finding the hidden goal or sequentially finding the hidden key and the door across multiple episodes.

    \item \textbf{Meta-Learning navigation~\cite{Rakelly2019EfficientOM}} is a meta-learning task where an agent must navigate in an environment to a hidden goal. We instantiate (i) Ant Navigation, where the agent is an 8-DoF robot, and (ii) 2D Navigation, which reduces the dynamics to a simple point mass.

    \item \textbf{Procgen Maze Navigation \citep{cobbe2019procgen}} involves finding the goal in procedurally generated mazes, where the agent receives only a local $3\times3$ observation and has to infer the maze layout and goal location using the interaction history.
    
    \item \textbf{Embodied Object Search~\cite{puig2024habitat}} involves searching for a household object in a fixed environment in Habitat-Sim. This presents a visually complex domain for a realistic partially observed problem.
    
    \item \textbf{Simulated Robotic Manipulation~\cite{yin2026emergent}} tasks are high-dimensional, continuous-control manipulation tasks under partial observability: Wrist-Camera Peg-Insertion and Tactile-Pick. Here, the robot has to use limited sensing to manipulate objects, i.e., explore over the table using only the wrist view, or use touch sensing to locate the object and complete the given task.

    \item \textbf{Real Robot Manipulation~\cite{zhang2026balanced,bronars2026blindpreciseassemblyproprioceptive}}: We reconstruct one of the simulated tasks in the real world, where the robot has to pick up a cube using proprioception from the arm and gripper only. We train policies in simulation with domain randomization~\cite{tobin2017domain, yin2026emergent} and then transfer them to the real world.

\end{itemize}

\subsection{Baselines}
We compare against ten strong baselines from several frameworks under partial observability:

\begin{itemize}[topsep=1pt, itemsep=-0.5ex, partopsep=1ex, parsep=1ex, leftmargin=*]
    \item \textbf{Asymmetric student--teacher:} \textbf{BC}: imitates privileged expert trajectories, \textbf{DAgger}~\cite{ross2011reduction}: relabels the student's per-step actions with the expert. \textbf{DPT}~\cite{lee2023supervised} trains a transformer on an in-context dataset of randomly collected interactions to predict oracle actions;
    \textbf{SPOC}~\cite{spoc2023ehsani} imitates shortest-path trajectories with data augmentation, yielding exploration-like behavior.
    \textbf{AAWR}~\cite{hu2025realworld} extends Advantage Weighted Regression~\cite{AWRPeng19} to POMDPs, using privileged state during training to learn a stronger critic.
    \item \textbf{Meta-RL:}
    \textbf{RL$^2$}~\citep{duan2016rl} trains a recurrent policy to adapt to the hidden parameters purely through interaction, treating the policy's hidden state as an implicit learning algorithm that updates from histories across episodes.
    \textbf{Distilled Planner}~\cite{puig2024habitat,osband2013psrl}: For embodied navigation in Habitat, we train a policy that performs posterior-sampling RL (PSRL)~\cite{osband2013psrl} (i.e., upper bound performance).
    \item \textbf{Imitation-informed RL:}
    \textbf{BC+PPO}~\cite{schulman2017ppo, duan2016rl} accelerates PPO~\cite{schulman2017ppo} by adding a behavior-cloning loss over privileged actions; for meta-learning tasks, we use an adapted variant (\textbf{BC+RL$^2$}) that leverages privileged supervision to bootstrap learning.
    \textbf{ADVISOR}~\cite{advisor} dynamically combines imitation learning (IL) and online RL objectives during training, resolving the imitation gap adaptively. \textbf{RMA}~\cite{kumar2021rma} learns to predict the privileged latent from history, and a frozen expert policy acts on that predicted latent.
\end{itemize}

\subsection{Does \MethodS{} learn to solve partially observed problems efficiently on deployment?}\label{sec:better_explore}

\begin{figure*}
    \centering
    \includegraphics[width=\linewidth]{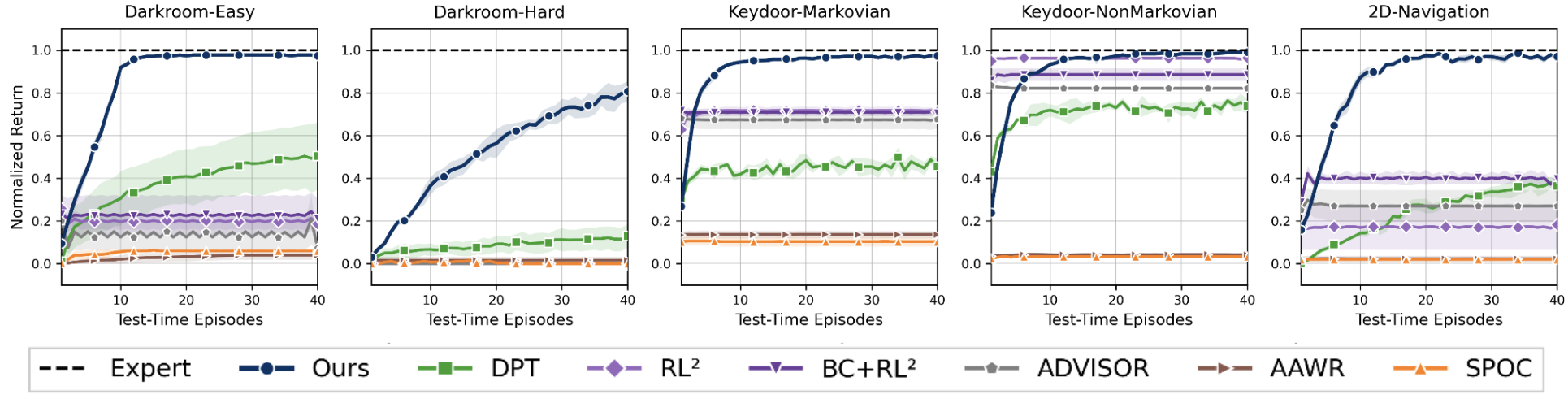}
    \caption{Performance on discrete exploration tasks requiring memory and goal inference. We observe that \MethodS{} outperforms all baselines in terms of test-time efficiency. It performs efficient exploration and almost converges to the expert after inferring the hidden task variables.\label{fig:incontext_pomdp_tasks}}
\end{figure*}

In Figure~\ref{fig:incontext_pomdp_tasks}, we compare the test-time performance of in-context policies learned via \MethodS{} against baselines. Given the same test-time budget, our method achieves substantially higher final performance. Unlike the baselines, \MethodS{}'s on-policy data generation and expert supervision provide coverage over the context the policy sees and the subsequent actions it should take at deployment, i.e., it can effectively adapt its exploration using these contexts to infer task-relevant information. Figures~\ref{fig:pomdp_manipulation},~\ref{fig:ant_behavior}, and~\ref{fig:extra_envs} show the same trend in partially observed navigation and manipulation. In Procgen Maze (Figure~\ref{fig:procgen}), \MethodS{} solves procedurally generated mazes by using local observation history to maintain an implicit belief over the state, guiding exploration toward unexplored regions~\citep{karkus2017qmdp}. This avoids redundant search and enables \MethodS{} to achieve higher success while being $2\times$ faster.
In contrast, BC achieves non-trivial success through random exploration but does not use context to avoid redundant search, making it highly inefficient. DPT also underperforms because offline datasets provide incomplete coverage over test-time histories. For RMA, latent uncertainty generates out-of-distribution (OOD) inputs for the base policy, which is trained only under full observability. As a result, it never learns to explore. \MethodS{} instead directly supervises the action distribution conditioned on the student's own context to perform directed exploration. In \MethodS{}, exploration is the learned behavior, not a byproduct. Finally, as we discuss in Appendix~\ref{app:dagger}, we see that DAgger under partial observability leads to performance degradation (below BC), while history conditioning and multi-step supervision in \MethodS{}  learn an effective search policy. We also show robustness to noisy experts in Appendix~\ref{app:noisy_ablation}.  Videos of the learned behavior are on the website\footnote{Project
webpage: \href{https://weirdlabuw.github.io/asteroid/}{weirdlabuw.github.io/asteroid}}.

\subsection{Is \MethodS{} sample-efficient?}\label{sec:efficiency}

\begin{figure*}
\centering
\begin{minipage}{0.42\textwidth}
    \centering
    \includegraphics[width=\linewidth]{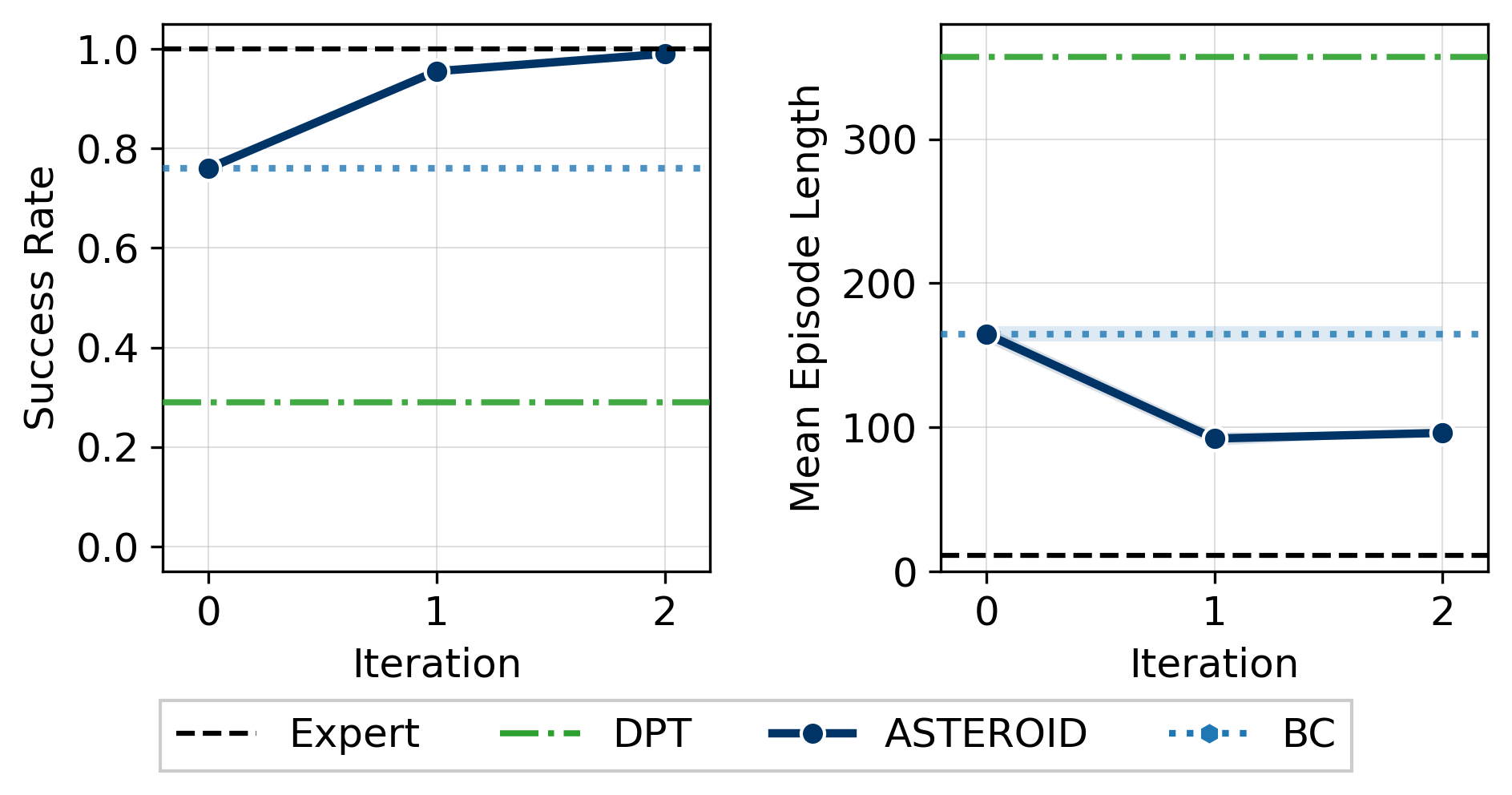}
    \caption{\MethodS{} improves success while reducing episode length on Procgen Maze; BC succeeds non-trivially via random exploration but is much less efficient.\label{fig:procgen}}
\end{minipage}\hfill
\begin{minipage}{0.53\textwidth}
    \centering
    \includegraphics[width=\linewidth]{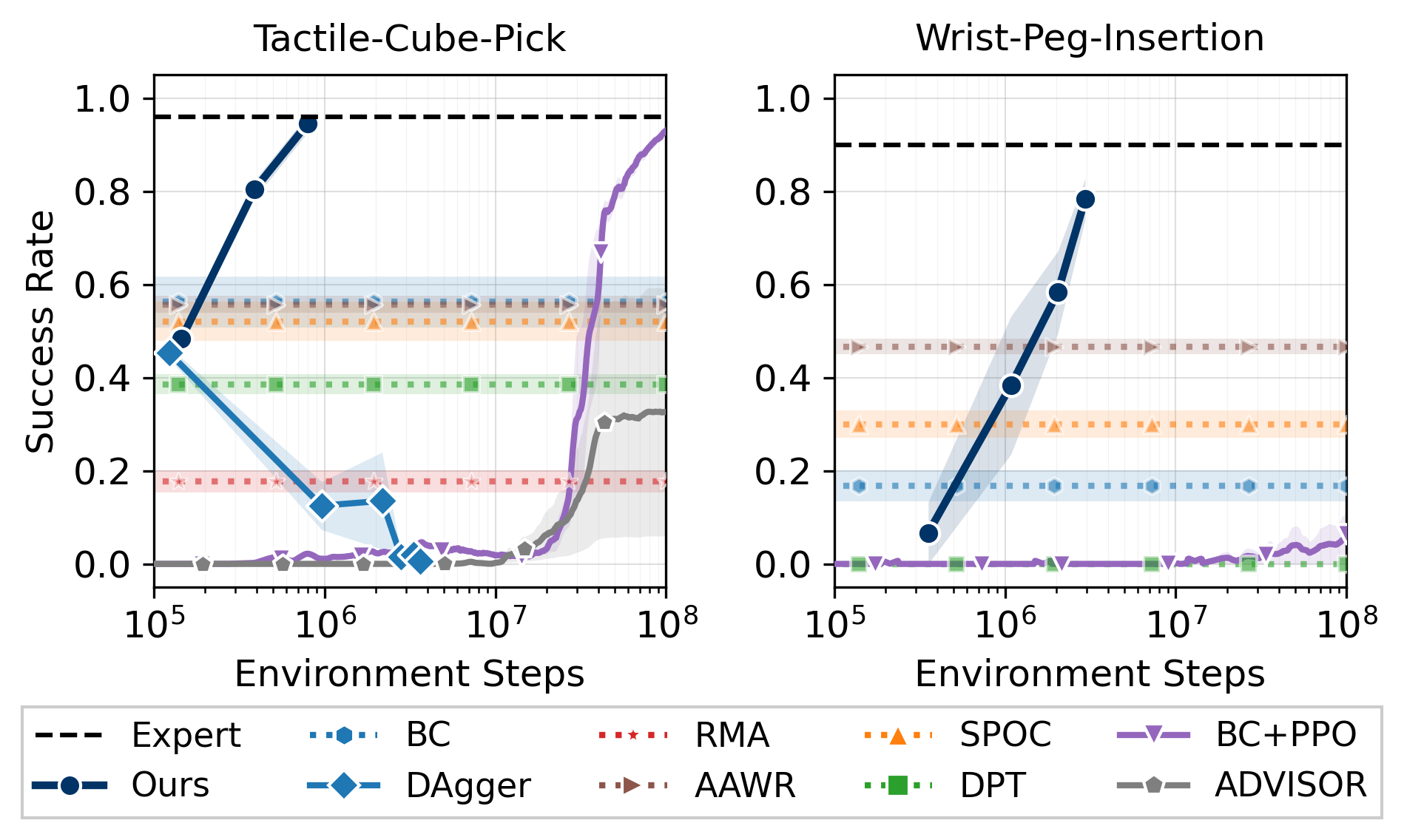}
    \captionof{figure}{We observe extreme sample efficiency ($\sim100\times$ speedup) when using \MethodS{} to train policies for robot manipulation under limited sensing.\label{fig:pomdp_manipulation}}
\end{minipage}
\end{figure*}

In this section, we compare the training-time sample efficiency of \MethodS{} against baselines that combine RL with privileged supervision to learn policies under partial observability. Using GPU-parallelized simulation~\cite{mittal2023orbit}, we evaluate manipulation tasks in which the policy has access to limited sensory modalities. In Figure~\ref{fig:pomdp_manipulation}, \MethodS{} is $\sim100\times$ more efficient in environment steps than RL baselines, learning a policy that explores and successfully picks the cube or inserts the peg. In Figure~\ref{fig:no_proprioception}, the student observes only the history of past actions and tactile sensing (no proprioception), and \MethodS{} efficiently learns policies with $\sim92\%$ success rate under this partial observability.
The supervised updates in \MethodS{} (compared to expensive RL) make it highly efficient during training, even for high-dimensional tasks. While RL baselines can learn in low-dimensional settings (Appendix~\ref{app:rl_ablation}), they are significantly sample-inefficient in domains such as ant navigation and robotics (Figures~\ref{fig:pomdp_manipulation} and~\ref{fig:ant_behavior}).

\subsection{What kind of in-context exploratory behavior do policies learned via \MethodS{} display?}

\begin{figure*}
    \centering
    \includegraphics[width=0.98\linewidth]{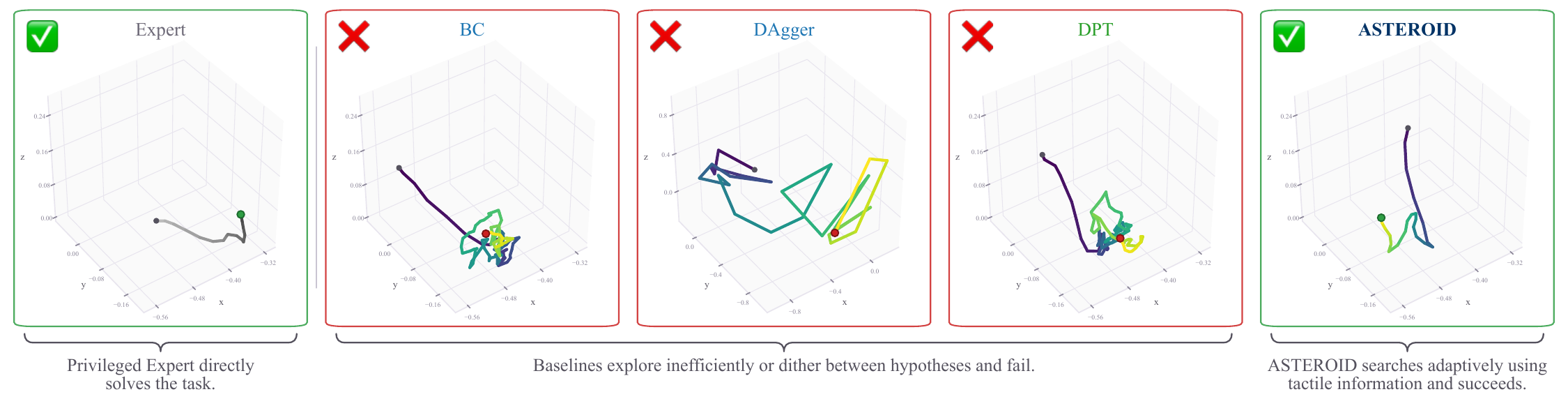}
    \caption{End-effector trajectories on the tactile cube-pick task, where the policy must locate and lift a cube using proprioception and touch sensing. The privileged state expert (gray) observes the cube pose and grasps it directly. Under partial observability, the baselines fail to adapt their actions, whereas \MethodS{} performs in-context exploration, exploring the table until contact, then grasping and lifting the cube. \label{fig:robot_exp_3ed}}
    \vspace{-0.3cm}
\end{figure*}

In Tactile-Pick (Figure~\ref{fig:robot_exp_3ed}), the policy probes multiple grasp points and uses tactile feedback from failed attempts to inform its next actions, unlike BC, which treats trials independently and relies on i.i.d.\ exploration. Additionally, other baselines fail to learn directed exploration either because of incompatible supervision or inefficient context generation during training. Similarly, in Figure~\ref{fig:explore_wrist}, the robot policy scans the tabletop using its wrist camera to localize both the object and target. In contrast, naive distillation goes out of distribution and degenerates.

\begin{figure}
    \centering
    \begin{minipage}[t]{0.48\linewidth}
        \centering
        \vbox to 4cm{%
            \includegraphics[width=\linewidth, height=4cm,
                keepaspectratio]{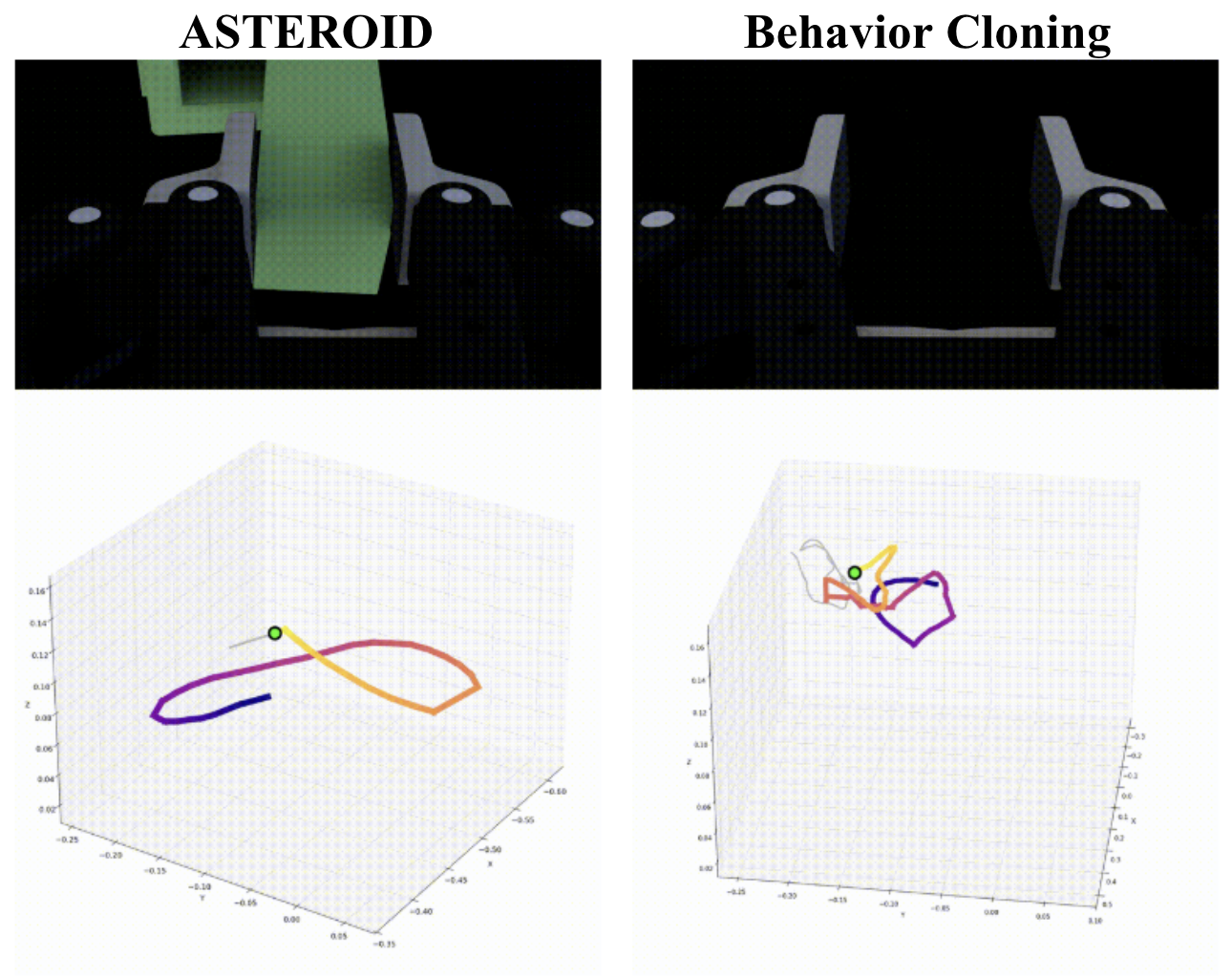}%
            \vfil
        }
        \caption{Robot positions (bottom row) during exploration for Peg-Insertion from wrist images (top row). \MethodS{} learns policies that sweep the table to locate the peg and successfully place it in the hole, whereas direct imitation policies collapse under limited observability.}
        \label{fig:explore_wrist}
    \end{minipage}\hfill%
    \begin{minipage}[t]{0.48\linewidth}
        \centering
        \vbox to 4cm{%
            \includegraphics[width=\linewidth, height=4cm,
                keepaspectratio]{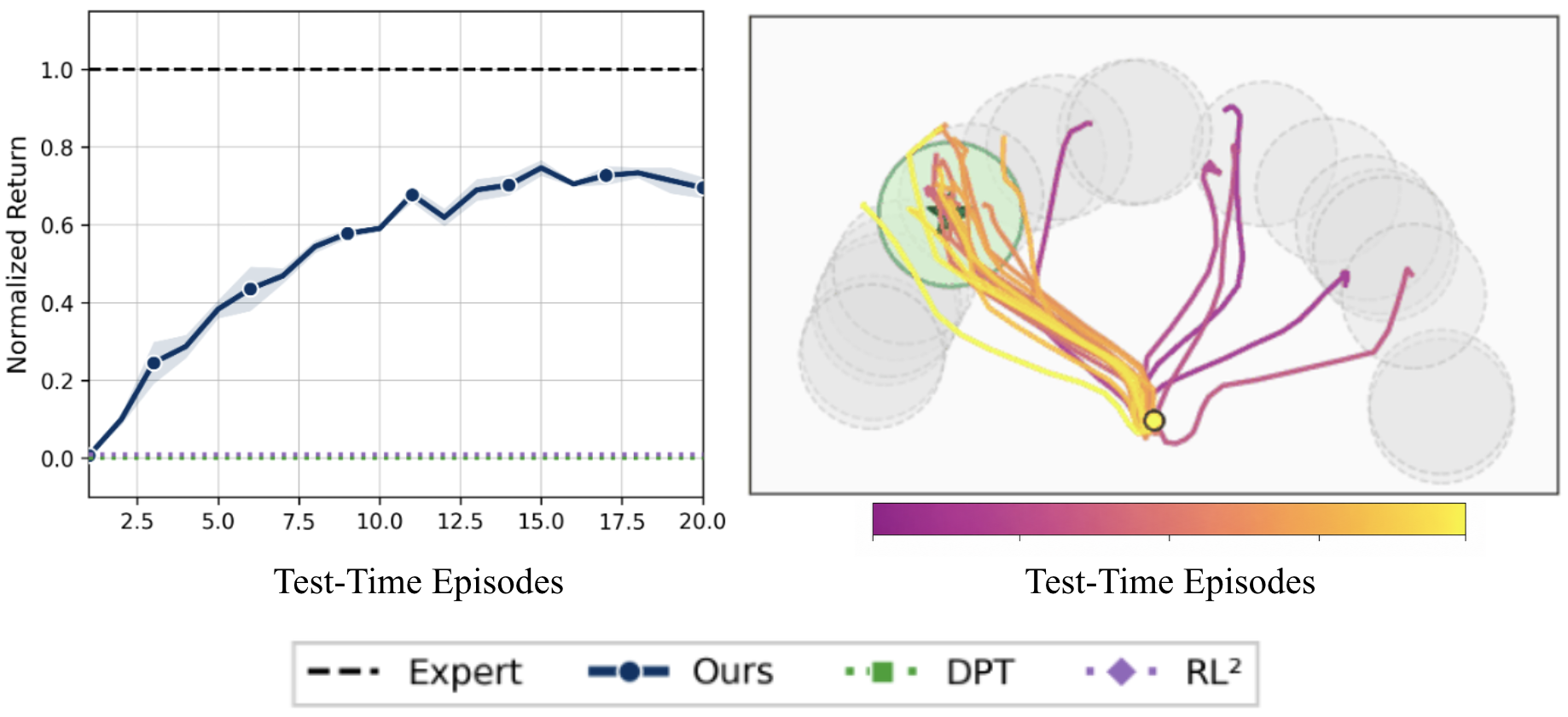}%
            \vfil
        }
        \caption{Ant Navigation. (Right)~\MethodS{} performs posterior sampling over goals while controlling the 8-DoF robot and achieves high success (Left), while RL$^2$ and DPT (success rate $\sim0$) fail to learn control and exploration for higher-dimensional action spaces. \label{fig:ant_behavior}}
    \end{minipage}
\end{figure}

We also see goal-directed, memory-based exploration (Appendix~\ref{sec:mem}) in Habitat and navigation tasks: the policy explores the environment, remembers failed locations, and avoids redundant search. See Appendix~\ref{app:qualitative} and the website for videos. In Figure~\ref{fig:ant_behavior}, we observe in the Ant Navigation task that the learned exploration resembles strategies found in prior meta-learning methods~\citep{zintgraf2021varibad}. The ant efficiently explores different goals and eliminates regions where it does not receive a high reward (see trajectories in Figure~\ref{fig:ant_behavior}). This enables the agent to adaptively find the right goal and exploit it. Although RL can learn exploration, its sample inefficiency makes it impractical. DPT performs poorly in continuous control, as it is nearly impossible to construct offline datasets via random exploration that cover test-time histories.

\subsection{Does \MethodS{} learn exploration policies that can be deployed on real robot hardware?}

In Figure~\ref{fig:real_combined}, we evaluate \MethodS{} on a real robot manipulation task that requires exploration under partial observability. We first train clairvoyant experts in Isaac Lab using large-scale reinforcement learning with domain randomization~\citep{yin2026emergent,zhang2026balanced}. These experts
use privileged state information to pick up a cube. At deployment, the real robot observes only proprioception (joint angles and gripper positions, no touch). The cameras are off, so it cannot see the cube, and vision-based transfer is not applicable. We train student policies to explore the world and infer this hidden information by trying different grasp locations and using gripper jaw position.

\begin{wrapfigure}{r}{0.55\columnwidth}
    \centering
    \includegraphics[width=\linewidth]{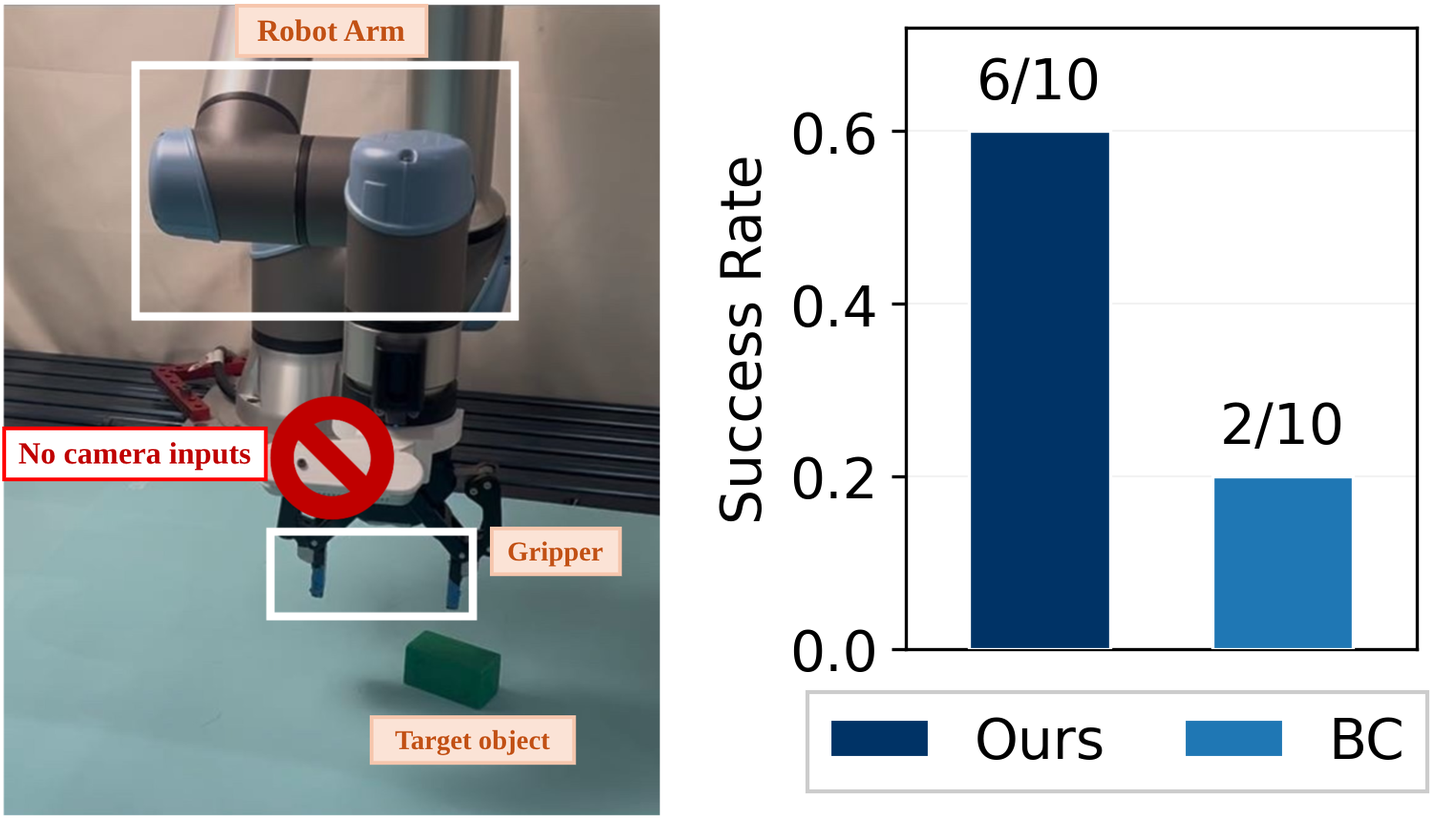}
    \vspace{-5pt}
    \caption{(Left) Real robot setup, where it must rely solely on proprioception (arm joint and gripper positions) to infer the object's location and pick it up. (Right) Comparing the performance of sim-to-real distillation via naive BC and our method on the real-robot task. \MethodS{} learns an effective exploration policy that achieves a higher success rate across 10 trials with randomized cube positions.}
    \label{fig:real_combined}
\end{wrapfigure}

The standard transfer baseline is to distill the simulation expert into the student policy via behavior cloning. This student policy imitates expert trajectories, but does not learn how to recover from failed attempts. In contrast, \MethodS{} trains the policy to perform in-context exploration, so the robot learns to adapt its next grasp attempt based on the history of previous attempts.
As shown in Figure~\ref{fig:real_combined} (right), \MethodS{} achieves a higher success rate than behavior cloning. The qualitative rollouts in Figure~\ref{fig:real_demo} show this behavior: after a failed attempt, the baseline goes out of distribution, while \MethodS{} continues searching over potential cube locations using its in-context abilities, completes the grasp, and successfully picks up the cube without cameras. We further demonstrate qualitatively that as the policy does not use vision, it can successfully solve the task in situations where the partial observability is natural, e.g., rummaging inside a box or under visual distractors (see robot videos on our website\footnote{Project
webpage: \href{https://weirdlabuw.github.io/asteroid/}{weirdlabuw.github.io/asteroid}}).

\subsection{Do frontier models perform in-context exploration?}

\begin{wrapfigure}{l}{0.55\linewidth}
    \centering
    \includegraphics[width=\linewidth]{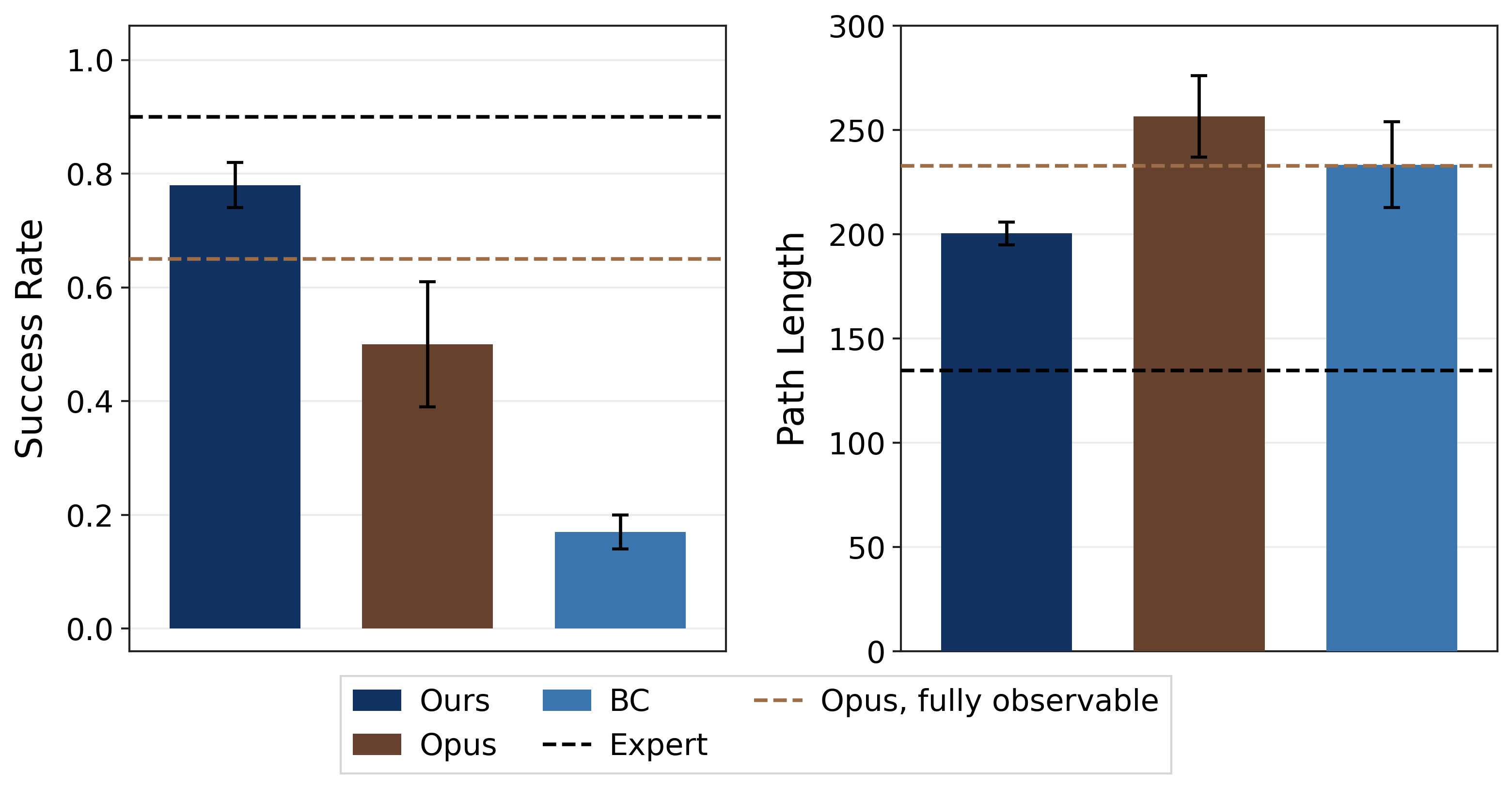}
    \caption{ In the Peg-Insertion task, \MethodS{} outperforms a frontier model (Claude Opus 5.5) at exploring the table to pick up the peg and insert it, while also being faster.}
        \label{fig:opus}
\end{wrapfigure}

Given recent progress in highly capable foundation models~\cite{opus55}, a natural question is: \emph{if we use frontier models for exploration in partially observed environments, how well do they perform?} To investigate this, we conduct an experiment on the Peg-Insertion task where the robot has to use its wrist camera to explore the table, locate and pick up the peg, and insert it into the peg hole.

Following work on using foundation models to write code for controlling robots, \textit{e.g.}, ``code-as-policies''~\cite{codeaspolicies2022,chen2026wordslongwaylanguage}, we first let a Claude Opus 5.5 coding agent (inside Claude Code) build a harness around the environment. Like \MethodS{}, it has access to the full Isaac Lab environment and the privileged expert policy, which it uses to collect 3 demonstrations, train a perception model, and tune the prompt. We then use Claude Opus 5.5 as the policy: during each episode, we show it the camera images and robot state, and it writes a small program that controls the robot until the current sub-task (\textit{e.g.}, grasping the peg) is done, then looks again and writes the next subtask policy. Notably, directly using the model to output actions for this led to almost zero successes, as the model failed to precisely insert the peg.  
Under full observability, the model achieves a $\sim65\%$ success rate, below the expert policy ($\sim90\%$), while also being $\sim2\times$ less efficient (see Figure~\ref{fig:opus}). To adapt the policy to partial observability, we prompt the agent to tune the code policy to a wrist-camera-only setting (preserving the base controller it has generated). In the Wrist-Camera Peg-Insertion task, we see that our learned policy outperforms the frontier model ($\sim78\%$ vs.\ $\sim50\%$ success rate). This implies that while frontier models are capable of completing tasks under full observability, the ability to explore via context is still missing from the repertoire of skills they have.

\subsection{Is learning from on-policy contexts necessary?}

\begin{wrapfigure}{r}{0.45\linewidth}
    \centering
    \vspace{-0.8cm}
    \includegraphics[width=0.75\linewidth]{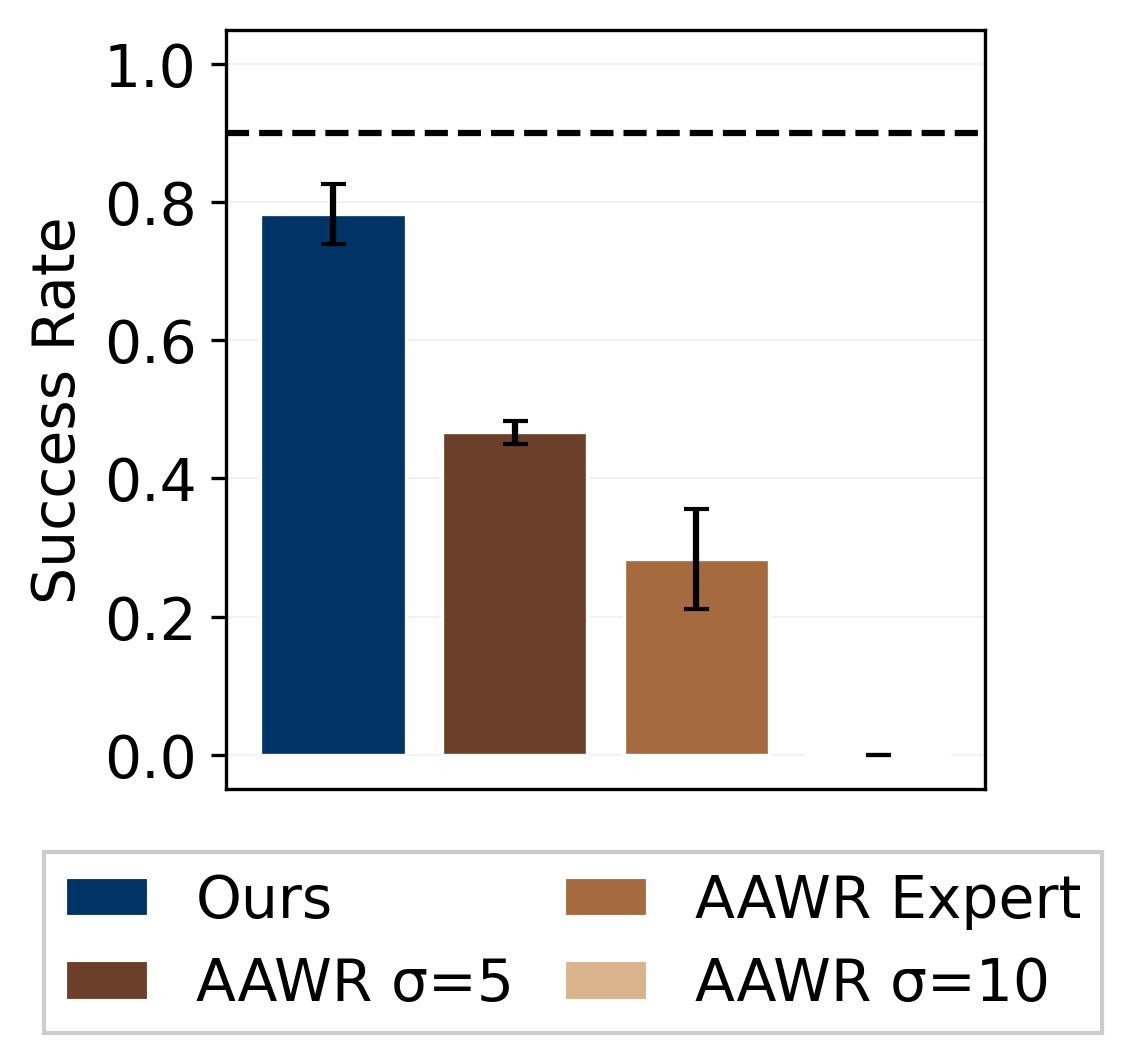}
    \caption{\MethodS{} generates data online, while AAWR is sensitive to the quality of offline datasets.}
    \label{fig:aawr_ablation}
\end{wrapfigure}

Across multiple domains, DPT performs poorly because offline random contexts fail to capture the deployment histories. In Figure~\ref{fig:aawr_ablation}, we compare \MethodS{} against AAWR~\cite{hu2025realworld} trained with offline datasets of varying diversity in the peg-insertion task, to highlight the importance of on-policy context collection.  AAWR~\citep{hu2025realworld}, which learns from an offline dataset, is highly sensitive to the dataset quality and diversity. \MethodS{} avoids this offline dependence by generating on-policy contexts from the student. This focuses supervision on deployment-relevant contexts, and is robust to imperfect expert labels (see Appendix~\ref{app:noisy_ablation}).

\begin{figure*}
    \centering
    \includegraphics[width=\linewidth]{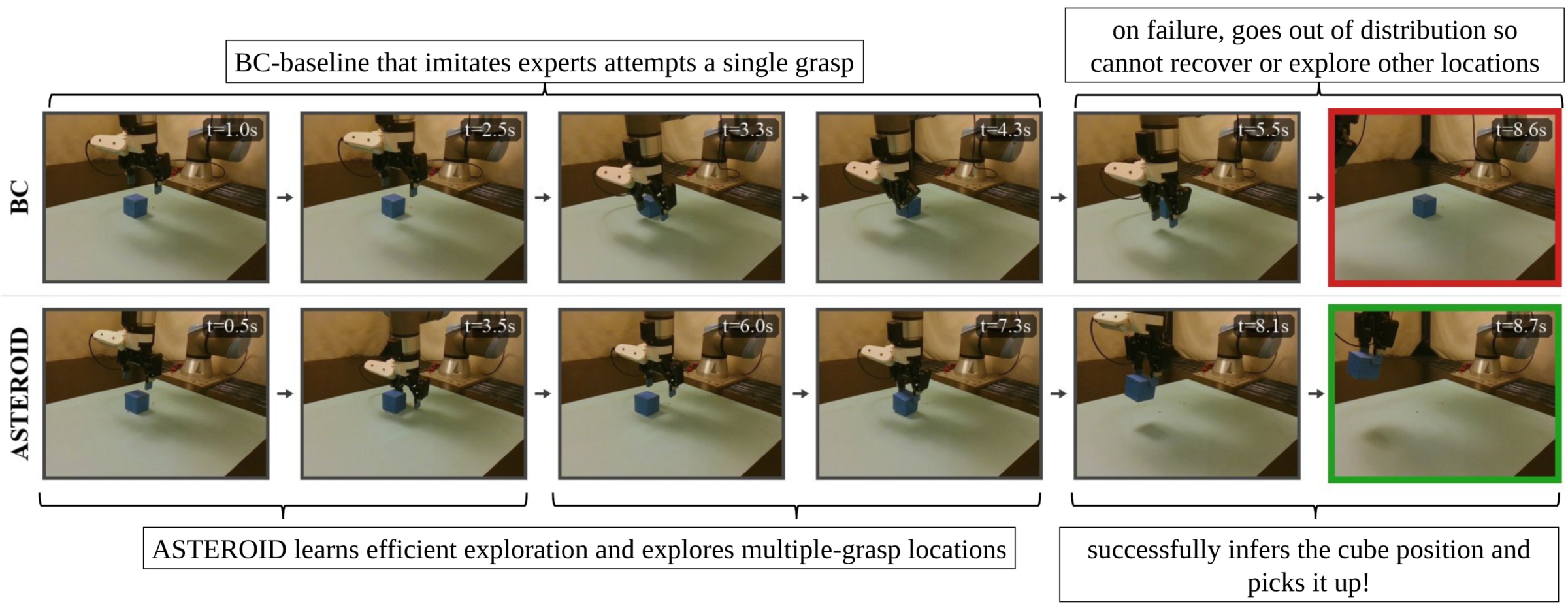}
    \caption{\footnotesize We evaluate sim-to-real transfer of policies learned via \MethodS{} and naive behavior cloning on a real-robot task, where the robot cannot use the camera to localize the object and pick it up. Instead, it has to use the proprioception of the arm and the gripper and explore different locations by attempting grasps at different positions. We compare the behavior learned with BC vs \MethodS{}. The baseline (first row) is trained to imitate expert data only and goes out of distribution. Meanwhile, \MethodS{} (second row) learns a policy that uses its in-context exploration to explore multiple grasp locations and pick up the object.}
    \label{fig:real_demo}
    \vspace{-0.5cm}
\end{figure*}

\section{Discussion}

In this work, we propose \MethodS, an iterative \emph{asymmetric} distillation framework for learning in-context exploration. The key idea is to train a history-conditioned policy using supervised targets from a privileged expert, while collecting the histories \emph{on-policy} from the student. This directly addresses two bottlenecks for exploration: (i) privileged teachers do not demonstrate information-gathering behavior, and (ii) offline or randomly collected histories may not match the distribution the student will encounter at deployment. While the need for history-conditioned exploration policies is well established, we show that imitation of privileged teachers with on-policy contexts, without RL or latent regression, is sufficient to learn exploration scalably across diverse simulated and real-world tasks. We believe that as robots and AI agents are deployed in the wild, acting as web assistants or robot butlers, in-context exploration learned via simple and scalable online training is key to enabling truly autonomous behavior.

 A limitation is that the approach relies on a strong privileged expert, though it is agnostic to how the expert is generated. An interesting direction is to remove this reliance and train experts from offline demonstrations. We note that posterior sampling exploration is not Bayes-optimal, i.e., it trades off some exploration efficiency for tractability and strong regret guarantees. This motivates future extensions that combine our approach with optimistic exploration. Another limitation is that this exploration relies on inferring hidden states from the overall expert support. When the hidden information can only be gathered outside this support, the student cannot reduce its uncertainty. A promising future direction is to learn a residual policy on top of this student to learn off-support information-gathering behavior. Additionally, the student policy is trained to imitate expert actions, but may produce unsafe exploration. Therefore, an important direction is to enforce safety via explicit constraints and safety filters. 

\section*{Acknowledgments}

We thank the members of the WEIRD Lab for their helpful feedback and
insightful discussions. We thank the authors of OmniReset for guiding us in setting up the simulation experts and sim-to-real transfer of robot policies. The authors acknowledge the UW Hyak Team for providing HPC resources that have contributed to the research results reported in this paper. This work was partially funded by NSF Award 2535892.

\bibliography{references}

\begin{thebibliography}{79}
\providecommand{\natexlab}[1]{#1}
\providecommand{\url}[1]{\texttt{#1}}
\expandafter\ifx\csname urlstyle\endcsname\relax
  \providecommand{\doi}[1]{doi: #1}\else
  \providecommand{\doi}{doi: \begingroup \urlstyle{rm}\Url}\fi

\bibitem[Agarwal et~al.(2020)Agarwal, Kakade, Krishnamurthy, and
  Sun]{agarwal2020flambestructuralcomplexityrepresentation}
A.~Agarwal, S.~Kakade, A.~Krishnamurthy, and W.~Sun.
\newblock Flambe: Structural complexity and representation learning of low rank
  mdps, 2020.
\newblock URL \url{https://arxiv.org/abs/2006.10814}.

\bibitem[{Anthropic}(2026)]{opus55}
{Anthropic}.
\newblock Claude opus 5.5.
\newblock Technical report, Anthropic, September 2026.

\bibitem[Asmuth et~al.(2009)Asmuth, Li, Littman, Nouri, and
  Wingate]{asmuth2012boss}
J.~Asmuth, L.~Li, M.~L. Littman, A.~Nouri, and D.~Wingate.
\newblock A bayesian sampling approach to exploration in reinforcement
  learning.
\newblock UAI '09, page 19–26, Arlington, Virginia, USA, 2009. AUAI Press.
\newblock ISBN 9780974903958.

\bibitem[Bansal et~al.(2018)Bansal, Krizhevsky, and
  Ogale]{bansal2018chauffeurnet}
M.~Bansal, A.~Krizhevsky, and A.~Ogale.
\newblock Chauffeurnet: Learning to drive by imitating the best and
  synthesizing the worst.
\newblock \emph{arXiv preprint arXiv:1812.03079}, 2018.

\bibitem[Bauer et~al.(2023)Bauer, Baumli, Behbahani, Bhoopchand,
  Bradley-Schmieg, Chang, Clay, Collister, Dasagi, Gonzalez, Gregor, Hughes,
  Kashem, Loks-Thompson, Openshaw, Parker-Holder, Pathak, Perez-Nieves,
  Rakicevic, Rockt\"{a}schel, Schroecker, Singh, Sygnowski, Tuyls, York,
  Zacherl, and Zhang]{bauer2023human}
J.~Bauer, K.~Baumli, F.~Behbahani, A.~Bhoopchand, N.~Bradley-Schmieg, M.~Chang,
  N.~Clay, A.~Collister, V.~Dasagi, L.~Gonzalez, K.~Gregor, E.~Hughes,
  S.~Kashem, M.~Loks-Thompson, H.~Openshaw, J.~Parker-Holder, S.~Pathak,
  N.~Perez-Nieves, N.~Rakicevic, T.~Rockt\"{a}schel, Y.~Schroecker, S.~Singh,
  J.~Sygnowski, K.~Tuyls, S.~York, A.~Zacherl, and L.~Zhang.
\newblock Human-timescale adaptation in an open-ended task space.
\newblock In \emph{Proceedings of the 40th International Conference on Machine
  Learning}, ICML'23. JMLR.org, 2023.

\bibitem[Bellman(1966)]{Bellman1966}
R.~Bellman.
\newblock Dynamic programming.
\newblock \emph{Science}, 153\penalty0 (3731):\penalty0 34--37, 1966.
\newblock \doi{10.1126/science.153.3731.34}.

\bibitem[Bhardwaj et~al.(2017)Bhardwaj, Choudhury, and
  Scherer]{bhardwaj2017learning}
M.~Bhardwaj, S.~Choudhury, and S.~Scherer.
\newblock Learning heuristic search via imitation.
\newblock In \emph{Conference on Robot Learning}, pages 271--280. PMLR, 2017.

\bibitem[Bronars et~al.(2026)Bronars, Park, Kedia, and
  Agrawal]{bronars2026blindpreciseassemblyproprioceptive}
A.~Bronars, Y.~Park, K.~Kedia, and P.~Agrawal.
\newblock Blind precise assembly: Proprioceptive sim-to-real peg-in-hole for
  diverse parts, 2026.
\newblock URL \url{https://arxiv.org/abs/2610.23849}.

\bibitem[Brown et~al.(2020)Brown, Mann, Ryder, Subbiah, Kaplan, Dhariwal,
  Neelakantan, Shyam, Sastry, Askell, et~al.]{brown2020language}
T.~Brown, B.~Mann, N.~Ryder, M.~Subbiah, J.~D. Kaplan, P.~Dhariwal,
  A.~Neelakantan, P.~Shyam, G.~Sastry, A.~Askell, et~al.
\newblock Language models are few-shot learners.
\newblock \emph{Advances in neural information processing systems},
  33:\penalty0 1877--1901, 2020.

\bibitem[Chapelle and Li(2011)]{chapelle2011thompsoneval}
O.~Chapelle and L.~Li.
\newblock An empirical evaluation of thompson sampling.
\newblock In J.~Shawe-Taylor, R.~Zemel, P.~Bartlett, F.~Pereira, and
  K.~Weinberger, editors, \emph{Advances in Neural Information Processing
  Systems}, volume~24. Curran Associates, Inc., 2011.
\newblock URL
  \url{https://proceedings.neurips.cc/paper_files/paper/2011/file/e53a0a2978c28872a4505bdb51db06dc-Paper.pdf}.

\bibitem[Chaplot et~al.(2020{\natexlab{a}})Chaplot, Gandhi, Gupta, and
  Salakhutdinov]{chaplot2020object}
D.~S. Chaplot, D.~Gandhi, A.~Gupta, and R.~Salakhutdinov.
\newblock Object goal navigation using goal-oriented semantic exploration.
\newblock In \emph{In Neural Information Processing Systems (NeurIPS)},
  2020{\natexlab{a}}.

\bibitem[Chaplot et~al.(2020{\natexlab{b}})Chaplot, Salakhutdinov, Gupta, and
  Gupta]{Chaplot2020NeuralTS}
D.~S. Chaplot, R.~Salakhutdinov, A.~K. Gupta, and S.~Gupta.
\newblock Neural topological slam for visual navigation.
\newblock \emph{2020 IEEE/CVF Conference on Computer Vision and Pattern
  Recognition (CVPR)}, pages 12872--12881, 2020{\natexlab{b}}.
\newblock URL \url{https://api.semanticscholar.org/CorpusID:214754592}.

\bibitem[Chen et~al.(2020)Chen, Zhou, Koltun, and
  Kr{\"a}henb{\"u}hl]{chen2020learning}
D.~Chen, B.~Zhou, V.~Koltun, and P.~Kr{\"a}henb{\"u}hl.
\newblock Learning by cheating.
\newblock In \emph{Conference on robot learning}, pages 66--75. PMLR, 2020.

\bibitem[Chen et~al.(2026)Chen, Jain, Goossen, Romig, Murray, Walker, and
  Cakmak]{chen2026wordslongwaylanguage}
D.~Chen, A.~R. Jain, E.~Goossen, E.~Romig, M.~Murray, N.~Walker, and M.~Cakmak.
\newblock A few words go a long way: Language guided robot policy synthesis,
  2026.
\newblock URL \url{https://arxiv.org/abs/2607.23784}.

\bibitem[Chen et~al.(2021)Chen, Lu, Rajeswaran, Lee, Grover, Laskin, Abbeel,
  Srinivas, and Mordatch]{chen2021decisiontransformer}
L.~Chen, K.~Lu, A.~Rajeswaran, K.~Lee, A.~Grover, M.~Laskin, P.~Abbeel,
  A.~Srinivas, and I.~Mordatch.
\newblock Decision transformer: Reinforcement learning via sequence modeling.
\newblock \emph{arXiv preprint arXiv:2106.01345}, 2021.

\bibitem[Chevalier{-}Boisvert et~al.(2023)Chevalier{-}Boisvert, Dai, Towers,
  Perez{-}Vicente, Willems, Lahlou, Pal, Castro, and
  Terry]{MinigridMiniworld23}
M.~Chevalier{-}Boisvert, B.~Dai, M.~Towers, R.~Perez{-}Vicente, L.~Willems,
  S.~Lahlou, S.~Pal, P.~S. Castro, and J.~Terry.
\newblock Minigrid {\&} miniworld: Modular {\&} customizable reinforcement
  learning environments for goal-oriented tasks.
\newblock In \emph{Advances in Neural Information Processing Systems 36, New
  Orleans, LA, USA}, December 2023.

\bibitem[Choudhury et~al.(2017)Choudhury, Kapoor, Ranade, Scherer, and
  Dey]{choudhury2017adaptive}
S.~Choudhury, A.~Kapoor, G.~Ranade, S.~Scherer, and D.~Dey.
\newblock Adaptive information gathering via imitation learning.
\newblock \emph{arXiv preprint arXiv:1705.07834}, 2017.

\bibitem[Cobbe et~al.(2019)Cobbe, Hesse, Hilton, and
  Schulman]{cobbe2019procgen}
K.~Cobbe, C.~Hesse, J.~Hilton, and J.~Schulman.
\newblock Leveraging procedural generation to benchmark reinforcement learning.
\newblock \emph{arXiv preprint arXiv:1912.01588}, 2019.

\bibitem[Daum{\'e}~Iii et~al.(2009)Daum{\'e}~Iii, Langford, and
  Marcu]{daume2009search}
H.~Daum{\'e}~Iii, J.~Langford, and D.~Marcu.
\newblock Search-based structured prediction.
\newblock \emph{Machine learning}, 75\penalty0 (3):\penalty0 297--325, 2009.

\bibitem[De~Haan et~al.(2019)De~Haan, Jayaraman, and Levine]{de2019causal}
P.~De~Haan, D.~Jayaraman, and S.~Levine.
\newblock Causal confusion in imitation learning.
\newblock \emph{Advances in neural information processing systems}, 32, 2019.

\bibitem[Duan et~al.(2016)Duan, Schulman, Chen, Bartlett, Sutskever, and
  Abbeel]{duan2016rl}
Y.~Duan, J.~Schulman, X.~Chen, P.~L. Bartlett, I.~Sutskever, and P.~Abbeel.
\newblock Rl$^2$: Fast reinforcement learning via slow reinforcement learning.
\newblock \emph{arXiv preprint arXiv:1611.02779}, 2016.

\bibitem[Ehsani et~al.(2023)Ehsani, Gupta, Hendrix, Salvador, Weihs, Zeng,
  Singh, Kim, Han, Herrasti, Krishna, Schwenk, VanderBilt, and
  Kembhavi]{spoc2023ehsani}
K.~Ehsani, T.~Gupta, R.~Hendrix, J.~Salvador, L.~Weihs, K.-H. Zeng, K.~P.
  Singh, Y.~Kim, W.~Han, A.~Herrasti, R.~Krishna, D.~Schwenk, E.~VanderBilt,
  and A.~Kembhavi.
\newblock Imitating shortest paths in simulation enables effective navigation
  and manipulation in the real world.
\newblock \emph{arXiv}, 2023.

\bibitem[Finn et~al.(2017)Finn, Abbeel, and Levine]{finn2017maml}
C.~Finn, P.~Abbeel, and S.~Levine.
\newblock Model-agnostic meta-learning for fast adaptation of deep networks.
\newblock In \emph{Proceedings of the 34th International Conference on Machine
  Learning - Volume 70}, ICML'17, page 1126–1135. JMLR.org, 2017.

\bibitem[Fu et~al.(2024)Fu, Huang, Datta, Chen, Panitch, Liu, Li, and
  Goldberg]{fu2024icrt}
L.~Fu, H.~Huang, G.~Datta, L.~Y. Chen, W.~C.-H. Panitch, F.~Liu, H.~Li, and
  K.~Goldberg.
\newblock In-context imitation learning via next-token prediction.
\newblock \emph{arXiv preprint arXiv:2408.15980}, 2024.

\bibitem[Garg et~al.(2022)Garg, Tsipras, Liang, and Valiant]{garg2022can}
S.~Garg, D.~Tsipras, P.~S. Liang, and G.~Valiant.
\newblock What can transformers learn in-context? a case study of simple
  function classes.
\newblock \emph{Advances in neural information processing systems},
  35:\penalty0 30583--30598, 2022.

\bibitem[Gu et~al.(2024)Gu, Dong, Wei, and Huang]{gu2024minillm}
Y.~Gu, L.~Dong, F.~Wei, and M.~Huang.
\newblock Minillm: Knowledge distillation of large language models.
\newblock In \emph{The twelfth international conference on learning
  representations}, 2024.

\bibitem[Haarnoja et~al.(2018)Haarnoja, Zhou, Hartikainen, Tucker, Ha, Tan,
  Kumar, Zhu, Gupta, Abbeel, and Levine]{haarnoja2018sacapps}
T.~Haarnoja, A.~Zhou, K.~Hartikainen, G.~Tucker, S.~Ha, J.~Tan, V.~Kumar,
  H.~Zhu, A.~Gupta, P.~Abbeel, and S.~Levine.
\newblock Soft actor-critic algorithms and applications.
\newblock Technical report, 2018.

\bibitem[Hausknecht and Stone(2015)]{hausknecht2015deep}
M.~J. Hausknecht and P.~Stone.
\newblock Deep recurrent q-learning for partially observable mdps.
\newblock In \emph{AAAI fall symposia}, volume~45, page 141, 2015.

\bibitem[Higuera et~al.(2023)Higuera, Ortiz, Qi, Pineda, Boots, and
  Mukadam]{higuera2023perceiving}
C.~Higuera, J.~Ortiz, H.~Qi, L.~Pineda, B.~Boots, and M.~Mukadam.
\newblock Perceiving extrinsic contacts from touch improves learning insertion
  policies, 2023.

\bibitem[Hu et~al.(2025)Hu, Wang, Yuan, Luo, Li, Lambrechts, Rybkin, and
  Jayaraman]{hu2025realworld}
E.~S. Hu, J.~Wang, X.~Yuan, F.~Luo, M.~Li, G.~Lambrechts, O.~Rybkin, and
  D.~Jayaraman.
\newblock Real-world reinforcement learning of active perception behaviors.
\newblock In \emph{The Thirty-ninth Annual Conference on Neural Information
  Processing Systems}, 2025.
\newblock URL \url{https://openreview.net/forum?id=RkdTtznSAL}.

\bibitem[Igl et~al.(2018)Igl, Zintgraf, Le, Wood, and Whiteson]{Igl2018DeepVR}
M.~Igl, L.~M. Zintgraf, T.~A. Le, F.~Wood, and S.~Whiteson.
\newblock Deep variational reinforcement learning for pomdps.
\newblock In \emph{International Conference on Machine Learning}, 2018.
\newblock URL \url{https://api.semanticscholar.org/CorpusID:46955236}.

\bibitem[Intelligence et~al.(2025)Intelligence, Black, Brown, Darpinian,
  Dhabalia, Driess, Esmail, Equi, Finn, Fusai, et~al.]{intelligence2025pi_}
P.~Intelligence, K.~Black, N.~Brown, J.~Darpinian, K.~Dhabalia, D.~Driess,
  A.~Esmail, M.~Equi, C.~Finn, N.~Fusai, et~al.
\newblock $\pi_{0.5}$: a vision-language-action model with open-world
  generalization.
\newblock \emph{arXiv preprint arXiv:2504.16054}, 2025.

\bibitem[Jain et~al.(2021)Jain, Liu, Lazebnik, Kembhavi, Weihs, and
  Schwing]{jain2021gridtopix}
U.~Jain, I.-J. Liu, S.~Lazebnik, A.~Kembhavi, L.~Weihs, and A.~G. Schwing.
\newblock Gridtopix: Training embodied agents with minimal supervision.
\newblock In \emph{Proceedings of the IEEE/CVF International Conference on
  Computer Vision}, pages 15141--15151, 2021.

\bibitem[Kaelbling et~al.(1998)Kaelbling, Littman, and
  Cassandra]{kaelbling1998planning}
L.~P. Kaelbling, M.~L. Littman, and A.~R. Cassandra.
\newblock Planning and acting in partially observable stochastic domains.
\newblock \emph{Artificial Intelligence}, 101\penalty0 (1):\penalty0 99--134,
  1998.
\newblock ISSN 0004-3702.
\newblock \doi{https://doi.org/10.1016/S0004-3702(98)00023-X}.
\newblock URL
  \url{https://www.sciencedirect.com/science/article/pii/S000437029800023X}.

\bibitem[Karkus et~al.(2017)Karkus, Hsu, and Lee]{karkus2017qmdp}
P.~Karkus, D.~Hsu, and W.~S. Lee.
\newblock Qmdp-net: Deep learning for planning under partial observability.
\newblock \emph{Advances in neural information processing systems}, 30, 2017.

\bibitem[Kelly et~al.(2019)Kelly, Sidrane, Driggs-Campbell, and
  Kochenderfer]{kelly2019hg}
M.~Kelly, C.~Sidrane, K.~Driggs-Campbell, and M.~J. Kochenderfer.
\newblock Hg-dagger: Interactive imitation learning with human experts.
\newblock In \emph{2019 International Conference on Robotics and Automation
  (ICRA)}, pages 8077--8083. IEEE, 2019.

\bibitem[Kim et~al.(2025)Kim, Chin, Vasudev, and Choudhury]{kim2025distilling}
Y.~Kim, N.~Chin, A.~Vasudev, and S.~Choudhury.
\newblock Distilling realizable students from unrealizable teachers.
\newblock \emph{arXiv preprint arXiv:2505.09546}, 2025.

\bibitem[Kumar et~al.(2021)Kumar, Fu, Pathak, and Malik]{kumar2021rma}
A.~Kumar, Z.~Fu, D.~Pathak, and J.~Malik.
\newblock Rma: Rapid motor adaptation for legged robots.
\newblock \emph{arXiv preprint arXiv:2107.04034}, 2021.

\bibitem[Laskin et~al.(2022)Laskin, Wang, Oh, Parisotto, Spencer, Steigerwald,
  Strouse, Hansen, Filos, Brooks, Gazeau, Sahni, Singh, and
  Mnih]{Laskin2023InContextRL}
M.~Laskin, L.~Wang, J.~Oh, E.~Parisotto, S.~Spencer, R.~Steigerwald,
  D.~Strouse, S.~Hansen, A.~Filos, E.~Brooks, M.~Gazeau, H.~Sahni, S.~Singh,
  and V.~Mnih.
\newblock In-context reinforcement learning with algorithm distillation, 2022.
\newblock URL \url{https://arxiv.org/abs/2210.14215}.

\bibitem[Lee et~al.(2021)Lee, Hou, Choudhury, and Srinivasa]{lee2021bayesian}
G.~Lee, B.~Hou, S.~Choudhury, and S.~S. Srinivasa.
\newblock Bayesian residual policy optimization:: Scalable bayesian
  reinforcement learning with clairvoyant experts.
\newblock In \emph{2021 IEEE/RSJ International Conference on Intelligent Robots
  and Systems (IROS)}, pages 5611--5618. IEEE, 2021.

\bibitem[Lee et~al.(2020)Lee, Hwangbo, Wellhausen, Koltun, and
  Hutter]{lee2020learning}
J.~Lee, J.~Hwangbo, L.~Wellhausen, V.~Koltun, and M.~Hutter.
\newblock Learning quadrupedal locomotion over challenging terrain.
\newblock \emph{Science robotics}, 5\penalty0 (47):\penalty0 eabc5986, 2020.

\bibitem[Lee et~al.(2023)Lee, Xie, Pacchiano, Chandak, Finn, Nachum, and
  Brunskill]{lee2023supervised}
J.~N. Lee, A.~Xie, A.~Pacchiano, Y.~Chandak, C.~Finn, O.~Nachum, and
  E.~Brunskill.
\newblock Supervised pretraining can learn in-context reinforcement learning.
\newblock \emph{arXiv preprint arXiv:2306.14892}, 2023.

\bibitem[Liang et~al.(2022)Liang, Huang, Xia, Xu, Hausman, Ichter, Florence,
  and Zeng]{codeaspolicies2022}
J.~Liang, W.~Huang, F.~Xia, P.~Xu, K.~Hausman, B.~Ichter, P.~Florence, and
  A.~Zeng.
\newblock Code as policies: Language model programs for embodied control.
\newblock In \emph{arXiv preprint arXiv:2209.07753}, 2022.

\bibitem[Liu et~al.(2021)Liu, Raghunathan, Liang, and Finn]{pmlrv139liu21s}
E.~Z. Liu, A.~Raghunathan, P.~Liang, and C.~Finn.
\newblock Decoupling exploration and exploitation for meta-reinforcement
  learning without sacrifices.
\newblock In M.~Meila and T.~Zhang, editors, \emph{Proceedings of the 38th
  International Conference on Machine Learning}, volume 139 of
  \emph{Proceedings of Machine Learning Research}, pages 6925--6935. PMLR,
  18--24 Jul 2021.
\newblock URL \url{https://proceedings.mlr.press/v139/liu21s.html}.

\bibitem[Liu and Abbeel(2023)]{liu2023emergent}
H.~Liu and P.~Abbeel.
\newblock Emergent agentic transformer from chain of hindsight experience.
\newblock In \emph{International Conference on Machine Learning}, pages
  21362--21374. PMLR, 2023.

\bibitem[Memmel et~al.(2024)Memmel, Wagenmaker, Zhu, Yin, Fox, and
  Gupta]{memmel2024asid}
M.~Memmel, A.~Wagenmaker, C.~Zhu, P.~Yin, D.~Fox, and A.~Gupta.
\newblock Asid: Active exploration for system identification in robotic
  manipulation.
\newblock \emph{arXiv preprint arXiv:2404.12308}, 2024.

\bibitem[Min et~al.(2022)Min, Lyu, Holtzman, Artetxe, Lewis, Hajishirzi, and
  Zettlemoyer]{min2022rethinking}
S.~Min, X.~Lyu, A.~Holtzman, M.~Artetxe, M.~Lewis, H.~Hajishirzi, and
  L.~Zettlemoyer.
\newblock Rethinking the role of demonstrations: What makes in-context learning
  work?
\newblock In \emph{Proceedings of the 2022 conference on empirical methods in
  natural language processing}, pages 11048--11064, 2022.

\bibitem[Mittal et~al.(2023)Mittal, Yu, Yu, Liu, Rudin, Hoeller, Yuan, Singh,
  Guo, Mazhar, et~al.]{mittal2023orbit}
M.~Mittal, C.~Yu, Q.~Yu, J.~Liu, N.~Rudin, D.~Hoeller, J.~L. Yuan, R.~Singh,
  Y.~Guo, H.~Mazhar, et~al.
\newblock Orbit: A unified simulation framework for interactive robot learning
  environments.
\newblock \emph{IEEE Robotics and Automation Letters}, 8\penalty0 (6):\penalty0
  3740--3747, 2023.

\bibitem[Nguyen et~al.(2022)Nguyen, Baisero, Wang, Amato, and
  Platt]{nguyen2022leveraging}
H.~Nguyen, A.~Baisero, D.~Wang, C.~Amato, and R.~Platt.
\newblock Leveraging fully observable policies for learning under partial
  observability.
\newblock \emph{arXiv preprint arXiv:2211.01991}, 2022.

\bibitem[Ni et~al.(2021)Ni, Eysenbach, and Salakhutdinov]{ni2021recurrent}
T.~Ni, B.~Eysenbach, and R.~Salakhutdinov.
\newblock Recurrent model-free rl can be a strong baseline for many pomdps.
\newblock \emph{arXiv preprint arXiv:2110.05038}, 2021.

\bibitem[Nikulin et~al.(2025)Nikulin, Zisman, Zemtsov, and
  Kurenkov]{nikulin2025xland100blargescalemultitaskdataset}
A.~Nikulin, I.~Zisman, A.~Zemtsov, and V.~Kurenkov.
\newblock Xland-100b: A large-scale multi-task dataset for in-context
  reinforcement learning, 2025.
\newblock URL \url{https://arxiv.org/abs/2406.08973}.

\bibitem[Osa et~al.(2018)Osa, Pajarinen, Neumann, Bagnell, Abbeel, and
  Peters]{osa2018algorithmic}
T.~Osa, J.~Pajarinen, G.~Neumann, J.~A. Bagnell, P.~Abbeel, and J.~Peters.
\newblock An algorithmic perspective on imitation learning.
\newblock \emph{Foundations and Trends{\textregistered} in Robotics},
  7\penalty0 (1-2):\penalty0 1--179, 2018.

\bibitem[Osband et~al.(2013)Osband, Russo, and Van~Roy]{osband2013psrl}
I.~Osband, D.~Russo, and B.~Van~Roy.
\newblock {(More)} efficient reinforcement learning via posterior sampling.
\newblock In \emph{Proceedings of the 27th International Conference on Neural
  Information Processing Systems - Volume 2}, NIPS'13, page 3003–3011, Red
  Hook, NY, USA, 2013. Curran Associates Inc.

\bibitem[Peng et~al.(2018)Peng, Andrychowicz, Zaremba, and Abbeel]{peng2018sim}
X.~B. Peng, M.~Andrychowicz, W.~Zaremba, and P.~Abbeel.
\newblock Sim-to-real transfer of robotic control with dynamics randomization.
\newblock In \emph{2018 IEEE international conference on robotics and
  automation (ICRA)}, pages 3803--3810. IEEE, 2018.

\bibitem[Peng et~al.(2019)Peng, Kumar, Zhang, and Levine]{AWRPeng19}
X.~B. Peng, A.~Kumar, G.~Zhang, and S.~Levine.
\newblock Advantage-weighted regression: Simple and scalable off-policy
  reinforcement learning.
\newblock \emph{CoRR}, abs/1910.00177, 2019.
\newblock URL \url{https://arxiv.org/abs/1910.00177}.

\bibitem[Puig et~al.(2024)Puig, Undersander, Szot, Cote, Yang, Partsey, Desai,
  Clegg, Hlavac, Min, Vondrus, Gervet, Berges, Turner, Maksymets, Kira,
  Kalakrishnan, Malik, Chaplot, Jain, Batra, Rai, and
  Mottaghi]{puig2024habitat}
X.~Puig, E.~Undersander, A.~Szot, M.~D. Cote, T.-Y. Yang, R.~Partsey, R.~Desai,
  A.~Clegg, M.~Hlavac, S.~Y. Min, V.~Vondrus, T.~Gervet, V.-P. Berges, J.~M.
  Turner, O.~Maksymets, Z.~Kira, M.~Kalakrishnan, J.~Malik, D.~S. Chaplot,
  U.~Jain, D.~Batra, A.~Rai, and R.~Mottaghi.
\newblock Habitat 3.0: A co-habitat for humans, avatars, and robots.
\newblock In \emph{ICLR}. OpenReview.net, 2024.
\newblock URL
  \url{http://dblp.uni-trier.de/db/conf/iclr/iclr2024.html#PuigUSCYPDCHMVG24}.

\bibitem[Rakelly et~al.(2019)Rakelly, Zhou, Quillen, Finn, and
  Levine]{Rakelly2019EfficientOM}
K.~Rakelly, A.~Zhou, D.~Quillen, C.~Finn, and S.~Levine.
\newblock Efficient off-policy meta-reinforcement learning via probabilistic
  context variables.
\newblock In \emph{International Conference on Machine Learning}, 2019.
\newblock URL \url{https://api.semanticscholar.org/CorpusID:84187276}.

\bibitem[Ross and Bagnell(2014)]{ross2014reinforcement}
S.~Ross and J.~A. Bagnell.
\newblock Reinforcement and imitation learning via interactive no-regret
  learning.
\newblock \emph{arXiv preprint arXiv:1406.5979}, 2014.

\bibitem[Ross et~al.(2011)Ross, Gordon, and Bagnell]{ross2011reduction}
S.~Ross, G.~Gordon, and D.~Bagnell.
\newblock A reduction of imitation learning and structured prediction to
  no-regret online learning.
\newblock In \emph{Proceedings of the fourteenth international conference on
  artificial intelligence and statistics}, pages 627--635. JMLR Workshop and
  Conference Proceedings, 2011.

\bibitem[Schulman et~al.(2017)Schulman, Wolski, Dhariwal, Radford, and
  Klimov]{schulman2017ppo}
J.~Schulman, F.~Wolski, P.~Dhariwal, A.~Radford, and O.~Klimov.
\newblock Proximal policy optimization algorithms.
\newblock \emph{CoRR}, abs/1707.06347, 2017.
\newblock URL
  \url{http://dblp.uni-trier.de/db/journals/corr/corr1707.html#SchulmanWDRK17}.

\bibitem[Shenfeld et~al.(2023)Shenfeld, Hong, Tamar, and
  Agrawal]{shenfeld2023tgrl}
I.~Shenfeld, Z.-W. Hong, A.~Tamar, and P.~Agrawal.
\newblock Tgrl: An algorithm for teacher guided reinforcement learning.
\newblock In \emph{International Conference on Machine Learning}, pages
  31077--31093. PMLR, 2023.

\bibitem[Somani et~al.(2013)Somani, Ye, Hsu, and Lee]{somani2013despot}
A.~Somani, N.~Ye, D.~Hsu, and W.~S. Lee.
\newblock Despot: Online pomdp planning with regularization.
\newblock In C.~Burges, L.~Bottou, M.~Welling, Z.~Ghahramani, and
  K.~Weinberger, editors, \emph{Advances in Neural Information Processing
  Systems}, volume~26. Curran Associates, Inc., 2013.
\newblock URL
  \url{https://proceedings.neurips.cc/paper_files/paper/2013/file/c2aee86157b4a40b78132f1e71a9e6f1-Paper.pdf}.

\bibitem[Spencer et~al.(2021)Spencer, Choudhury, Venkatraman, Ziebart, and
  Bagnell]{spencer2021feedback}
J.~Spencer, S.~Choudhury, A.~Venkatraman, B.~Ziebart, and J.~A. Bagnell.
\newblock Feedback in imitation learning: The three regimes of covariate shift.
\newblock \emph{arXiv preprint arXiv:2102.02872}, 2021.

\bibitem[Sun et~al.(2017)Sun, Venkatraman, Gordon, Boots, and
  Bagnell]{sun2017deeply}
W.~Sun, A.~Venkatraman, G.~J. Gordon, B.~Boots, and J.~A. Bagnell.
\newblock Deeply aggrevated: Differentiable imitation learning for sequential
  prediction.
\newblock In \emph{International conference on machine learning}, pages
  3309--3318. PMLR, 2017.

\bibitem[Swamy et~al.(2022)Swamy, Choudhury, Bagnell, and
  Wu]{swamy2022sequence}
G.~Swamy, S.~Choudhury, J.~Bagnell, and S.~Z. Wu.
\newblock Sequence model imitation learning with unobserved contexts.
\newblock \emph{Advances in Neural Information Processing Systems},
  35:\penalty0 17665--17676, 2022.

\bibitem[Tennenholtz et~al.(2021)Tennenholtz, Hallak, Dalal, Mannor, Chechik,
  and Shalit]{tennenholtz2021covariate}
G.~Tennenholtz, A.~Hallak, G.~Dalal, S.~Mannor, G.~Chechik, and U.~Shalit.
\newblock On covariate shift of latent confounders in imitation and
  reinforcement learning.
\newblock \emph{arXiv preprint arXiv:2110.06539}, 2021.

\bibitem[Thompson(1933)]{thompson1933}
W.~R. Thompson.
\newblock On the likelihood that one unknown probability exceeds another in
  view of the evidence of two samples.
\newblock \emph{Biometrika}, 25:\penalty0 285--294, 1933.
\newblock \doi{10.1093/biomet/25.3-4.285}.
\newblock URL \url{https://doi.org/10.1093/biomet/25.3-4.285}.

\bibitem[Tobin et~al.(2017)Tobin, Fong, Ray, Schneider, Zaremba, and
  Abbeel]{tobin2017domain}
J.~Tobin, R.~Fong, A.~Ray, J.~Schneider, W.~Zaremba, and P.~Abbeel.
\newblock Domain randomization for transferring deep neural networks from
  simulation to the real world.
\newblock In \emph{2017 IEEE/RSJ international conference on intelligent robots
  and systems (IROS)}, pages 23--30. IEEE, 2017.

\bibitem[von Werra et~al.(2020)von Werra, Belkada, Tunstall, Beeching, Thrush,
  Lambert, Huang, Rasul, and Gallouédec]{vonwerra2020trl}
L.~von Werra, Y.~Belkada, L.~Tunstall, E.~Beeching, T.~Thrush, N.~Lambert,
  S.~Huang, K.~Rasul, and Q.~Gallouédec.
\newblock {TRL: Transformers Reinforcement Learning}, 2020.
\newblock URL \url{https://github.com/huggingface/trl}.

\bibitem[Walsman et~al.(2023)Walsman, Zhang, Choudhury, Fox, and
  Farhadi]{walsman2023impossibly}
A.~Walsman, M.~Zhang, S.~Choudhury, D.~Fox, and A.~Farhadi.
\newblock Impossibly good experts and how to follow them.
\newblock In \emph{The Eleventh International Conference on Learning
  Representations}, 2023.
\newblock URL \url{https://openreview.net/forum?id=sciA_xgYofB}.

\bibitem[Wan et~al.(2025)Wan, Wu, Abdulhai, Shani, and
  Jaques]{wan2025enhancing}
Y.~Wan, J.~Wu, M.~Abdulhai, L.~Shani, and N.~Jaques.
\newblock Enhancing personalized multi-turn dialogue with curiosity reward.
\newblock \emph{arXiv preprint arXiv:2504.03206}, 2025.

\bibitem[Wang et~al.(2016)Wang, Kurth-Nelson, Soyer, Leibo, Tirumala, Munos,
  Blundell, Kumaran, and Botvinick]{wang2016learning}
J.~X. Wang, Z.~Kurth-Nelson, H.~Soyer, J.~Z. Leibo, D.~Tirumala, R.~Munos,
  C.~Blundell, D.~Kumaran, and M.~M. Botvinick.
\newblock Learning to reinforcement learn.
\newblock \emph{ArXiv}, abs/1611.05763, 2016.
\newblock URL \url{https://api.semanticscholar.org/CorpusID:13623631}.

\bibitem[Warrington et~al.(2021)Warrington, Lavington, Scibior, Schmidt, and
  Wood]{warrington2021robust}
A.~Warrington, J.~W. Lavington, A.~Scibior, M.~Schmidt, and F.~Wood.
\newblock Robust asymmetric learning in pomdps.
\newblock In \emph{International Conference on Machine Learning}, pages
  11013--11023. PMLR, 2021.

\bibitem[Weihs et~al.(2021)Weihs, Jain, Liu, Salvador, Lazebnik, Kembhavi, and
  Schwing]{advisor}
L.~Weihs, U.~Jain, I.-J. Liu, J.~Salvador, S.~Lazebnik, A.~Kembhavi, and
  A.~Schwing.
\newblock Bridging the imitation gap by adaptive insubordination.
\newblock In \emph{NeurIPS}, 2021.

\bibitem[Yin et~al.(2026)Yin, Westenbroek, Zhang, Tran, Dagnino, Shilamkar,
  Mbiziwo-Tiapo, Bagaria, Liu, Mullins, Kolobov, and Gupta]{yin2026emergent}
P.~Yin, T.~Westenbroek, Z.~Zhang, J.~Tran, I.~Dagnino, E.~Shilamkar,
  N.~Mbiziwo-Tiapo, S.~Bagaria, X.~Liu, G.~Mullins, A.~Kolobov, and A.~Gupta.
\newblock Emergent dexterity via diverse resets and large-scale reinforcement
  learning.
\newblock In \emph{Proceedings of the International Conference on Learning
  Representations (ICLR)}, 2026.

\bibitem[You et~al.(2025)You, Thomas, Schutz, Skilton, Hawes, and
  Buffet]{You_Thomas_Schutz_Skilton_Hawes_Buffet_2025}
Y.~You, V.~Thomas, A.~Schutz, R.~Skilton, N.~Hawes, and O.~Buffet.
\newblock Partially observable monte-carlo graph search.
\newblock \emph{Proceedings of the International Conference on Automated
  Planning and Scheduling}, 35\penalty0 (1):\penalty0 279--287, Sep. 2025.
\newblock \doi{10.1609/icaps.v35i1.36129}.
\newblock URL \url{https://ojs.aaai.org/index.php/ICAPS/article/view/36129}.

\bibitem[Yu et~al.(2020)Yu, Quillen, He, Julian, Hausman, Finn, and
  Levine]{yu2020meta}
T.~Yu, D.~Quillen, Z.~He, R.~Julian, K.~Hausman, C.~Finn, and S.~Levine.
\newblock Meta-world: A benchmark and evaluation for multi-task and meta
  reinforcement learning.
\newblock In \emph{Conference on robot learning}, pages 1094--1100. PMLR, 2020.

\bibitem[Zhang et~al.(2026)Zhang, Guaman~Castro, Yin, Dagnino, Gupta, Scalise,
  and Boots]{zhang2026balanced}
O.~Zhang, M.~Guaman~Castro, P.~Yin, I.~Dagnino, A.~Gupta, R.~Scalise, and
  B.~Boots.
\newblock A balanced data diet: Addressing the exploration bottleneck in
  mega-scale rl for robot control.
\newblock In \emph{Conference on Robot Learning (CoRL)}, 2026.

\bibitem[Zintgraf et~al.(2021)Zintgraf, Schulze, Lu, Feng, Igl, Shiarlis, Gal,
  Hofmann, and Whiteson]{zintgraf2021varibad}
L.~Zintgraf, S.~Schulze, C.~Lu, L.~Feng, M.~Igl, K.~Shiarlis, Y.~Gal,
  K.~Hofmann, and S.~Whiteson.
\newblock Varibad: variational bayes-adaptive deep rl via meta-learning.
\newblock \emph{J. Mach. Learn. Res.}, 22\penalty0 (1), Jan. 2021.
\newblock ISSN 1532-4435.

\end{thebibliography}
\bibliographystyle{abbrvnat}

\newpage
\appendix

\startcontents[appendix]

\section*{Appendix Contents}
\printcontents[appendix]{}{1}{\setcounter{tocdepth}{2}}

\clearpage

\section{Related Work}\label{app:related_work}

\textbf{Student--Teacher Distillation.}
Learning from privileged experts arises in settings from self-driving \citep{chen2020learning, de2019causal, bansal2018chauffeurnet} to robotic exploration \citep{choudhury2017adaptive, jain2021gridtopix, bhardwaj2017learning}. In partially observed domains, interactive imitation learning addresses covariate shift and the resulting, compounding errors that drive the learner into unseen states \citep{chen2020learning, lee2020learning} with classic iterative schemes including \citep{daume2009search, ross2011reduction, ross2014reinforcement, sun2017deeply}. Recent results suggest that on-policy imitation is necessary under partial observability \citep{swamy2022sequence}, and that off-policy cloning \citep{liu2023emergent} can fail in non-realizable settings where information-gathering behavior is required \citep{spencer2021feedback, advisor}. Our work targets this non-realizable regime: the learner must take information-gathering actions that are not directly recoverable from a fully observed expert \citep{lee2021bayesian}. Prior approaches handle this via additional assumptions (e.g., access to confounders at test time \citep{tennenholtz2021covariate} or differentiable experts \citep{warrington2021robust}) or by incorporating reinforcement learning with bootstrapping via demonstrations or resets to induce exploration \citep{nguyen2022leveraging, walsman2023impossibly, hu2025realworld, advisor, kim2025distilling}. In contrast, we learn information-gathering behavior using iterative distillation conditioned on on-policy histories, which is substantially more stable and sample-efficient than RL-based alternatives.

\textbf{Meta Learning.}
A common approach to learning in POMDPs is to use recurrent policies that compress the interaction history into a latent belief representation \citep{hausknecht2015deep, Igl2018DeepVR}. Meta-RL provides a general framework for learning adaptive exploration across task distributions~\citep{duan2016rl,zintgraf2021varibad,Rakelly2019EfficientOM,pmlrv139liu21s}, but often requires extensive online training interaction~\citep{wang2016learning,bauer2023human}, making it difficult to scale to high-dimensional, sparse-reward domains with long-horizon credit assignment. Our approach replaces this expensive RL-based adaptation with iterative supervised learning, still enabling long-horizon, history-dependent exploration through on-policy context collection. Additionally, unlike prior work~\citep{lee2023supervised, zintgraf2021varibad, duan2016rl, lee2021bayesian}, our method is not limited to Bayes-Adaptive MDPs \citep{lee2021bayesian}, where the agent interacts with an environment with a fixed underlying parameter across multiple episodes. We demonstrate that our method scales to POMDPs, in which the policy explores actions within a single episode to estimate the underlying state distribution and achieve task success.

\textbf{Posterior Sampling.}
Posterior sampling methods \citep{thompson1933, osband2013psrl} drive exploration by implicitly or explicitly maintaining a belief over the environment's underlying hidden parameters, acting optimally under samples from this distribution, and updating the belief based on feedback. Despite strong empirical and theoretical results \citep{chapelle2011thompsoneval, osband2013psrl}, such approaches typically require maintaining and sampling from an explicit posterior, and solving for policies under the sampled parameters (e.g., PSRL, BOSS) \citep{osband2013psrl, asmuth2012boss}, limiting practicality in rich-observation settings. Recent approaches \cite{somani2013despot, You_Thomas_Schutz_Skilton_Hawes_Buffet_2025} learn Bayes-optimal policies for POMDPs but are applied in discrete settings, whereas posterior sampling yields a simple, scalable solution with provable regret guarantees. \MethodS{} does not explicitly optimize a posterior-sampling objective, but in Section~\ref{sec:exp} we show that posterior-sampling-like behavior emerges naturally from our method through iterative on-policy distillation, even in high-dimensional continuous environments. Further, in Appendix~\ref{app:theory}, we show that theoretically, \MethodS{} converges to posterior sampling with linear regret akin to standard online learning methods \citep{ross2011reduction}. We show that under stated assumptions, \MethodS{} converges to posterior sampling on the covered history support. We also observe this behavior empirically in high-dimensional manipulation, and it transfers to a real robot (see Section~\ref{sec:exp}).

\textbf{In-Context Decision-Making.}
Recent works~\citep{intelligence2025pi_, brown2020language, vonwerra2020trl} extensively use transformers as decision-making policies. When trained on a large corpus of sequential data using autoregressive supervised prediction, transformers can exhibit in-context learning \citep{fu2024icrt, min2022rethinking, garg2022can}, i.e., adapting their behavior to new inputs based on interaction histories or input-output examples provided in context. In online decision-making, prior works have used these emergent capabilities to steer policies conditioned on rewards or prior context. For example, Decision Transformer (DT)~\cite{chen2021decisiontransformer} learns a transformer to autoregressively predict actions conditioned on past interactions and the desired return. Similarly, Algorithm Distillation \cite{Laskin2023InContextRL} uses experience traces from a learning method (such as PPO \citep{schulman2017ppo}) to adapt and improve policy behavior based on past contexts. More recently, Decision-Pretrained Transformer~\citep{lee2023supervised} shows that we can learn an in-context policy that performs posterior sampling in certain POMDPs. DPT generates the context using random interactions in the environment and action supervision from oracle experts with privileged information. Such methods assume that these offline datasets provide coverage over all possible histories of interaction, which is intractable and unrealistic in high-dimensional domains. \MethodS, instead, uses a dataset of on-policy contexts, removing the coverage assumption and scaling to much harder and higher-dimensional problems (see Section~\ref{sec:exp}).

\section{Ablation Experiments}\label{app:ablation}

\subsection{Performance of \MethodS{} on additional environments}

In Figure~\ref{fig:extra_envs}, we show the performance of our proposed method across additional partially observed environments. \MethodS{} leverages on-policy context accumulation and expert supervision to solve these tasks efficiently. In the Tactile-Pick  w/o Proprioception task, the objective is to pick a randomly placed cube on the table. The arm is randomly initialized, and the only observation available to the student is tactile forces and the last action. We observe that ASTEROID effectively solves the high-dimensional POMDP task with $\sim92\%$ success by implicitly maintaining a belief over the underlying state. We plot the performance over iterations and the resulting behavior in Figure~\ref{fig:no_proprioception}. Note that in the tactile-pick (and some other) tasks, BC while history-conditioned is not efficient and achieves non-trivial performance (here, $\sim48\%$) because certain evaluation reset conditions do not require significant or directed exploration to locate the object.

\begin{figure*}[h]
    \centering
    \begin{subfigure}[t]{0.32\linewidth}
        \centering
        \includegraphics[width=\linewidth]{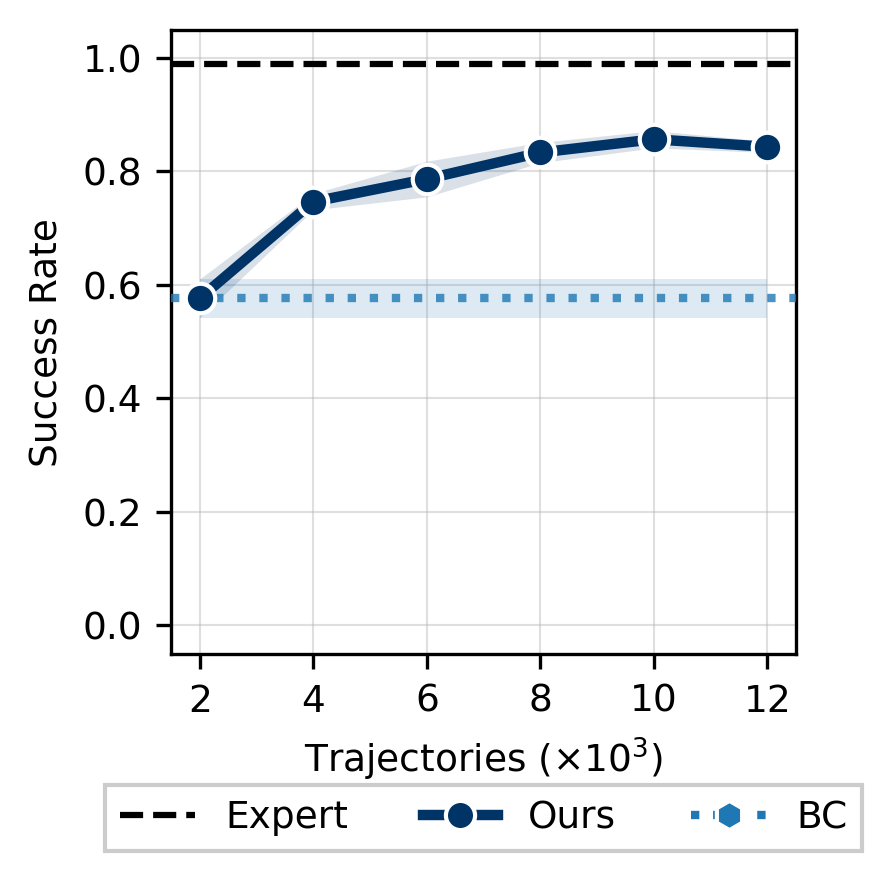}
        \caption{Fixed Maze Navigation}
        \label{fig:maze_nav}
    \end{subfigure}
    \hfill
    \begin{subfigure}[t]{0.32\linewidth}
        \centering
        \includegraphics[width=\linewidth]{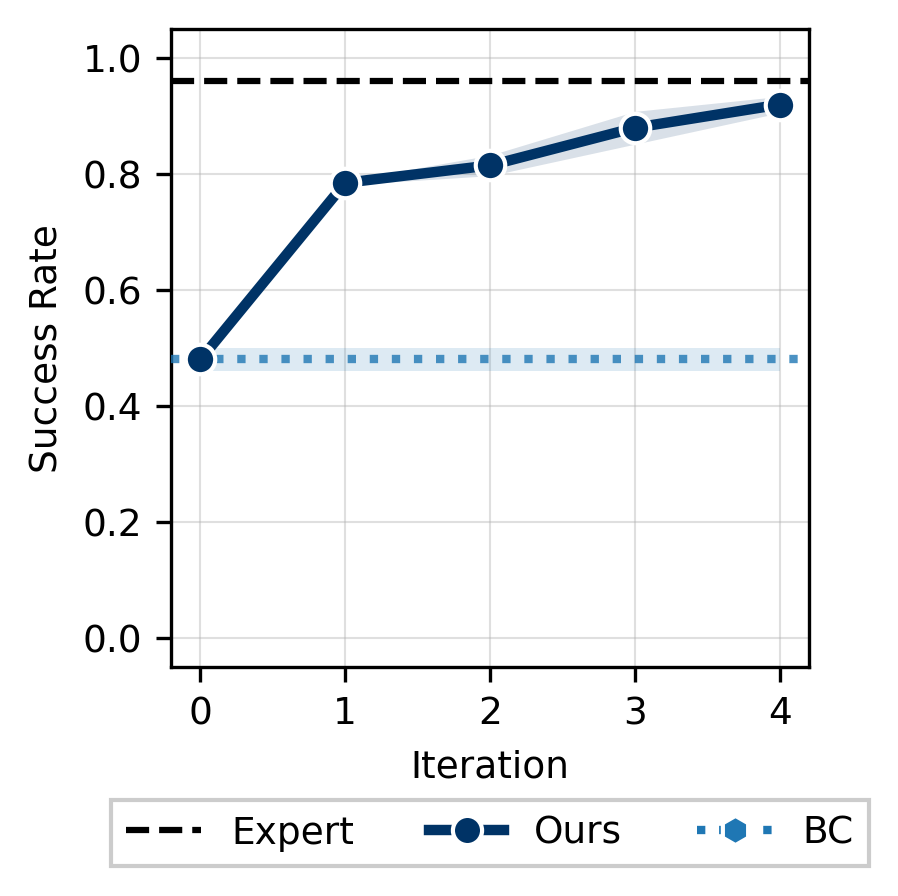}
        \caption{Tactile-Pick  w/o Proprioception}
        \label{fig:no_proprioception}
    \end{subfigure}
    \hfill
    \begin{subfigure}[t]{0.32\linewidth}
        \centering
        \includegraphics[width=\linewidth]{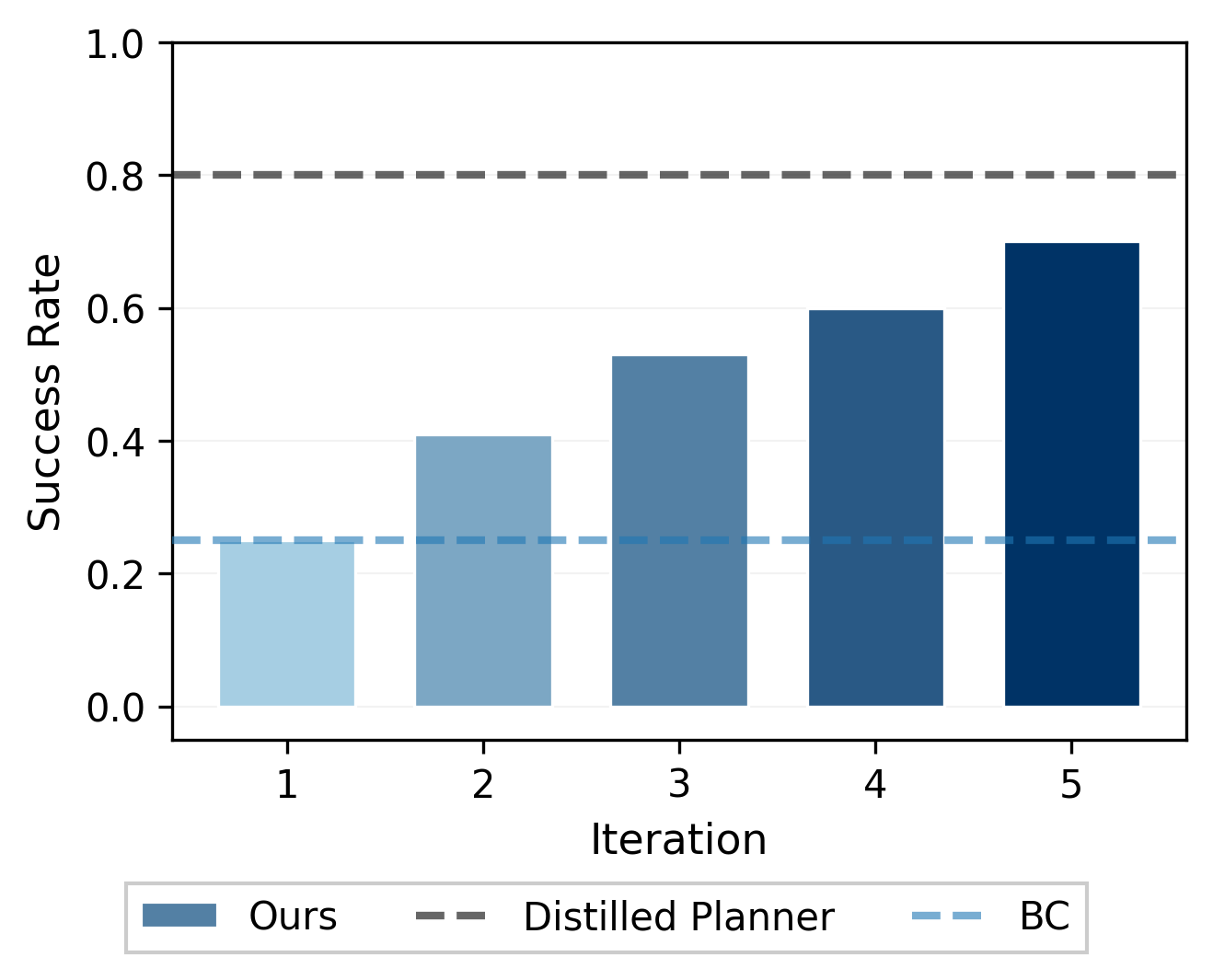}
        \caption{Habitat Object Search}
        \label{fig:habitat}
    \end{subfigure}
    \caption{\MethodS{} learns to solve POMDPs such as maze navigation, robot manipulation, and Habitat object search, where agents must act under limited fields of view, random resets, and partial sensing.}
    \label{fig:extra_envs}
\end{figure*}

\begin{figure*}[h]
    \centering
    \begin{subfigure}[t]{0.31\linewidth}
        \centering
        \includegraphics[width=\linewidth]{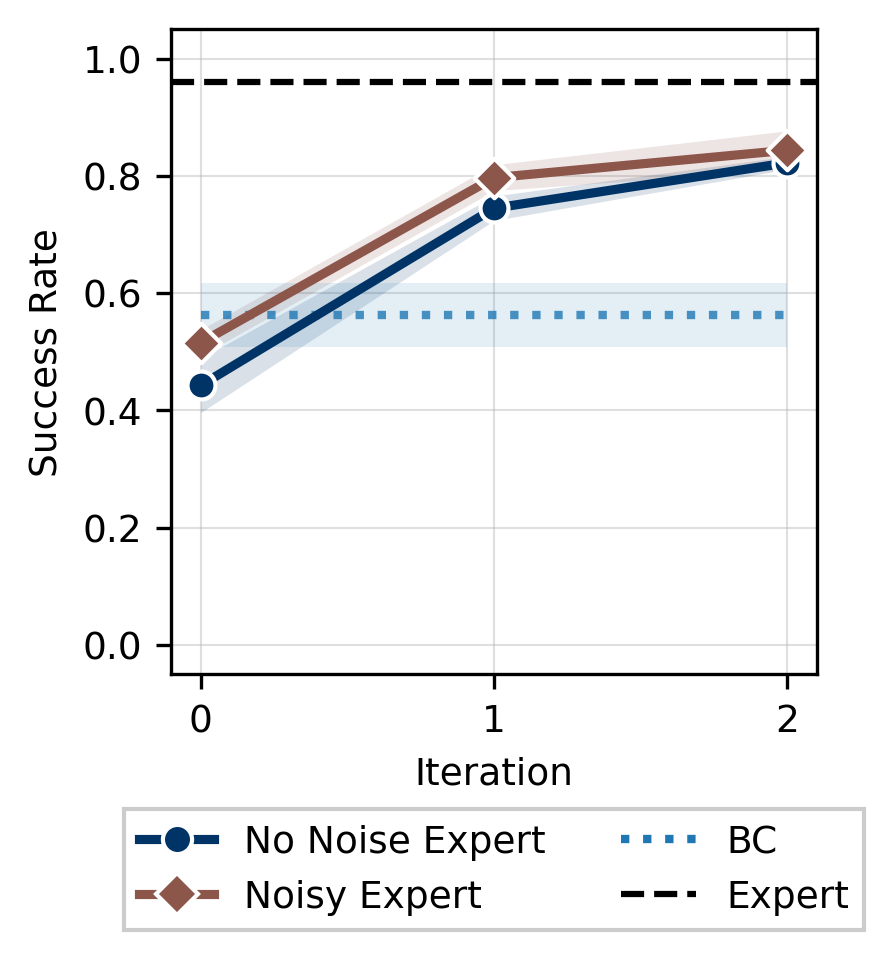}
    \end{subfigure}
    \hfill
    \begin{subfigure}[t]{0.34\linewidth}
        \centering
        \includegraphics[width=\linewidth]{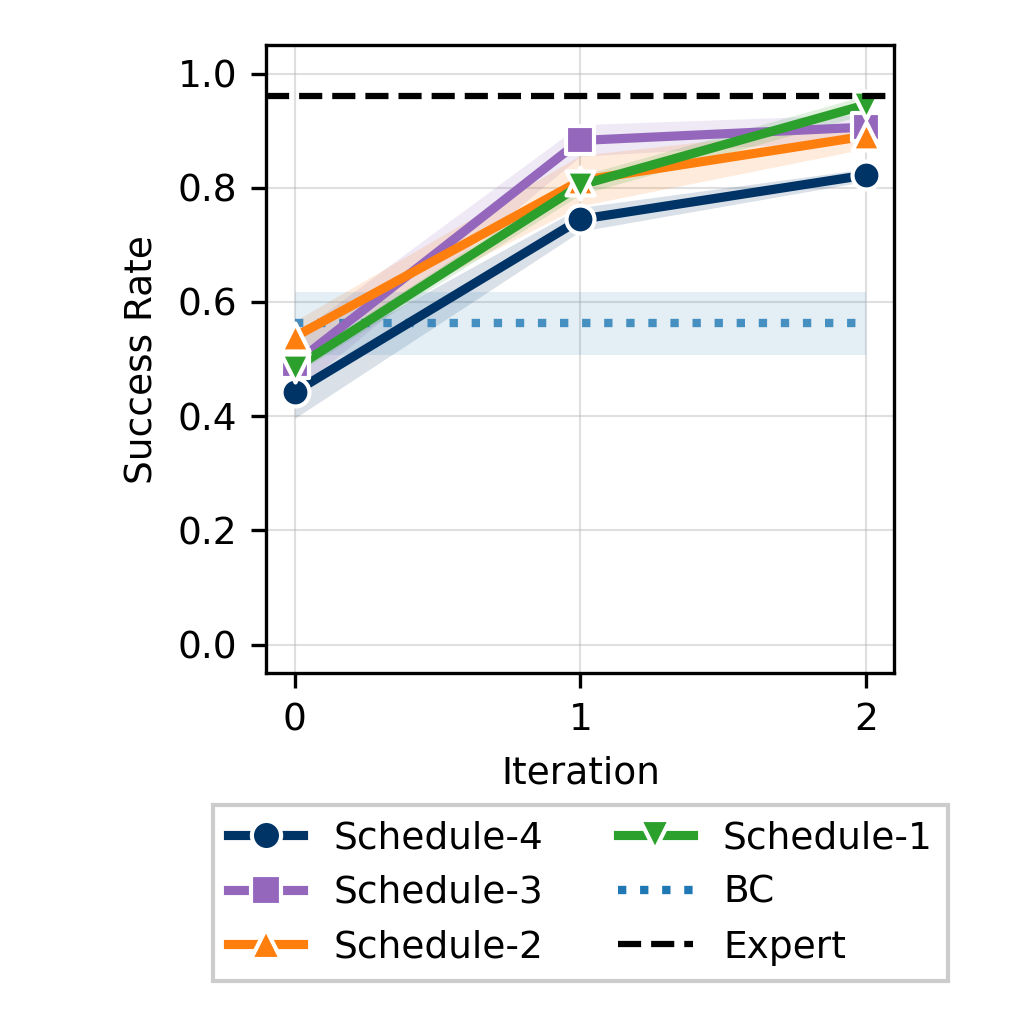}
    \end{subfigure}
    \hfill
    \begin{subfigure}[t]{0.32\linewidth}
        \centering
        \includegraphics[width=\linewidth]{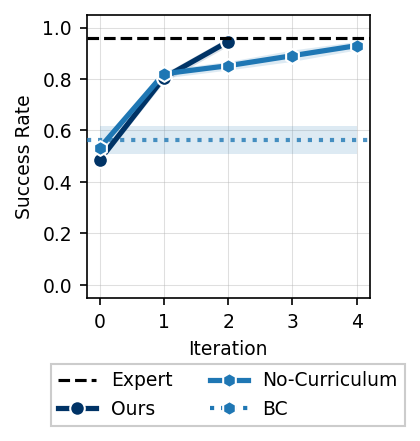}
    \end{subfigure}
    \caption{We run ablations on different parts of \MethodS{} in the Tactile-Pick task. Left: we run \MethodS{} with a noisy expert and observe no performance drop. Middle: we sweep over different curricula with increasing exploration horizons and find minimal sensitivity. Right: we show that adding curriculum improves the efficiency of \MethodS{} by ensuring that as the horizon increases over iterations, the learned policy does not suffer from OOD histories or generating random actions.}
    \label{fig:ablation_combined}
\end{figure*}

\subsection{Robustness of \MethodS{} to noise in privileged expert supervision}\label{app:noisy_ablation}

In Figure~\ref{fig:ablation_combined} (left), we evaluate the robustness of \MethodS{} through an ablation study in which a noisy expert supervises the student. We inject Gaussian noise into the expert actions during training ($\sigma=0.2$) and find that ASTEROID remains performant under noisy supervision. The slight performance improvement may stem from the regularizing effect of learning from noisy targets.

\subsection{How does the curriculum of exploration affect \MethodS{} during training?}\label{app:curriculum_ablation}
At each iteration $k < K$, the policy interacts with the environment for $k$ episodes, thereby inducing an increasing curriculum on on-policy rollout horizons. As the context window gradually expands, it allows the student to use longer histories to rule out hidden states or task instances inconsistent with its observations and feedback (Appendix~\ref{sec:context_length}).
The curriculum also avoids a subtle distribution-shift issue. Suppose the maximum context length is $K$, and we roll out an intermediate policy $\pi_{\theta_k}$ for more than $k$ episodes. Since $\pi_{\theta_k}$ was only trained on contexts of length at most $k$, maintaining an in-distribution context would require a sliding window over the most recent $k$ episodes. However, this window can discard early exploration outcomes. If a failed goal is no longer in context, the policy may re-explore it, producing redundant exploration histories that a fully converged policy would not generate. Informally, this reduces the effective number of length-$H$ exploration contexts from $|\mathcal{Z}|^H$ arbitrary failure sequences to at most $\frac{|\mathcal{Z}|!}{(|\mathcal{Z}|-H)!}$ ordered non-redundant histories. This further reduces the effective context space, thereby making \MethodS{} sample-efficient.

In Figure~\ref{fig:ablation_combined} (right), we compare the performance of \MethodS{} across two different schedules: (i) No-Curriculum: the exploration horizon is sampled from a fixed range across all iterations, and (ii) Curriculum: the exploration horizon is sampled from a range that increases every iteration. We observe that curriculum-based exploration is more efficient than the non-curriculum variant, as it keeps student policy rollouts in distribution and makes on-policy data generation more sample-efficient. Furthermore, in Figure~\ref{fig:ablation_combined} (middle), we compare the robustness of the method across different values of increasing curriculum and observe that it is relatively insensitive to this parameter.

\subsection{\MethodS{} is significantly more scalable than RL in complex domains}\label{app:rl_ablation}

\begin{wrapfigure}{l}{0.3\linewidth}
    \vspace{-0.3cm}
    \centering
    \includegraphics[width=0.8\linewidth]{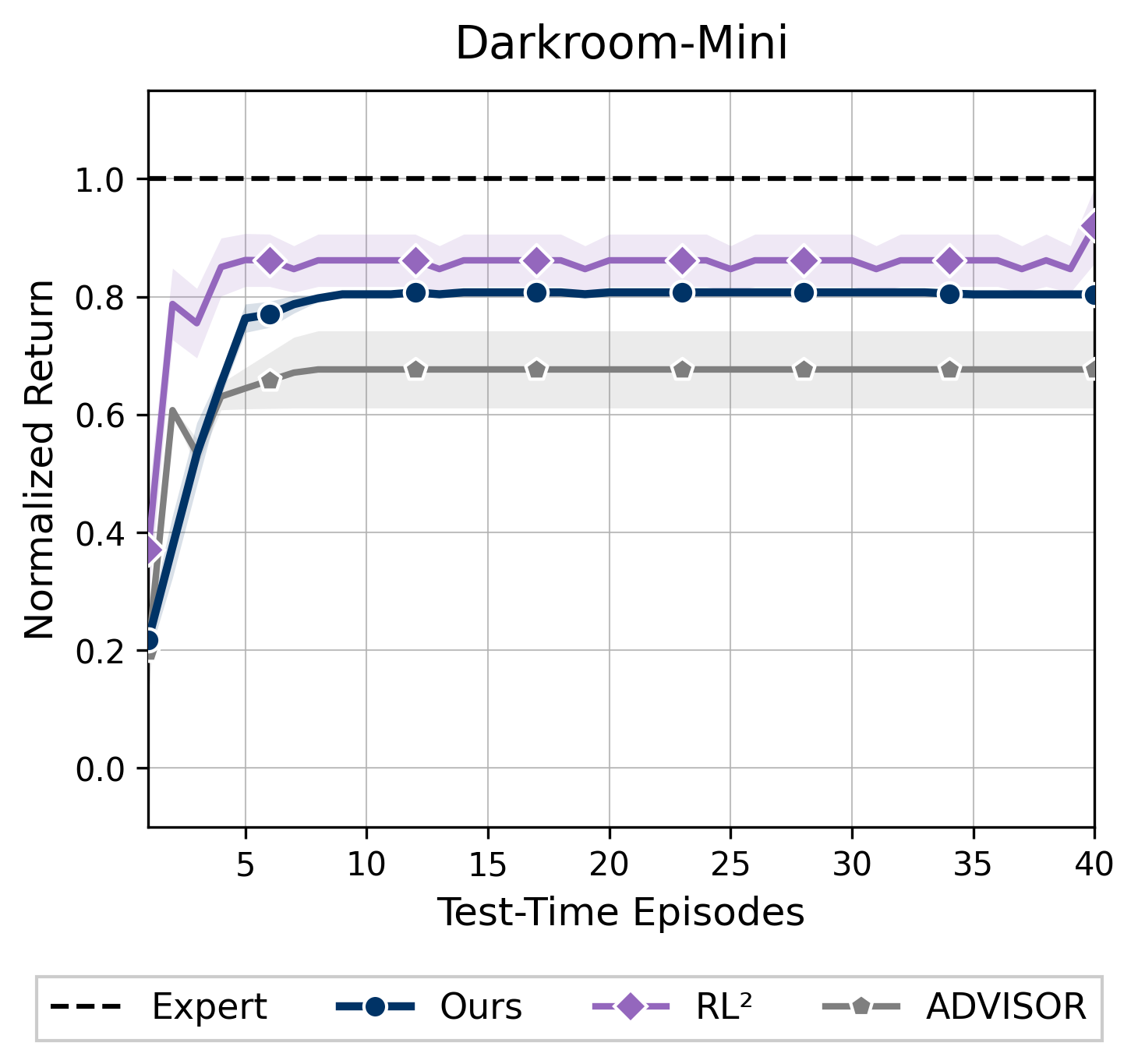}
    \caption{\footnotesize RL is effective over short horizons, but \MethodS{} uses supervised updates to be effective across long-horizon tasks as well.}
    \label{fig:darkroom_mini}
\end{wrapfigure}

In Figures~\ref{fig:incontext_pomdp_tasks} and~\ref{fig:pomdp_manipulation}, we observe that RL-based baselines (RL$^2$, BC+RL$^2$, and ADVISOR) severely underperform in some long-horizon sparse reward tasks. To further ablate this, we run the RL baselines and our method in a smaller version of the Darkroom environment, where the grid size is $5\times5$. We observe that RL baselines learn competitive search policies in these small-scale environments. \MethodS{} uses supervised learning to model expert actions while building exploration context via student rollouts, rather than relying on expert supervision and dynamic programming over random contexts to learn how to explore and exploit information. As a result, \MethodS{} performs better than RL baselines in high-dimensional, long-horizon settings.

\subsection{Does iterative context accumulation actually help \MethodS?}\label{app:iteration_ablation}\label{sec:context_length}

\begin{figure}[h!]
    \centering
    \includegraphics[width=\linewidth]{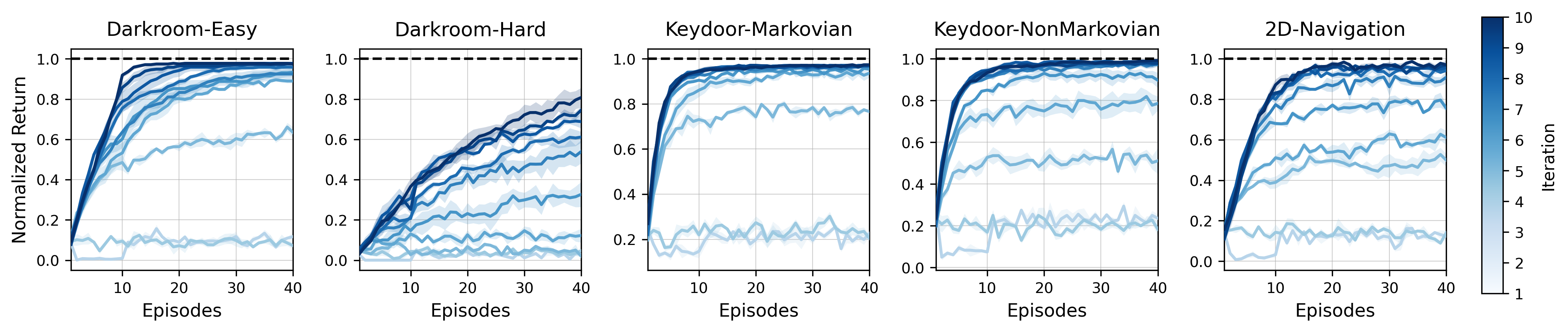}
    \caption{The test-time efficiency of \MethodS{} increases significantly with more iterations of asymmetric on-policy distillation, demonstrating that as the context dataset grows, the policy can adapt over longer horizons, making exploration more efficient.
    \label{fig:iteration_ablation}}
\end{figure}

Figure \ref{fig:habitat} shows exploration performance as a function of the number of in-context exploration iterations on a high-dimensional navigation task. In Habitat Object Search, performance improves steadily as additional exploration steps allow the policy to gather informative observations and refine its behavior, despite the complexity of the observations, almost converging to the posterior-sampling upper bound. Further, in Figure~\ref{fig:iteration_ablation}, we observe that the test-time efficiency of \MethodS{} improves significantly with more iterations. These results indicate that \MethodS{} effectively uses in-context interaction to accumulate task-relevant information, translating additional exploration into improved downstream task performance in realistic, partially observable environments, such as higher success rates in embodied object search in Habitat and in robot manipulation tasks. In Appendix~\ref{app:curriculum_ablation}, we further include ablations showing the benefits and robustness of \MethodS{} to the curriculum over exploration horizon across multiple iterations.

\subsection{How well does \MethodS{} generalize under shifted latent priors?}\label{app:shifted_priors}

\begin{wrapfigure}{r}{0.6\textwidth}
    \centering
    \includegraphics[width=\linewidth]{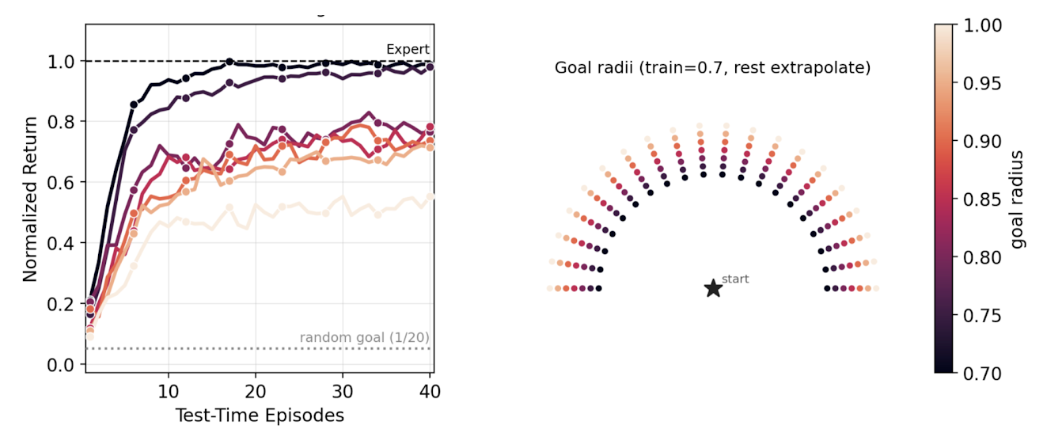}
    \caption{Generalization under shifted latent priors in 2D Navigation. Training goals lie at radius 0.7; evaluation goals are sampled at increasing radii up to 1.0. Performance degrades gradually with shift magnitude.}
    \label{fig:shifted_priors}
\end{wrapfigure}

In Figure~\ref{fig:shifted_priors}, we additionally measure how far generalization extends. In the 2D Navigation task, training goals are sampled on a semicircle of radius 0.7 from the origin, and at evaluation we sample goals at increasing radii from 0.7 (in-distribution) to 1.0 (far out-of-distribution), keeping the dynamics and observations unchanged. This isolates the shift to the latent (goal) distribution alone. As shown in Figure~\ref{fig:shifted_priors}, \MethodS{} retains near-optimal performance near the training distribution (normalized return of 0.99 at radius 0.7 and 0.98 at 0.75) and degrades gradually as the goals move farther away (0.41 at radius 1.0). Generally, amortized POMDP solvers~\cite{lee2023supervised, osband2013psrl, finn2017maml} that are learned via interaction or offline datasets suffer from a decline in performance as the train and test distributions diverge, and this observation is consistent with past results~\cite{nikulin2025xland100blargescalemultitaskdataset,yu2020meta}.

\section{Qualitative Results}\label{app:qualitative}

\subsection{\MethodS{} learns efficient exploration in robot manipulation under limited sensing and navigation}

\begin{wrapfigure}{l}{0.55\linewidth}
    \centering
    \includegraphics[width=\linewidth]{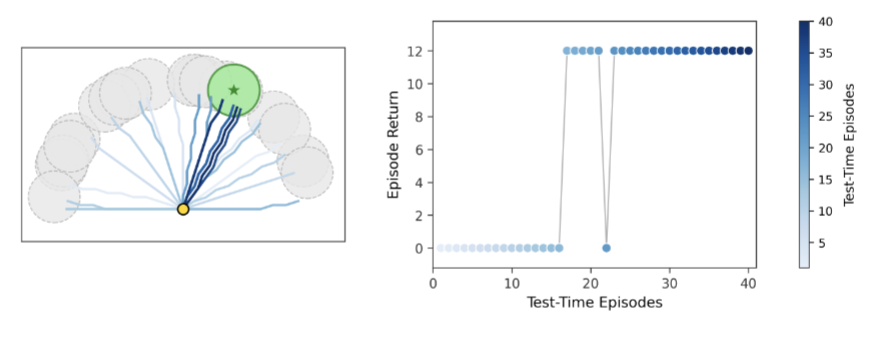}
    \caption{ In 2D navigation, \MethodS{} adapts its exploration based on context, converging to the true goal.}
        \label{fig:explore_2d}
\end{wrapfigure}

In Figures~\ref{fig:explore_wrist} and~\ref{fig:explore_2d}, we further compare the qualitative exploration behavior across the Peg-Insertion and 2D Navigation tasks. We observe that \MethodS{} enables active exploration-like behavior during manipulation, in which the robot arm moves efficiently across the table to locate the peg and the target. Similarly, in the 2D navigation maze task, \MethodS{} learns a policy that adaptively searches for the goal in unexplored regions and converges to high reward in subsequent episodes once it can identify the goal region from past trials. 
We include full videos of the policies on our website\footnote{Project webpage: \href{https://weirdlabuw.github.io/asteroid/}{weirdlabuw.github.io/asteroid}}, which also includes goal-directed exploration across different rooms in Habitat.

\subsection{\MethodS{} presents emergent in-context properties such as memory as a result of on-policy in-context learning}\label{sec:mem}

\begin{figure}[h!]
    \centering
    \begin{subfigure}{0.8\linewidth}
        \centering
        \includegraphics[width=\linewidth]{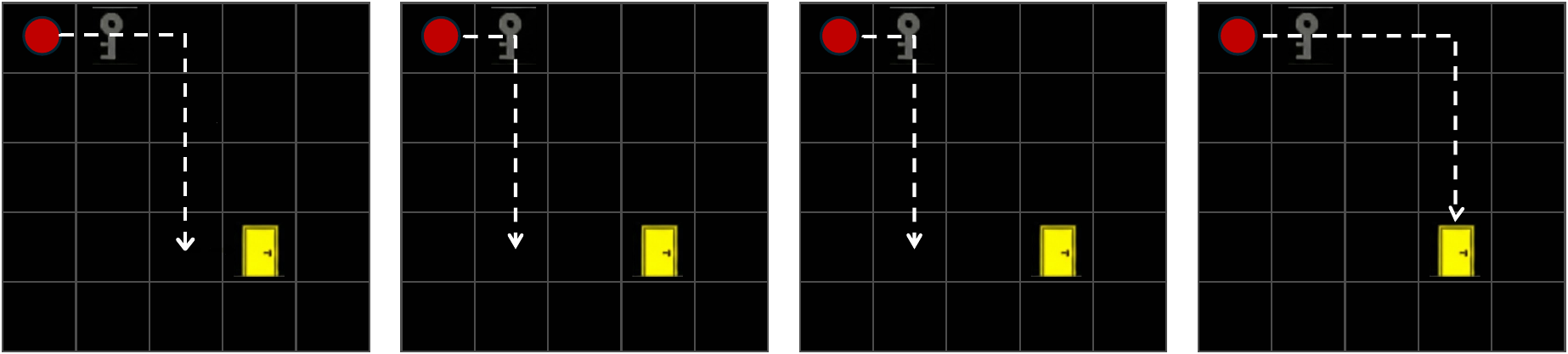}
        \caption{The agent finds the key in earlier episodes, can reference it back in its context, and always picks up the key before searching for the door.}
    \end{subfigure}\hfill
    \begin{subfigure}{0.8\linewidth}
        \centering
        \includegraphics[width=\linewidth]{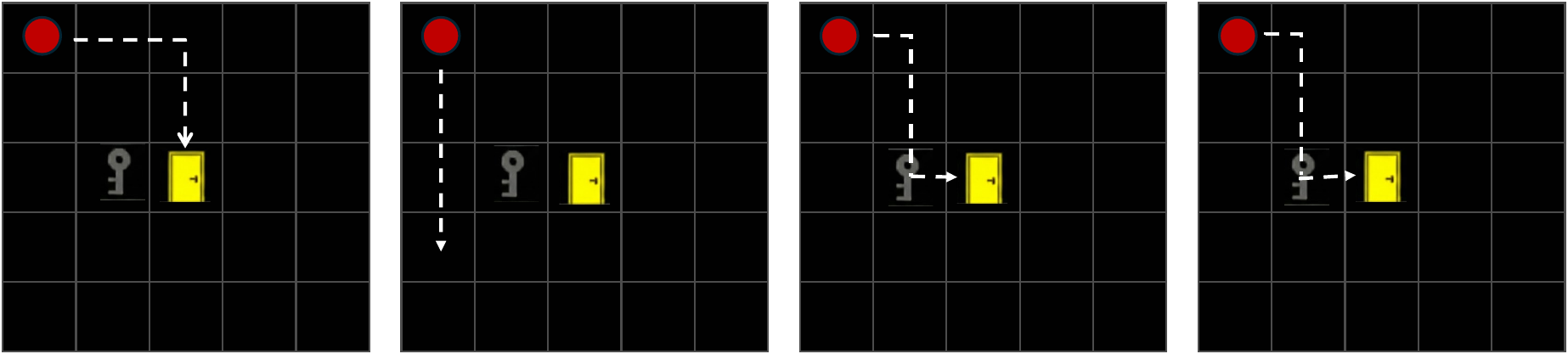}
        \caption{The agent finds the door in the first episode but gets no reward. So, in the next episodes, it looks for the key and memorizes the door location. As a result, in episode 3, it finds the key and goes directly to the door (without having to search for it, since it has memory).}
    \end{subfigure}\hfill
    \caption{In the Key-Door task, the agent explores to find both the key and the door to get +1 reward for each task. \MethodS{} learns an effective strategy: if it reaches the key or door first, it uses its context to remember the location, and upon finding the other goal, it can immediately complete the task without additional exploration.\label{fig:memory}}
\end{figure}

To investigate whether the policy learned with \MethodS{} actually attends to context, we construct a variant of the Key-Door environment in which the agent observes two boolean flags in its state whenever it lands on the key or the door during exploration. Policies trained with \MethodS{} exhibit emergent in-context memory. If the agent discovered the key or door in earlier attempts, it can later retrieve that information from context and use it efficiently.

\section{Per-step interactive imitation learning fails in history-conditioned asymmetric distillation}\label{app:dagger}

\begin{wrapfigure}{r}{0.5\linewidth}
    \centering
    \includegraphics[width=\linewidth]{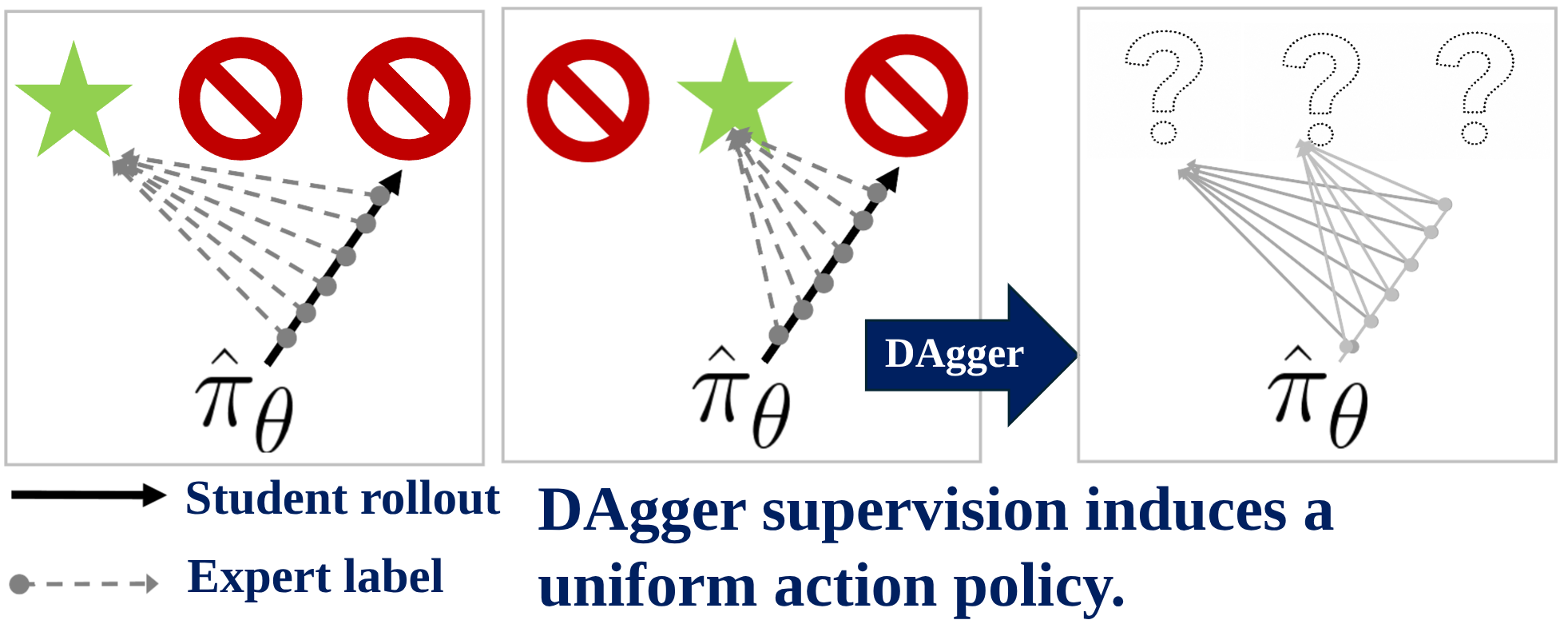}
    \caption{Under DAgger, the same student rollout receives per-step expert labels toward different goals depending on the hidden goal (left, middle). Aggregating these labels yields a policy that chooses a goal uniformly at each step (right) instead of committing to one hypothesis.}
    \label{fig:dagger}
\end{wrapfigure}

An extension to improve naive teacher--student distillation is to apply standard interactive learning methods, such as DAgger~\cite{ross2011reduction}, on a history-conditioned student policy $\pi_\theta(\cdot | o_t, h_t)$. However, these methods query the expert at every state the student visits, i.e., they train on $(h_t, o_t) \rightarrow a^*_t$, while \MethodS{} uses the student rollouts only as context and imitates the expert at the states it visits, i.e., $(h_t, o^*_t) \rightarrow a^*_t$. Further, in POMDPs, DAgger surprisingly converges to an exploration strategy that is generally worse than naive distillation. Consider that a policy has to take $H$ steps towards a potential goal to receive a reward and disambiguate between them. Here, $\pi_\theta(\cdot | o_t, h_t)$ cloned on expert behaviour would randomly pick a door and generate an $H$ step trajectory. Applying DAgger, an expert provides supervision at each step of the student rollout. In Figure~\ref{fig:dagger}, we see that under different latent settings, the same student trajectory receives multi-modal action targets at each step. Supervised learning on this aggregated dataset would instead learn a policy that chooses a random goal at each step, which does not correspond to meaningful, effective exploration. Over repeated interactions, the updated policy uniformly explores the environment to find the goal, leading to extremely high exploration budgets. Overall, DAgger underperforms as a baseline in partially observed domains, even below naive behaviour cloning (see Figure~\ref{fig:pomdp_manipulation}). Instead, \MethodS{} first collects student rollouts as context and then uses trajectory/context-level supervision. In BAMDPs, this expert supervision can also come from offline data, whereas DAgger queries an online expert. This encourages the policy to execute goal-directed actions over a longer horizon (rather than uniform dithering), resulting in much more efficient exploration, even in high-dimensional continuous action spaces (see Figure~\ref{fig:ant_behavior}).

\section{Didactic Example: \MethodS{} learns efficient posterior sampling\label{app:toy}}

\begin{figure}[!h]
    \centering
    \includegraphics[width=0.6\linewidth]{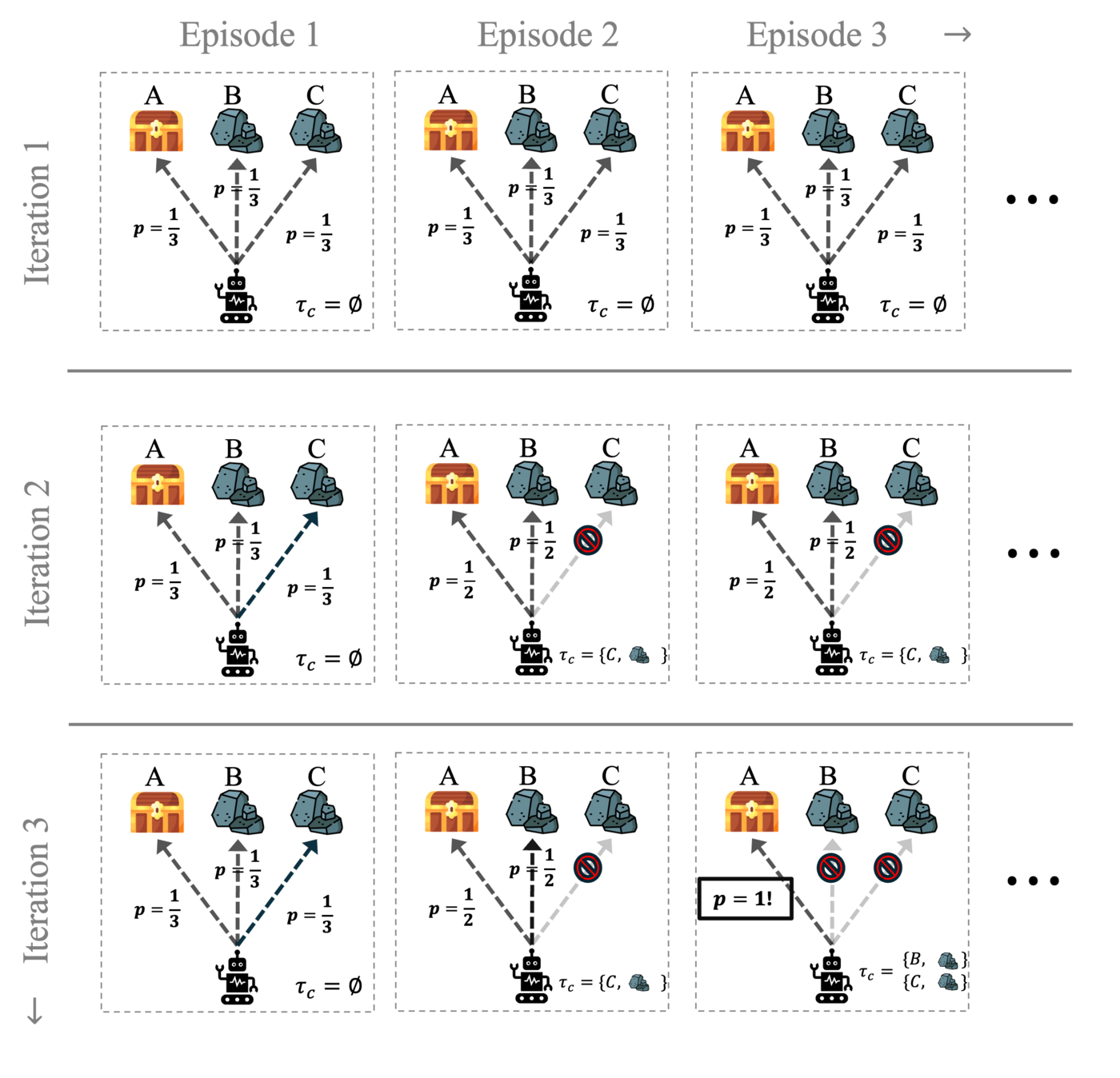}
    \caption{Toy example demonstrating that \MethodS{} learns adaptive in-context behavior, where it effectively explores based on history.}
    \label{fig:toy}
\end{figure}

Consider a POMDP with one hidden variable. For each task, a fixed goal $g \in \{A, B, C\}$ is sampled uniformly. The agent takes actions to move to one of the locations in $\{A, B, C\}$ and receives a reward of $1$ if it matches $g$ and $0$ otherwise.
A privileged teacher observes $g$ and always chooses the correct door, producing expert trajectories: $(A,1)$, $(B,1)$, or $(C,1)$ depending on the goal. If the student is trained only on these expert rollouts with no context (i.e., $\tau^c=\emptyset$), then at deployment it cannot infer $g$ and must choose uniformly, yielding expected reward $1/3$.

\textit{Dry run.}
At iteration $k=0$, we train on expert-only trajectories with $\tau^c=\emptyset$, so the student has no information about $g$ at decision time.
The resulting policy chooses $A, B, C$ uniformly, giving an expected reward of $1/3$.

Now deploy this student and collect on-policy contexts.
If the true goal is $g=C$, then across episodes the observed contexts are of the form $(A,0)$, $(B,0)$, or $(C,1)$, depending on which door was tried.
At iteration $k=1$, \MethodS{} pairs each such context with a privileged teacher trajectory generated under the same goal.
For example, conditioning on $\tau^c=(A,0)$ rules out $g=A$, so the teacher data for this context consists only of successful choices at the remaining doors, i.e., $(B,1)$ or $(C,1)$.
After distillation, the updated student still chooses uniformly when $\tau^c=\emptyset$, but it adapts after one failed attempt: given $\tau^c=(A,0)$ it chooses between $\{B, C\}$ uniformly, yielding expected reward $1/2$ on the next episode.

At iteration $k=2$, the student generates longer contexts such as $(A,0;B,0)$, $(A,0;C,1)$, etc.
A context like $(A,0; B,0)$ uniquely identifies $g=C$, so the teacher data for this context concentrates on choosing $C$.
After distillation, the student learns the corresponding rule: once it has observed two failures, it deterministically selects the remaining door and receives a reward of $1$.
Figure~\ref{fig:toy} depicts this progression: as $\tau^c$ grows with on-policy interaction, the student eliminates inconsistent goals based on context and its behavior approaches posterior sampling over $\{A, B,C\}$.

\paragraph{On-policy context enables more efficient learning than random exploration rollouts.}

\begin{figure}[!h]
    \centering
    \includegraphics[width=0.6\linewidth]{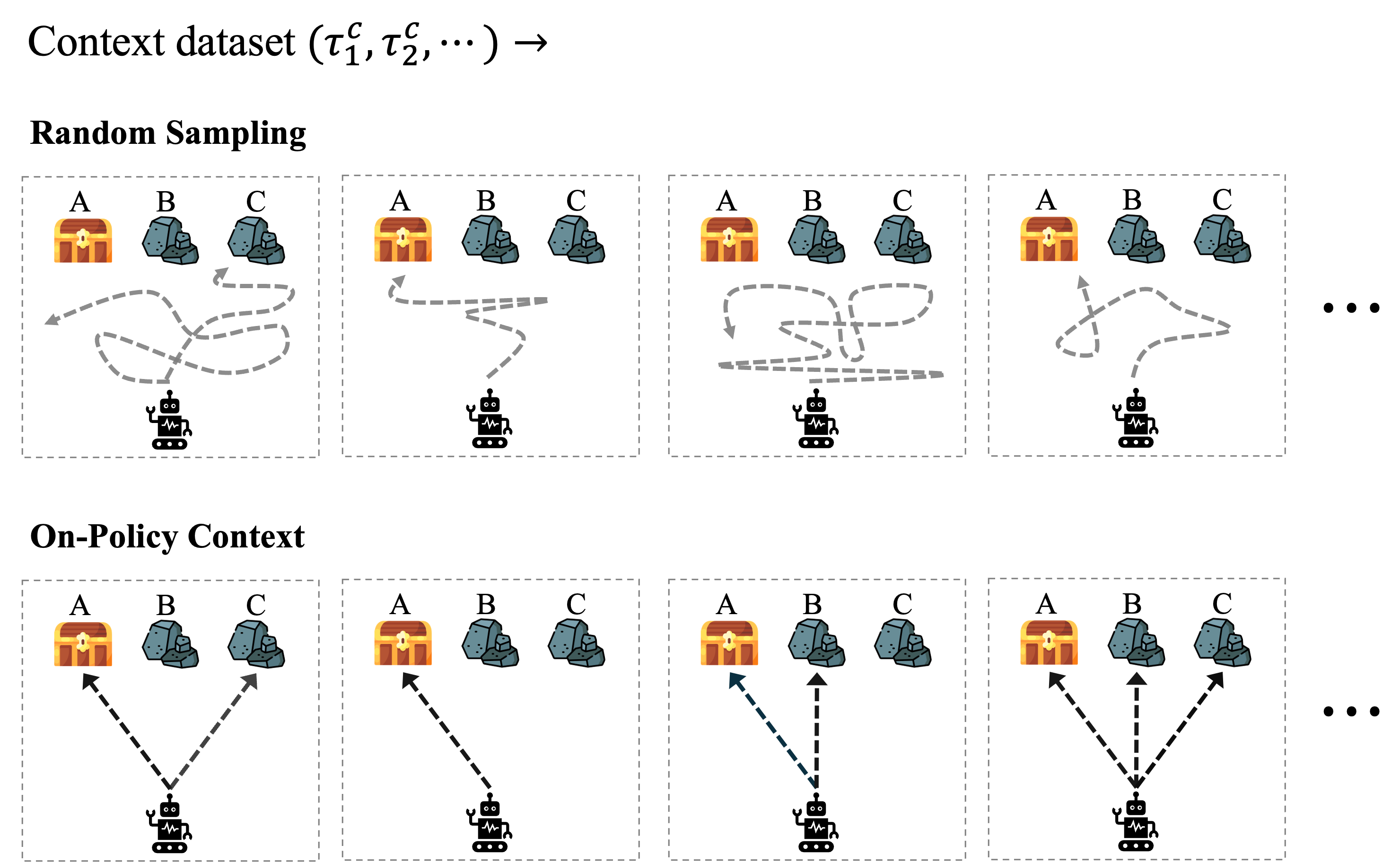}
    \caption{On-policy contexts ensure efficient coverage over relevant histories, while random exploration~\cite{lee2023supervised} can generate uninformative contextual information that harms training.}
    \label{fig:context}
\end{figure}

In this example, a context $\tau^c$ is useful only if it induces a non-trivial posterior over the goal, i.e., it eliminates some candidates in $\{A, B, C\}$ based on observed rewards.
Suppose instead that reaching each location requires multiple actions before any reward is revealed.
Then a large fraction of randomly generated rollouts will fail to reach any door within the context horizon (row 1 in Figure~\ref{fig:context}), producing trajectories whose rewards are uninformative and whose posterior over $g$ remains essentially unchanged.
By contrast, on-policy contexts produced by the student concentrate on trajectories that actually reach candidate doors, since the student is trained on teacher trajectories that are goal-directed and therefore yield plausible search behavior (row 2 in Figure~\ref{fig:context}). As a result, the contexts in $\mathcal{D}_k$ more often contain the key feedback events that reduce uncertainty about $g$, providing a higher-signal training distribution for learning a context-conditioned policy.

\section{Training and Implementation Details}\label{app:experiments}

\subsection{Training Details}

\paragraph{Model Architecture.} \MethodS{} uses a causal transformer similar to \citet{lee2023supervised, chen2021decisiontransformer}. Each token encodes the $\langle$observation, previous action, reward, terminal signal$\rangle$ tuple via an MLP encoder. For images, we encode observations with an ImageNet-pretrained ResNet18. For discrete actions, we predict logits over the action space, and for continuous actions, we predict the mean and standard deviation of a Gaussian.

\paragraph{Asymmetric Sampling of datasets.} For the robot manipulation tasks, to avoid overfitting, we skew the sampling ratio of the data collected across iterations to upweight the new data more. This significantly reduced overfitting to past data and enabled better policy learning.

\paragraph{Filtering for successful trajectories.} As \MethodS{} imitates privileged actions conditioned on context, we keep only the successful expert trajectories while collecting the datasets during each iteration. This ensures that the learned in-context policy samples from a distribution of expert actions that are optimal under the true parameters that could generate the given context.

\paragraph{Task Experts}

ASTEROID depends on a strong privileged expert but is agnostic to how it is generated: it may leverage simple planners, use RL, imitation learning, or hybrid methods~\citep{schulman2017ppo, yin2026emergent, zhang2026balanced}. Concretely: for all navigation and gridworld tasks, the experts are shortest-path planners. For manipulation, we train state-based policies via large-scale simulation RL~\citep{yin2026emergent} with hyperparameters from the original implementations. Privileged state makes these experts cheap to train. E.g., in peg insertion, a state-based teacher (object and goal 6-D pose) reaches 90+\% success in $\sim$8\,h, while a wrist-camera policy takes 4+ days to reach only $\sim$10\%. This state information is not available at deployment, and thus, we learn exploration via \MethodS. Most of our baselines also use privileged experts, so teacher cost is not exclusive to our method.

\paragraph{Compute.}
DPT, \MethodS, BC, and AAWR learn context-conditioned policies, and we use the same architecture and evaluation setup. Thus, these methods use the same tokens and computation during rollouts. During training, \MethodS{} additionally conditions on the expert observations, adding at most |T| tokens more than DPT, where |T| is the length of an episode. Unless noted otherwise, all models were trained on a single L40 GPU; training takes 1–6 h, and the main bottleneck is environment rollouts, not training. For Isaac Lab tasks, we used 2× RTX-4090s for our method and all baselines. We carefully match the compute in terms of environment interactions across all baselines (except when comparing efficiency). In all experiments, each baseline is trained on as much as or more data than \MethodS{}. ASTEROID runs $K$ iterations with $M$ environment steps each; all offline baselines (BC, DPT, SPOC, AAWR) train on a dataset of size $K \times M$. Figure~\ref{fig:pomdp_manipulation} plots training steps vs.\ performance for the manipulation baselines. RL baselines are trained to convergence. In Table~\ref{tab:data_parity}, we outline the parameters:

\begin{table}[h!]
\centering
\caption{Per-environment data and compute budget for ASTEROID and baselines.}
\label{tab:data_parity}
\resizebox{\linewidth}{!}{%
\begin{tabular}{lcccccc}
\toprule
Environment & $K$ (iters) & ASTEROID data (trajs) & Offline baselines data & RL baselines env steps & Expert labels (rollouts) & Teacher cost \\
\midrule
Gridworld (Darkroom, 2D-Nav) & 10 & $10 \times 1000 = 10$k (env horizon $H$) & 10k expert trajs & to convergence & 10k & negligible (planner) \\
Tactile-Pick & 3 & $3 \times 8192 \approx 24$k & $\approx$24k expert trajs & $\sim 10^8$ & $\approx$24k & $\sim$8\,h (state-based RL) \\
Peg-Insertion & 4 & $4 \times 4096 \approx 16$k & $\approx$16k expert trajs & $\sim 10^8$ & $\approx$16k & $\sim$8\,h (state-based RL) \\
Habitat & 5 & $5 \times 1024 = 5120$ & 5120 (Distilled Planner) & n/a & 5120 & negligible (planner) \\
\bottomrule
\end{tabular}%
}
\end{table}

\subsection{Tasks}

\paragraph{Wrist-Camera Peg-Insertion.}\cite{yin2026emergent}
A 7-DoF robot must locate a peg on a tabletop and insert it into the corresponding hole using only a wrist-mounted RGB camera and proprioception. This is a single-episode POMDP: the peg and arm are randomly reset at the start of each episode, and the robot must search under a limited camera field of view. We denote the privileged state as $s_t=(q_t,p_{\text{peg}},p_{\text{hole}})$, where $q_t$ is the robot proprioceptive state and $p_{\text{peg}},p_{\text{hole}}$ are the peg and hole positions. The student observes $o_t=(I_t^{\text{wrist}},q_t,a_{t-1})$, where $I_t^{\text{wrist}}$ is the wrist-camera RGB image and $a_{t-1}$ is the previous action. The reward indicates successful insertion, and expert trajectories are generated using large-scale RL as described in \citet{yin2026emergent}. An effective exploration strategy must scan the workspace, localize the peg and hole from partial views, and then execute the contact-rich insertion behavior.

\paragraph{Tactile-Pick.}\cite{yin2026emergent, higuera2023perceiving}
The robot must locate and pick up a cube using proprioceptive and tactile feedback, without visual observations. This is a single-episode POMDP: the cube pose is randomly reset and hidden, so the agent can only infer it through contact during the episode. We denote the privileged state as $s_t=(q_t,p_{\text{cube}})$, where $q_t$ is the robot proprioceptive state and $p_{\text{cube}}$ is the cube position. The student observes $o_t=(q_t,\tau_t, a_{t-1})$, where $\tau_t$ is the $2\times3$-dimensional force feedback from the two fingers of the parallel-jaw gripper. The reward indicates successful pickup, and expert trajectories are generated using large-scale RL as described in \citet{yin2026emergent}. An effective exploration strategy must probe the workspace, use contact events to infer the cube location, and then transition from search to grasping. We also evaluate a harder variant with observations $o_t=(\tau_t,a_{t-1})$, where the robot observes the world only through contact forces and past actions. In this setting, the robot starts from a random configuration and must use its interaction history to maintain a belief over both its own joint configuration and the cube position.

\paragraph{Darkroom.}\cite{MinigridMiniworld23,lee2023supervised}
The agent must find a hidden goal in a partially observable gridworld. We use three versions with different grid sizes: Mini ($5\times5$), Easy ($10\times10$), and Hard ($25\times25$). This is a BAMDP: the agent interacts with the environment over multiple test-time episodes, the goal remains fixed across episodes, and the agent is reset to $(0,0)$ at the start of each episode. We denote the privileged state as $s_t=(p_x,p_y,g_x,g_y)$, where $(p_x,p_y)$ is the agent position and $(g_x,g_y)$ is the hidden goal. The student observes $o_t=(p_x,p_y,r_t)$, where $r_t=\mathbbm{1}\{(p_x,p_y)=(g_x,g_y)\}$. The expert is obtained by planning the shortest path to the hidden goal using privileged goal information. An effective exploration strategy must use the interaction history to avoid revisiting locations that already produced zero reward, search new candidate goals, and exploit the goal once it is found. This is exactly the structure induced by \MethodS: given a student history, if a rewarding position has been observed, the expert labels return to it; otherwise, the expert labels target positions not yet ruled out by the history. We train and evaluate on an 80--20 random split of goal locations.

\paragraph{Key-Door.}\cite{MinigridMiniworld23,lee2023supervised}
The agent must first locate a hidden key and then navigate to a door; the key is picked up and the door is opened by moving to their respective positions. This is a BAMDP with non-Markovian observations: the agent interacts across multiple test-time episodes, and optimal behavior depends on remembering which locations have been searched and whether the key has already been acquired. We denote the privileged state as
$s_t=(p_x,p_y,k_x,k_y,d_x,d_y,\mathbbm{1}\{\text{has key}\})$,
where $(p_x,p_y)$ is the agent position, $(k_x,k_y)$ is the key position, $(d_x,d_y)$ is the door position, and the final bit indicates whether the agent has the key. The student observes $o_t=(p_x,p_y,r_t)$. In the Markovian version, the agent receives a reward of $+1$ at every timestep after acquiring the key, effectively revealing the task stage. This dense reward also leads the RL baselines to get stuck in a local minimum and perform poorly. In the non-Markovian version, the agent receives a reward of $+1$ only at the timestep when it picks up the key, so it must use its history to decide whether to continue searching for the key or switch to searching for the door. In both versions, the agent receives an additional reward of $+1$ upon reaching the door with the key, and the episode ends. An effective exploration strategy must search for the key, remember how to acquire it, and then search for or navigate to the door.

\paragraph{Ant and 2D Navigation.}\cite{Rakelly2019EfficientOM, zintgraf2021varibad}
The agent must discover and reach a hidden goal, either in a continuous 2D navigation domain or with a high-dimensional quadruped. These are BAMDPs: the goal remains fixed across multiple episodes within a trajectory, while the agent resets between episodes and must use in-context exploration to identify unvisited candidate goals. We denote the privileged state as $s_t=(x_t,g)$, where $x_t$ is the agent state and $g$ is the hidden goal. The student observes $o_t=(x_t,r_t)$, but not $g$. The reward is sparse, with $r_t=\mathbbm{1}\{\|x_t-g\|_2\leq\epsilon\}$ for a success threshold $\epsilon$. The expert uses privileged access to $g$ to produce goal-directed behavior. We train and evaluate on an 80--20 split of goal locations. An effective exploration strategy must use history to avoid revisiting unlikely goals, sample unexplored regions, and exploit the goal once discovered.

\paragraph{Habitat Object Search.}\citep{puig2024habitat}
The agent must find a target object in a realistic indoor environment, where the object is placed at one of $k$ possible locations. This is a partially observed task: the agent interacts for a single episode, while the hidden object location remains fixed throughout the episode and must be inferred from experience. We denote the privileged state as $s_t=(\mathcal{M},x_t,g)$, where $\mathcal{M}$ is the map of the house, $x_t$ is the robot pose, and $g\in\{1,\ldots,k\}$ is the hidden object location. The student observes $o_t=(x_t,I_t,b_t)$, where $I_t$ is the onboard camera observation and $b_t\in\{0,1\}$ is the output of an object detector applied to the current image. The reward is sparse and indicates whether the target object has been found. The expert is an oracle planner that uses the map and object location to navigate to the target. The student policy must use history to eliminate previously checked locations, search among the remaining candidates, and stop once the detector identifies the object. 

\paragraph{Procgen Maze Navigation.}
The agent must navigate a procedurally generated maze to find a hidden goal. This is a single-episode POMDP: the maze layout, initial position, and goal location are unknown, and the agent observes only a local $3\times3$ window centered at its current position. We denote the privileged state as $s_t=(\mathcal{M},p_t,g)$, where $\mathcal{M}$ is the maze layout, $p_t$ is the agent position, and $g$ is the goal location. The student observes $o_t=\mathcal{M}_{3\times3}(p_t)$, the local egocentric crop centered on the agent, but not the global map or the goal. The reward is sparse and indicates whether the agent reaches the goal. The expert plans the shortest path using Q-learning and privileged access to the full maze layout and goal location. An effective exploration strategy must use temporal context to build an implicit belief over the maze, avoid repeatedly visiting uninformative regions, and search until the goal is found. In Fixed Maze Navigation, we evaluate a simplified variant with a fixed maze layout but random initial and goal positions, isolating the challenge of goal-directed exploration under local $3\times3$ observations from the additional generalization required by procedurally generated layouts.

\textbf{Real Robot Manipulation.} We deploy on a UR5e arm with a parallel-jaw gripper.
The cube is placed at a random position within a 10\,cm $\times$ 2\,cm region on the table,
and the arm starts from a fixed initial configuration. The student observes only proprioception,
i.e., joint angles and gripper jaw position, with no cameras or touch sensing, so the cube
location can only be inferred from whether a grasp attempt closes on the object. Here, if the robot attempts a grasp at a false location, the gripper closes all the way, eliminating that location. Prior work~\cite{bronars2026blindpreciseassemblyproprioceptive} also explores learning such proprioception-only behavior via RL for insertion tasks. We train the privileged expert in Isaac Lab with large-scale RL and domain randomization over physics parameters following~\cite{mittal2023orbit,yin2026emergent}. We train both ASTEROID and the BC baseline in simulation with the same partial observations, then transfer them to hardware. We evaluate each method on 10 trials with the same set of cube positions, and count a trial as successful if the cube is lifted within 20 seconds.

\section{Baseline Implementation Details}

\paragraph{\textbf{DPT}~\citep{lee2023supervised}.}
We use the official implementation of DPT with the default hyperparameters from \citet{lee2023supervised}. For a fair comparison, we set the dataset size to match the total data collected over all iterations of \MethodS.

\paragraph{\textbf{SPOC}~\citep{spoc2023ehsani}.}
SPOC augments naive distillation with sampled positional embeddings. We implement SPOC within our codebase and train it on the same total amount of data used across all iterations of \MethodS.

\paragraph{\textbf{RL baselines}.}
For continuous-control RL baselines, we use high-quality implementations from RSL-RL~\citep{mittal2023orbit} and OmniReset~\citep{yin2026emergent}, which support large-scale GPU-parallel training. We implement ADVISOR in RSL-RL and tune its $\alpha$ and $\beta$ parameters over the ranges reported in the original paper. For the meta-RL baselines, we use RL$^2$/PPO baselines~\citep{duan2016rl} and adapted versions that incorporate privileged expert supervision, thereby reducing the computational burden of learning exploration and task execution from sparse rewards alone. For gridworld tasks, we use Stable-Baselines3 PPO implementations with task-specific modifications. On the wrist-camera task, we run the vision-based RL baseline for two seeds due to computational cost: $10^8$ environment steps take roughly four days on an RTX-enabled GPU. Despite this scale, the learned exploration policy achieves only $10\%$ success.

\paragraph{\textbf{AAWR}~\citep{hu2025realworld}.}
We use an extended version of the original AAWR implementation. To construct the initial offline dataset, we add Gaussian noise to generate broad coverage over possible histories. As shown in Figure~\ref{fig:aawr_ablation}, AAWR is sensitive to this initial dataset coverage, whereas \MethodS{} avoids this dependence by collecting context on-policy. AAWR and SPOC were not originally designed for BAMDPs, but we include them in both settings for completeness.

\paragraph{\textbf{Distilled Planner}.}
For Habitat object search, we construct an upper-bound baseline by distilling a posterior-sampling planner under a uniform prior. The dataset consists of expert trajectories that visit the possible object locations in a fixed order and terminate upon finding the target. This produces goal-directed exploration behavior, which we distill into a transformer policy. Unlike \MethodS, however, this baseline assumes access to the full set of possible object locations and the underlying task structure during training, which is unrealistic.

\subsection{Pseudocode}
\label{app:algorithm}
\begin{algorithm}[H]
\caption{\MethodS{} for general POMDPs}
\label{alg:method2}
\begin{algorithmic}[1]
\setlength{\itemsep}{0pt}
\setlength{\parskip}{0pt}

\REQUIRE iterations $K$, samples per iteration $N$, horizon $H$, privileged teacher $\pi^*$, curriculum $\{[\alpha_k, \beta_k]\}_{k=1}^K$
\STATE Initialize $\mathcal{D}_0 \leftarrow \text{expert trajectories}$, student $\pi_{\theta_0} \leftarrow \textit{i.i.d.}\;\text{policy}$

\FOR{$k=1,\dots,K$}
    \STATE $\mathcal{D}_{k} \leftarrow \mathcal{D}_{k-1}$
    \STATE \texttt{\textcolor{NavyBlue}{// Collect on-policy contexts and privileged labels}}
    \FOR{$i=1,\dots,N$}
        \STATE Sample $T$ uniformly in $[\alpha_k H, \beta_k H]$ \texttt{\textcolor{NavyBlue}{// Curriculum (Table~\ref{tab:hyperparameters_isaac})}}
        \STATE Roll out $\pi_{\theta_{k-1}}$ in the POMDP for $T$ steps: $h_k^{(i)} = (o_0, a_0, r_0 \cdots o_{T-1}, a_{T-1}, r_{T-1})$
        \STATE Roll out $\pi^*$ in the same episode from $o_T$:
        $
            \tau^{*,(i)} = (o^*_{T},a^*_{T},r^*_{T},\dots,o^*_{H-1},a^*_{H-1},r^*_{H-1})
        $
        \STATE Add labels
        $
        \mathcal{D}_{k}
        \leftarrow
        \mathcal{D}_{k}
        \cup
        \{(h_k^{(i)}, o^*_{T:t}, r^*_{T:t-1}, a^*_t)\}_{t=T}^{H-1}
        $
    \ENDFOR

    \STATE \texttt{\textcolor{NavyBlue}{// Update student by conditional imitation}}
    \STATE Train $\pi_{\theta_{k}}$ on $\mathcal{D}_{k}$ using Eq.~\ref{eq:loss}
\ENDFOR

\STATE \textbf{return} $\pi_{\theta_K}$

\end{algorithmic}
\end{algorithm}

\subsection{Hyperparameters}

\begin{table}[H]
\centering
\caption{\footnotesize Hyperparameters for \MethodS{} on Gridworld tasks.}
\label{tab:hyperparameters}
\begin{tabular}{lc}
\toprule
\textbf{Hyperparameter} & \textbf{Value} \\
\midrule
\multicolumn{2}{c}{\textit{Training}} \\
\midrule
Dataset size (per iteration) & 1000 \\
\MethodS{} iterations & 10 \\
Batch size & 64 \\
Learning rate & $1 \times 10^{-4}$ \\
Weight decay & $1 \times 10^{-4}$ \\
Number of training epochs (per iteration) & 100 \\
Warmup ratio & 0.03 \\
Gradient clipping & 1.0 \\
Optimizer & AdamW \\
LR scheduler & Warmup + Cosine Decay \\
\midrule
\multicolumn{2}{c}{\textit{Model Architecture (Decision Transformer~\citep{chen2021decisiontransformer})}} \\
\midrule
Architecture & GPT-2 \\
Tokenizer & Project $(o_t,a_{t-1},r_{t-1})$ linearly \\
Parameters & $\sim$3M \\
Number of layers & 4 \\
Number of attention heads & 4 \\
Embedding dimension & 256 \\
Dropout & 0.1 \\
Max context length & $H \times K$ \\
\midrule
\multicolumn{2}{c}{\textit{Evaluation}} \\
\midrule
Test-time episodes & 40 \\
Episode length & 100 \\
\bottomrule
\end{tabular}
\vspace{0.5em}
\end{table}

\begin{table}[H]
\centering
\caption{\footnotesize Hyperparameters for \MethodS{} on Isaac Lab tasks. The exploration horizon denotes the $[\min, \max]$ fraction of the episode during which the exploration policy acts before handing off to the expert.}
\label{tab:hyperparameters_isaac}
\begin{tabular}{lc}
\toprule
\textbf{Hyperparameter} & \textbf{Value} \\
\midrule
\multicolumn{2}{c}{\textit{Training}} \\
\midrule
Demonstrations per iteration & 8192 for Tactile-Pick / 4096 for Peg-Insertion \\
Training steps per iteration & 50,000 gradient steps \\
Learning rate & $1 \times 10^{-4}$ \\
Optimizer & AdamW \\
\midrule
\multicolumn{2}{c}{\textit{Curriculum Schedule} (Ratio of episode length)} \\
\midrule
Exploration horizon (iteration 1) & $[0.2, 0.5]$ \\
Exploration horizon (iteration 2) & $[0.3, 0.8]$ \\
Exploration horizon (iteration 3) & $[0.4, 0.9]$ \\
\midrule
\multicolumn{2}{c}{\textit{Evaluation}} \\
\midrule
Evaluation episodes & 64 \\
Episode length & 240 \\
\bottomrule
\end{tabular}
\vspace{0.5em}
\end{table}

\newpage
\section{Additional Theory}
\label{app:theory}

We present a theoretical analysis of properties of learning in-context exploration using \MethodS{}. We discuss why on-policy contexts help: since \MethodS{} updates the policy conditioned on the student's own histories, it behaves like DAgger~\citep{ross2011reduction} in history space, while an offline in-context learner behaves like behaviour cloning. We analyse the BAMDP setting of Algorithm~\ref{alg:method}. Reusing notation from Section~\ref{sec:prelim}, $h$ denotes the student's history, $d_\pi$ denotes the history distribution induced by a policy $\pi$, $d_{\mathcal{D}}$ denotes the empirical history distribution in a dataset, $\pi^*(a\mid o, z)$ denotes the privileged teacher, and $\pi_\theta(a\mid h)$ denotes the history-conditioned student. We write $\pi_k \equiv \pi_{\theta_k}$ for the student iterate at \MethodS{}  iteration $k$.

\subsection{Preliminaries}
\label{app:theory:bamdp}

Following PSRL~\citep{osband2013psrl}, we analyse ASTEROID at the episode level: at the
start of each episode, the policy selects a hypothesis $\tilde{z}$ and executes the teacher
$\pi^*(\cdot \mid \cdot, \tilde{z})$ for the full episode. Here, the student induces an
episode-level policy $\pi(\tilde{z} \mid h)$. We define the posterior-sampling policy
\[
    \pi^{\mathrm{PS}}(\tilde{z} \mid h) := p(\tilde{z} \mid h),
\]
which samples a hypothesis from the posterior and commits to it for the episode. This produces a policy $\pi^\text{PS}(a|o,h) = \int \pi^*(a|o,z) \;\mathrm{p}(z|h)\; dz$. Here, we evaluate the policy over $K$ online
episodes of fixed episode horizon $H$. Per-episode returns are normalised to $[0,1]$.
 
Let $d^{\pi}_{k}$ denote the distribution of context at the start of episode $k$ when rolling out $\pi$: the history $h$ contains the first $k-1$ episodes. We measure the student's error in episode $k$ by its deviation from posterior sampling on its own contexts,
\begin{equation}
    e_k(\pi)
    :=
    \mathbb{E}_{h\sim d^{\pi}_{k}}
    \left[
        \mathrm{TV}\!\left(
            \pi(\cdot\mid h),\,
            \pi^{\mathrm{PS}}(\cdot\mid h)
        \right)
    \right].
    \label{eq:ek}
\end{equation}
Minimising \eqref{eq:loss} controls this via Pinsker's inequality. 

\begin{assumption}[Compliance]
\label{ass:compliance}
The policy used to generate context is unprivileged: at time $t$, its action
depends only on the observed history, not directly on the latent variable
$z$.
\end{assumption}
\noindent This mirrors Assumption~1 of DPT \citep{lee2023supervised}.

\begin{assumption}[On-policy support coverage]
\label{ass:support}
At \MethodS{}  iteration $k$, the aggregated dataset covers the support of the
histories visited by the current rollout policy up to context of $k$ episodes.
\end{assumption}

\begin{assumption}[Realizability and exact optimization]
\label{ass:realizability}
The policy class $\Pi$ contains $\pi^{\mathrm{PS}}$, and at every iteration
the conditional maximum-likelihood objective in ~\eqref{eq:loss} is
minimised exactly in the population limit.
\end{assumption}

\begin{assumption}[Bounded episode cost]
\label{assump:recover}
A wrong hypothesis in episode $k$ affects the return only through that episode, i.e.,
the episode-$k$ return gap between $\pi$ and $\pi^{\mathrm{PS}}$ is at most $e_k(\pi)$. Summing over episodes gives $J(\pi^{\mathrm{PS}}) - J(\pi) \le \textstyle\sum_{k=1}^K e_k(\pi).$
\end{assumption}

\begin{remark}
\label{rem:student_compliant}
The student $\pi_\theta$ acts only on its observed history and never on $z$,
so every context generated by \MethodS{}  rollouts satisfies
Assumption~\ref{ass:compliance} by construction.
\end{remark}

\subsection{Privileged Labels Provide Posterior-Sampling Supervision}
\label{app:theory:ps-labels}

The key observation is that privileged labels are posterior-sampling labels
when conditioned on compliant histories. A history is compliant if it is
generated by a policy that does not directly observe the latent variable
$z$, as in prior work on learning a posterior from in-context
supervision~\citep{lee2023supervised}. Intuitively, compliance ensures that
the history reveals information about $z$ only through observations,
actions, and rewards available to the student, not through hidden privileged
information.

\begin{lemma}[Privileged labels equal posterior-sampling labels]
\label{lem:teacher_ps}
Let $h$ be a compliant history and $o$ a query observation. If the privileged
teacher label is sampled as $a^* \sim \pi^*(\cdot\mid o, z)$, then
\[
    \mathbb{P}(a^*=a\mid h, o)
    =
    \pi^{\mathrm{PS}}(a\mid h, o).
\]
\end{lemma}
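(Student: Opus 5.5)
The plan is to marginalize over the latent variable $z$ using the law of total probability, and then identify the posterior $p(z\mid h,o)$ as the posterior-sampling weights. Concretely, I would first write
\[
\mathbb{P}(a^*=a\mid h,o)=\int \mathbb{P}(a^*=a\mid z,h,o)\,p(z\mid h,o)\,dz .
\]
The label is drawn as $a^*\sim\pi^*(\cdot\mid o,z)$ using only $(o,z)$ and fresh teacher randomness. It is therefore conditionally independent of $h$ given $(o,z)$, so the inner term is $\pi^*(a\mid o,z)$. What remains is to show that the weight $p(z\mid h,o)$ appearing here is exactly the Bayesian posterior used to define $\pi^{\mathrm{PS}}(a\mid h,o)=\int\pi^*(a\mid o,z)\,p(z\mid h,o)\,dz$ in the preliminaries.

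The key step, and the only nontrivial one, is to justify that the data-generating conditional of $z$ given $(h,o)$ equals the posterior computed from the prior $p(z)$ and the environment likelihood alone. I would factor the joint law of $(z,h,o)$ under the data-generating process as
\[
p(z)\prod_{t}\pi_{\mathrm{ctx}}(a_t\mid h_t)\,P_z(o_{t+1},r_t\mid h_t,a_t)\cdot q(o\mid h,z),
\]
where $\pi_{\mathrm{ctx}}$ is the context policy and $q$ is the law of the query observation. By Assumption~\ref{ass:compliance} (and Remark~\ref{rem:student_compliant} for \MethodS{} rollouts), the action factors $\pi_{\mathrm{ctx}}(a_t\mid h_t)$ do not depend on $z$. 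They therefore cancel in Bayes' rule between numerator and normalizer, leaving $p(z\mid h,o)\propto p(z)\prod_t P_z(\cdot)\,q(o\mid h,z)$. This is precisely the posterior that an idealized Bayesian observer would compute regardless of which compliant policy generated $h$. The query observation, which in \MethodS{} comes from an expert rollout under the same $z$ (BAMDP, episode start), enters only through its own $z$-dependent likelihood, and so it is likewise a genuine observation of $z$.

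The main obstacle I expect is making precise what "the" posterior $p(z\mid h,o)$ in $\pi^{\mathrm{PS}}$ means. In particular, I need to confirm that the query distribution $q$ is the same object in the definition of $\pi^{\mathrm{PS}}$ and in the data. My first approach would be to define $\pi^{\mathrm{PS}}$ with respect to the true model likelihoods $P_z$ and $q(\cdot\mid h,z)$, which makes the identification immediate. If the paper instead intends $p(z\mid h)$ (episode start, where $o=o_0$ is $z$-independent), I would note that $q(o\mid h,z)$ is then constant in $z$ and drops out, giving $p(z\mid h,o)=p(z\mid h)$.

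Substituting the identified posterior back into the marginalization then yields the claimed equality $\mathbb{P}(a^*=a\mid h,o)=\pi^{\mathrm{PS}}(a\mid h,o)$.
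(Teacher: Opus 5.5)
Your argument is correct and is essentially the DPT argument the paper defers to: marginalize over $z$, use that the label depends only on $(o,z)$, and invoke compliance so that the context-policy factors cancel and the conditional of $z$ given $h$ is the Bayesian posterior. Your extra care about the query is warranted, because the paper inherits DPT's setting where queries are drawn independently of $z$. Collapsing $p(z\mid h,o)$ to the appendix's $p(z\mid h)$ needs $q(o\mid h,z)$ to be constant in $z$. That holds for episode-start queries but not for the mid-episode expert prefixes $o^*_{0:t}$ the method actually uses, unless, as you propose, the query likelihood is folded into the posterior.
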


This lemma says that we do not need to explicitly infer $z$: if we train on privileged labels conditioned on compliant histories, maximum likelihood directly trains the posterior-predictive policy. The proof follows \citet{lee2023supervised}. 

\begin{theorem}[\MethodS{}  approximates posterior sampling on covered histories]
\label{thm:asteroid_ps}
Under Assumptions~\ref{ass:compliance}--\ref{ass:realizability}, at every
iteration $k$, for every context $h$ in the support of the aggregated
on-policy dataset $\mathcal{D}_{k}$,
\[
    \pi_{k}(\cdot\mid h)
    =
    \pi^{\mathrm{PS}}(\cdot\mid h).
\]
Consequently, as the context horizon reaches the full online interaction
horizon, \MethodS{}  matches posterior sampling on the covered trajectory
support.
\end{theorem}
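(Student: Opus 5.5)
The plan is to show that the population minimizer of the conditional log-loss \eqref{eq:loss} on $\mathcal{D}_k$ is the conditional label distribution $\mathbb{P}(a^*=\cdot\mid h)$. By Lemma~\ref{lem:teacher_ps}, that distribution equals $\pi^{\mathrm{PS}}(\cdot\mid h)$. Realizability then lets the learned policy attain it. The argument works pointwise in $h$, so it holds for every $k$ without needing induction over iterations, apart from one thing: we must check that all contexts in $\mathcal{D}_k$ are compliant.

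\textbf{Step 1 (compliance of all data in $\mathcal{D}_k$).} $\mathcal{D}_k$ aggregates $\mathcal{D}_0$ (empty context) with contexts rolled out by $\pi_0,\dots,\pi_{k-1}$. Each $\pi_j$ conditions only on observed history, so Remark~\ref{rem:student_compliant} and Assumption~\ref{ass:compliance} apply. The data-generating process is therefore a mixture: draw $z\sim p(z)$, pick an iterate $j$ with some mixing weight independent of $z$, roll out $\pi_j$ to obtain $h$, and sample expert labels under the same $z$. Because neither the mixing index nor any student action depends on $z$ except through observed history, the joint law factors as
\[
p(z)\,q(j)\prod_t \pi_j(a_t\mid h_t)\,P(o_{t+1},r_t\mid h_t,a_t,z).
\]
Hence $p(z\mid h,j)\propto p(z)\prod_t P(o_{t+1},r_t\mid\cdot,z)$ does not depend on $j$. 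It equals the posterior $p(z\mid h)$, with the policy terms cancelling. This mixture-of-policies cancellation is the step I expect to be the main obstacle. The pooled data is not generated by a single policy, so I would check carefully that the policy-likelihood terms cancel even after marginalizing over $j$. They do, since the posterior is the same for each $j$.

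\textbf{Step 2 (labels are PS labels).} Apply Lemma~\ref{lem:teacher_ps} with the expert query $(o^*_{0:t},r^*_{0:t-1})$ appended to $h$. This gives, for every $(h,o)$ in the support,
\[
\mathbb{P}_{\mathcal{D}_k}(a^*=a\mid h,o)=\int\pi^*(a\mid o,z)\,p(z\mid h,o)\,dz=\pi^{\mathrm{PS}}(a\mid h,o).
\]
At the episode level the same computation gives $\mathbb{P}(\tilde z\mid h)=p(\tilde z\mid h)$. The expert query part is itself a deterministic function of $z$ and the environment, so conditioning on it is again Bayesian updating of $z$.

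\textbf{Step 3 (population MLE).} Decompose the loss as
\[
\mathbb{E}_{(h,o)\sim d_{\mathcal{D}_k}}\bigl[\mathrm{KL}\bigl(\mathbb{P}(\cdot\mid h,o)\,\|\,\pi_\theta(\cdot\mid h,o)\bigr)\bigr]+\text{const}.
\]
This is minimized exactly when $\pi_\theta(\cdot\mid h,o)=\mathbb{P}(\cdot\mid h,o)$ for $d_{\mathcal{D}_k}$-almost every $(h,o)$. By Assumption~\ref{ass:realizability}, $\pi^{\mathrm{PS}}\in\Pi$ achieves this and the optimizer attains it. So $\pi_k=\pi^{\mathrm{PS}}$ on $\operatorname{supp}(d_{\mathcal{D}_k})$. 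Pointwise equality on atoms follows for discrete histories, and almost-everywhere equality holds in general.

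\textbf{Step 4 (consequence).} By Assumption~\ref{ass:support}, $\operatorname{supp}(d^{\pi_k}_{j})\subseteq\operatorname{supp}(d_{\mathcal{D}_k})$ for all contexts of up to $k$ episodes. Once $k$ reaches the full horizon $K$, every history the deployed student visits is covered, so on these histories it coincides with $\pi^{\mathrm{PS}}$. An induction on the episode index then shows that the trajectory laws of $\pi_K$ and $\pi^{\mathrm{PS}}$ agree: both policies act identically on the histories where mass lies, so the next-step history distributions coincide. This gives matching on the covered trajectory support.
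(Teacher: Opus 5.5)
Your Steps 1--3 are correct and take a different route from the paper. The paper inducts over iterations. It uses the hypothesis that $\pi_{k-1}$ already equals $\pi^{\mathrm{PS}}$ on shorter contexts to conclude that the newly collected contexts are distributed as posterior sampling would generate them. You observe that the displayed on-support identity needs none of this. Compliance holds for every student iterate, whether or not it is posterior sampling. So Lemma~\ref{lem:teacher_ps}, realizability and exact population MLE give $\pi_k(\cdot\mid h)=\pi^{\mathrm{PS}}(\cdot\mid h)$ pointwise on $\operatorname{supp}(d_{\mathcal{D}_k})$, whichever iterate produced $h$. Your cancellation of the policy factors in the pooled mixture is a step the paper never addresses. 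It is vacuous for Algorithm~\ref{alg:method}, where $k$-episode contexts come only from $\pi_{k-1}$. It becomes necessary once contexts of equal length come from several iterates, as with the overlapping curricula of Algorithm~\ref{alg:method2}.

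The gap is in Step 4. You use Assumption~\ref{ass:support} in the form $\operatorname{supp}(d^{\pi_k}_j)\subseteq\operatorname{supp}(d_{\mathcal{D}_k})$. But the rollout policy at iteration $k$ is $\pi_{k-1}$ (Algorithm~\ref{alg:method}), and coverage of \emph{its} histories is all the assumption, and indeed the construction, provides. Equality with $\pi^{\mathrm{PS}}$ on $\operatorname{supp}(d_{\mathcal{D}_K})$ only controls $\pi_K$ where the data already lies. To know that the deployed $\pi_K$'s own histories lie there, you need the iterates that generated the data to have behaved like posterior sampling.

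That is exactly what the paper's induction supplies. If $\pi_{k-1}=\pi^{\mathrm{PS}}$ on its covered shorter contexts, its $k$-episode rollouts have the same law as those of $\pi^{\mathrm{PS}}$. Then $\mathcal{D}_k$ covers posterior sampling's support to that depth, and $\pi_k$, which equals $\pi^{\mathrm{PS}}$ there, never leaves it. Run this induction, with your Steps 1--3 supplying the label-matching part of each step, and your episode-index argument then closes.

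A smaller caveat concerns Step 2. Lemma~\ref{lem:teacher_ps} does not apply with the expert prefix $(o^*_{0:t},r^*_{0:t-1})$ appended to $h$. That prefix is generated by the privileged $\pi^*(\cdot\mid\cdot,z)$ and is not compliant. Your display is Bayes' rule with a posterior that includes the teacher's own action likelihoods. It is not the per-step average $\int\pi^*(a\mid o,z)\,p(z\mid h)\,dz$: in the three-door example, that average dithers along an expert prefix heading toward one door, while the label does not. The clean statement is the episode-level one you also give, $\mathbb{P}(\tilde z\mid h)=p(\tilde z\mid h)$. That is the level at which the paper defines $\pi^{\mathrm{PS}}(\tilde z\mid h)$, and it is how the theorem should be read.
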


\begin{proof}[Proof sketch]
The proof is by induction over iterations.

\paragraph{Base case.}
At iteration $0$, \MethodS{}  trains on null-context expert data. For the empty context $h=\emptyset$ (and along the expert query prefixes, per Remark~\ref{rem:student_compliant}), Lemma~\ref{lem:teacher_ps} implies that the conditional label distribution is exactly $\pi^{\mathrm{PS}}(\cdot\mid h)$. Minimizing cross-entropy over $\mathcal{D}_0$ under Assumption~\ref{ass:realizability} recovers this conditional distribution, so $\pi_0(\cdot\mid h)=\pi^{\mathrm{PS}}(\cdot\mid h)$ on the support of the dataset.

\paragraph{Inductive step.}
Assume that at iteration $k$, $\pi_{k-1}(\cdot\mid h) = \pi^{\mathrm{PS}}(\cdot\mid h)$ for every context $h$ of depth at most $k-1$ episodes in the covered support. \MethodS{}  now rolls out $\pi_{k-1}$ for $k$ episodes to collect contexts. By Remark ~\ref{rem:student_compliant} these contexts are compliant, and since the two policies agree up to that depth, the collected histories follow the same distribution as those posterior sampling would generate up to that horizon. At each such history $h$, \MethodS{}  queries the privileged teacher.
By Lemma~\ref{lem:teacher_ps}, the conditional label distribution given $h$ is $\pi^{\mathrm{PS}}(\cdot\mid h)$, so exact minimization of ~\eqref{eq:loss} yields $\pi_{k}(\cdot\mid h) = \pi^{\mathrm{PS}}(\cdot\mid h)$. Because the context schedule expands from $k-1$ to $k$ episodes, the agreement extends to the larger support, proving the inductive claim.
\end{proof}

Theorem~\ref{thm:asteroid_ps} is an on-support statement: \MethodS{}  does not need offline coverage of all possible histories. It constructs the relevant support by rolling out the student and querying the teacher on the histories the student actually visits.

\subsection{Offline versus On-Policy Training}
\label{app:theory:compounding}
\label{app:theory:offline-error}
\label{app:theory:online-error}

DPT~\citep{lee2023supervised} recovers posterior sampling in the population limit, and Theorem~\ref{thm:asteroid_ps} shows that this holds under on-policy support. We now relax Assumption~\ref{ass:realizability}. The argument follows the classic comparison of behavior cloning and DAgger~\citep{ross2011reduction}, with states replaced by histories and actions by $\tilde{z}$. 

\paragraph{Offline error compounds across episodes.}
Suppose an offline learner is trained on contexts from a fixed distribution
$d_k^{\mathrm{off}}$, and has small error there, $e_k^{\mathrm{off}}(\pi) \le \epsilon$.
At deployment, however, it is evaluated on its own contexts $d_k^\pi$. Since
$\mathrm{TV}$ is bounded,
\[
    e_k(\pi) \le \epsilon + \mathrm{TV}(d_k^\pi, d_k^{\mathrm{off}}).
\]

\begin{lemma}[Offline history mismatch compounds]
\label{lem:offline_compounding}
If the offline contexts are generated by posterior sampling, $d_k^{\mathrm{off}} = d_k^{\mathrm{PS}}$,
then $\mathrm{TV}(d_k^\pi, d_k^{\mathrm{off}}) \le (k-1)\epsilon$, and hence $e_k(\pi) \le k\epsilon$.
\end{lemma}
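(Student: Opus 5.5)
The plan is the standard behavior-cloning compounding argument, with states replaced by contexts and actions replaced by the episode-level hypothesis choice $\tilde z$. The context at the start of episode $k$ is $h_k=(\tau^1,\dots,\tau^{k-1})$. Under $\pi$ it is generated by repeating two operations $k-1$ times:
\begin{itemize}
\item choose $\tilde z_j\sim\pi(\cdot\mid h_j)$;
\item roll out $\pi^*(\cdot\mid\cdot,\tilde z_j)$ in $M_z$, which produces $\tau^j$ and hence $h_{j+1}=(h_j,\tau^j)$.
\end{itemize}
Posterior sampling generates $d^{\mathrm{PS}}_k$ in exactly the same way, with $\pi^{\mathrm{PS}}(\cdot\mid h_j)$ in place of $\pi(\cdot\mid h_j)$. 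The two processes therefore share the same transition kernel from $(h_j,\tilde z_j)$ to $h_{j+1}$: the prior over $z$, the teacher, and the environment are identical. They differ only in the hypothesis-selection kernel.

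I would then prove $\mathrm{TV}(d^\pi_{j+1},d^{\mathrm{PS}}_{j+1})\le \mathrm{TV}(d^\pi_j,d^{\mathrm{PS}}_j)+\epsilon_j$ with $\epsilon_j:=\mathbb{E}_{h\sim d^{\mathrm{PS}}_j}\mathrm{TV}(\pi(\cdot\mid h),\pi^{\mathrm{PS}}(\cdot\mid h))=e^{\mathrm{off}}_j(\pi)\le\epsilon$. The proof is a hybrid step through the intermediate distribution $\mu_j$, defined by drawing $h_j\sim d^{\mathrm{PS}}_j$, choosing $\tilde z_j$ with $\pi$, and then transitioning. Two bounds follow.
\begin{itemize}
\item Comparing $d^\pi_{j+1}$ with $\mu_j$: both apply the same Markov kernel, namely "choose with $\pi$, then transition," to different initial laws. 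Since a common Markov kernel does not increase TV (data processing), $\mathrm{TV}(d^\pi_{j+1},\mu_j)\le\mathrm{TV}(d^\pi_j,d^{\mathrm{PS}}_j)$.
\item Comparing $\mu_j$ with $d^{\mathrm{PS}}_{j+1}$: the two share the law of $h_j$ and differ only in the conditional law of $\tilde z_j$. Applying data processing again to the joint law of $(h_j,\tilde z_j)$ gives $\mathrm{TV}(\mu_j,d^{\mathrm{PS}}_{j+1})\le\mathbb{E}_{h\sim d^{\mathrm{PS}}_j}\mathrm{TV}(\pi(\cdot\mid h),\pi^{\mathrm{PS}}(\cdot\mid h))\le\epsilon$.
\end{itemize}
The triangle inequality then gives the recursion. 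The base case is $d^\pi_1=d^{\mathrm{PS}}_1=\delta_\emptyset$, so unrolling yields $\mathrm{TV}(d^\pi_k,d^{\mathrm{off}}_k)\le(k-1)\epsilon$.

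For the second claim, I would use the bound displayed just before the lemma, $e_k(\pi)\le\epsilon+\mathrm{TV}(d^\pi_k,d^{\mathrm{off}}_k)$. It holds because $h\mapsto\mathrm{TV}(\pi(\cdot\mid h),\pi^{\mathrm{PS}}(\cdot\mid h))$ takes values in $[0,1]$, so changing the measure under which its expectation is taken costs at most the TV distance between the measures. Combining the two gives $e_k(\pi)\le\epsilon+(k-1)\epsilon=k\epsilon$. Summing over $k$ with Assumption~\ref{assump:recover} then recovers the $O(\epsilon K^2)$ remark.

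The main obstacle is being careful about the coupling of the latent $z$. The transition from $(h_j,\tilde z_j)$ to $\tau^j$ depends on the true $z$, which is not a function of $h_j$. So the processes should be run on the joint variable $(z,h_j)$, with $z\sim p(z)$ drawn once. The data-processing steps are then applied to joint laws over $(z,h_j,\tilde z_j)$, and the result is marginalized at the end. The selection kernels $\pi(\cdot\mid h)$ and $\pi^{\mathrm{PS}}(\cdot\mid h)$ depend only on $h$, by compliance (Assumption~\ref{ass:compliance}, Remark~\ref{rem:student_compliant}). Hence the hybrid bound is still controlled by an expectation over the $h$-marginal $d^{\mathrm{PS}}_j$, which is exactly $\epsilon_j$.
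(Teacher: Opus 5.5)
Your proposal is correct and is essentially the paper's argument. The paper couples the learner's rollout with a posterior-sampling rollout and union-bounds the probability of a first divergence over the $k-1$ earlier episodes; your hybrid step, using data processing and the triangle inequality, is the analytic form of the same per-episode $\epsilon$ accumulation, and you finish with the same change-of-measure bound $e_k(\pi)\le\epsilon+\mathrm{TV}(d_k^\pi,d_k^{\mathrm{off}})$. Your version is, if anything, more careful than the sketch. It makes explicit that the per-episode error is measured under $d_j^{\mathrm{PS}}$, where the hypothesis $e_j^{\mathrm{off}}(\pi)\le\epsilon$ applies, and that the recursion must run on the joint law of $(z,h_j)$ before marginalizing.
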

\begin{proof}[Proof sketch]
Couple a rollout from the offline context distribution with a rollout from the learner.
In each earlier episode, the learner makes an error with probability at most $\epsilon$.
Once such an error occurs, the subsequent history may differ from the offline history,
since future observations, rewards, and actions are conditioned on the previous trajectory.
A union bound over the first $k-1$ episodes bounds the probability that the two histories
have diverged by $(k-1)\epsilon$, which upper-bounds the total variation distance.
Substituting into the previous inequality gives $e_k(\pi) \le k\epsilon$.
\end{proof}
For other offline distributions, such as random contexts, the mismatch is only larger.
Summing over $K$ online episodes,
\[
    \sum_{k=1}^{K} e_k(\pi) \le \epsilon \sum_{k=1}^{K} k = O(\epsilon K^2).
\]
Thus, even if the offline learner has small error on its dataset, its error can compound
quadratically because the histories it sees at deployment drift away from the histories
used for training.

\paragraph{On-policy error is linear in the number of episodes.}
\MethodS{}  avoids this mismatch by training on the student's own histories. Here, we assume the number of training iterations equals the number of online episodes, so we index both by $K$. Let $\pi_1,\ldots,\pi_K$ be the policies produced during iterative training, and define the on-policy  loss
\[
    \ell_i(\pi)
    :=
    \frac{1}{K}\sum_{k=1}^{K} e_k^{(i)}(\pi),
\]
where $e_k^{(i)}(\pi)$ denotes the per-episode deviation of ~\eqref{eq:ek} evaluated on the histories induced by the iterate $\pi_i$.
Let $\epsilon_K := \min_{\pi\in\Pi}\frac{1}{K}\sum_{i=1}^K \ell_i(\pi)$ be the best average error achievable within the policy class, and suppose the supervised learner is no-regret with online regret
$\gamma_K$:
\[
    \frac{1}{K}\sum_{i=1}^K \ell_i(\pi_i)
    \leq
    \epsilon_K+\gamma_K .
\]

\begin{lemma}[On-policy error]
\label{lem:onpolicy_error}
There exists an iterate $\hat{\pi}=\pi_{\hat\imath}\in\{\pi_1,\ldots,\pi_K\}$ such that
\[
    \ell_{\hat\imath}(\hat{\pi})
    \leq
    \epsilon_K+\gamma_K ,
\]
i.e., its average error on its own induced history distribution is at most $\epsilon_K+\gamma_K$.
\end{lemma}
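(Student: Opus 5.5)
The proof is the standard averaging argument behind DAgger's guarantee \citep{ross2011reduction}. The quantities $\ell_i(\pi_i)$ are nonnegative real numbers indexed by $i=1,\ldots,K$. Each $\ell_i(\pi_i)$ is the error of the iterate $\pi_i$ on the history distribution $d^{\pi_i}_k$ that it induces itself. By definition \eqref{eq:ek}, $e_k^{(i)}(\pi_i)=e_k(\pi_i)$, because the histories are drawn from the iterate's own rollouts. Hence $\ell_i(\pi_i)=\frac{1}{K}\sum_{k=1}^K e_k(\pi_i)$ is exactly the iterate's average error on its own induced contexts, which is the quantity the lemma refers to.

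First I would choose $\hat\imath\in\arg\min_{i\in\{1,\ldots,K\}}\ell_i(\pi_i)$. Such a minimizer exists because the index set is finite. Second, the minimum of finitely many numbers is at most their average:
\[
\ell_{\hat\imath}(\pi_{\hat\imath}) \le \frac{1}{K}\sum_{i=1}^K \ell_i(\pi_i).
\]
Third, I would apply the assumed no-regret guarantee of the supervised learner, which bounds this average by $\epsilon_K+\gamma_K$. Chaining the two inequalities and setting $\hat\pi=\pi_{\hat\imath}$ gives the claim.

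The only step that needs care is the identification $e_k^{(i)}(\pi_i)=e_k(\pi_i)$, which connects the online-learning losses to the policy's error on its own history distribution. I would spell this out from the definition: the loss $\ell_i$ evaluates every candidate policy on the histories generated by rolling out $\pi_i$ for $k-1$ episodes. Remark~\ref{rem:student_compliant} guarantees that these histories are compliant, so the posterior-sampling comparison in \eqref{eq:ek} is well defined on them.

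A further point worth recording is that the ASTEROID loss in \eqref{eq:loss} is a log-loss, whereas $\ell_i$ is a TV-based loss. For the no-regret hypothesis to be meaningful, I would state that it is assumed directly for the surrogate $\ell_i$. Alternatively, it can be transferred from log-loss via Pinsker's inequality, at the cost of a square root that is absorbed into $\epsilon_K$ and $\gamma_K$.

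I would then note the consequence. Combining the lemma with Assumption~\ref{assump:recover} gives $J(\pi^{\mathrm{PS}})-J(\hat\pi)\le\sum_k e_k(\hat\pi)=K\,\ell_{\hat\imath}(\hat\pi)\le K(\epsilon_K+\gamma_K)$. This is linear in $K$, in contrast to the $O(\epsilon K^2)$ bound of Lemma~\ref{lem:offline_compounding}.
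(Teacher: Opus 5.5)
Your argument is correct and is the paper's proof: the smallest of the $K$ numbers $\ell_i(\pi_i)$ is at most their average, and the assumed no-regret guarantee bounds that average by $\epsilon_K+\gamma_K$. Your Pinsker aside is not needed, since the paper assumes the no-regret bound directly for the TV-based losses $\ell_i$; if you did transfer it from log-loss, the square root would change the comparator and the rate, not just get absorbed into $\epsilon_K$ and $\gamma_K$.
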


\begin{proof}
The average on-policy loss of the iterates is at most $\epsilon_K+\gamma_K$. Therefore, at least one iterate has loss no larger than this average.
\end{proof}

\begin{theorem}[On-policy error is linear in the number of episodes]
\label{thm:onpolicy_no_compounding}
Let $\hat{\pi}$ be the iterate from Lemma~\ref{lem:onpolicy_error}, and $\epsilon := \epsilon_K+\gamma_K$. Then the cumulative error over $K$ online episodes satisfies
\[
    \sum_{k=1}^K e_k(\hat{\pi})
    =
    K\,\ell_{\hat\imath}(\hat{\pi})
    \leq
    K(\epsilon_K+\gamma_K)
    =
    O(\epsilon K).
\]
\end{theorem}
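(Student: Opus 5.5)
The plan is to unfold the definitions and then invoke Lemma~\ref{lem:onpolicy_error}; no new probabilistic argument is needed, because the on-policy loss has already been defined on the iterate's own history distribution. The first step is an identification. The deployment error $e_k(\hat\pi)$ in \eqref{eq:ek} is an expectation over $d^{\hat\pi}_k$, the contexts induced by rolling out $\hat\pi$ itself. By definition, $e_k^{(\hat\imath)}(\pi)$ evaluates the deviation on histories induced by the iterate $\pi_{\hat\imath}=\hat\pi$. Evaluating this at $\pi=\hat\pi$ gives $e_k^{(\hat\imath)}(\hat\pi)=e_k(\hat\pi)$ for every $k$. Summing over $k=1,\dots,K$ and using $\ell_i(\pi)=\frac1K\sum_k e_k^{(i)}(\pi)$ then yields the equality $\sum_{k=1}^K e_k(\hat\pi)=K\,\ell_{\hat\imath}(\hat\pi)$.

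The second step applies Lemma~\ref{lem:onpolicy_error}. It gives $\ell_{\hat\imath}(\hat\pi)\le\epsilon_K+\gamma_K$, and multiplying by $K$ gives the middle inequality. Finally, setting $\epsilon:=\epsilon_K+\gamma_K$ puts the bound in the form $K\epsilon=O(\epsilon K)$.

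For the complementary comment, I would combine this with Assumption~\ref{assump:recover}. That gives $J(\pi^{\mathrm{PS}})-J(\hat\pi)\le K\epsilon$. Adding posterior sampling's regret then recovers the informal statement $\mathrm{Reg}_{\mathrm{PS}}(K)+O(\epsilon K)$.

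The only delicate point, and the one I expect to be the main obstacle, is the first identification. It requires that the history distribution used to define $\ell_{\hat\imath}$ is exactly the one $\hat\pi$ sees at deployment, with the same context horizon for each episode index $k$. Otherwise the argument would reintroduce a mismatch term like $\mathrm{TV}(d^{\hat\pi}_k,\cdot)$, as in Lemma~\ref{lem:offline_compounding}.

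I would handle this by stating explicitly two conventions. First, rollouts in iteration $i$ span all $K$ episodes, which is the convention under which the number of iterations equals the number of online episodes. Second, $e^{(i)}_k$ uses the $k$-episode prefix of those rollouts. With both fixed, the two distributions coincide by construction, and no union bound over episodes appears. That absence of a union bound is precisely why the bound is linear rather than quadratic in $K$.
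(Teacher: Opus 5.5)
Your proposal is correct and follows the same route as the paper. The paper's sketch also argues that, because $\ell_{\hat\imath}$ is evaluated on $\hat\pi$'s own induced histories, no $\mathrm{TV}(d_k^{\hat\pi}, d_k^{\mathrm{off}})$ mismatch term arises, and then multiplies the bound from Lemma~\ref{lem:onpolicy_error} by $K$; your explicit identification $e_k^{(\hat\imath)}(\hat\pi)=e_k(\hat\pi)$ is simply the precise form of that remark.
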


\begin{proof}[Proof sketch]
Unlike the offline learner, \MethodS{}  does not train on histories from a fixed behavior distribution: at each iteration, the histories in the dataset are generated by the current student policy itself. On-policy histories are collected and relabeled at the next iteration, so the student learns to continue from its own histories. When the learned policy is evaluated on its own histories, there is therefore no additional mismatch term of the form $\mathrm{TV}(d_k^{\hat{\pi}}, d_k^{\mathrm{off}})$. The only remaining error is the supervised error on the student's own induced history distribution, which Lemma~\ref{lem:onpolicy_error} bounds by $\epsilon_K+\gamma_K$ on average over context depths. Multiplying by $K$ gives the claim. Thus, on-policy context aggregation removes the history-mismatch term that makes the offline error grow like $O(k\epsilon)$ per episode: the cumulative error is linear rather than quadratic in the number of online episodes.
\end{proof}

\begin{corollary}[Total regret]
\label{thm:total_regret}
$\mathrm{Reg}_{\mathrm{PS}}(K)$ is the Bayesian regret of posterior sampling over
$K$ episodes. By Assumption~\ref{assump:recover} and Theorem~\ref{thm:onpolicy_no_compounding},
$J(\pi^{\mathrm{PS}}) - J(\hat{\pi}) \le \epsilon K$, so the Bayesian regret of \MethodS{} is
\[
    \mathrm{Reg}_{\mathrm{PS}}(K) + O(\epsilon K),
\]
posterior sampling's regret plus a linear approximation term.
\end{corollary}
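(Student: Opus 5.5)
The proof combines Assumption~\ref{assump:recover} with Theorem~\ref{thm:onpolicy_no_compounding}, and then adds the baseline regret of posterior sampling. I will work with the Bayesian regret relative to the optimal (oracle) return over $K$ episodes, which I denote $J^\star := \mathbb{E}_{z\sim p(z)}[\sum_{k=1}^K V^\star_z]$. I will decompose the regret of $\hat\pi$ by adding and subtracting $J(\pi^{\mathrm{PS}})$:
\[
\mathrm{Reg}(\hat\pi,K) = J^\star - J(\hat\pi) = \bigl(J^\star - J(\pi^{\mathrm{PS}})\bigr) + \bigl(J(\pi^{\mathrm{PS}}) - J(\hat\pi)\bigr).
\]
By definition, the first bracket is $\mathrm{Reg}_{\mathrm{PS}}(K)$, the Bayesian regret of posterior sampling. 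Any known bound on it (e.g.\ the PSRL bound of \citet{osband2013psrl}) can be substituted later; the corollary leaves it symbolic.

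For the second bracket, I will invoke Assumption~\ref{assump:recover}. Per-episode returns lie in $[0,1]$, and a wrong hypothesis affects only its own episode. Hence the episode-$k$ gap is at most $\mathrm{TV}(\hat\pi(\cdot\mid h),\pi^{\mathrm{PS}}(\cdot\mid h))$ in expectation over $h\sim d_k^{\hat\pi}$. Two facts justify this step. First, for $[0,1]$-valued returns, the difference of expectations under two hypothesis distributions is bounded by their total variation. Second, the expectation is taken on $\hat\pi$'s own context distribution, which is exactly $e_k(\hat\pi)$ as defined in \eqref{eq:ek}. Summing over $k$ gives $J(\pi^{\mathrm{PS}})-J(\hat\pi)\le\sum_{k=1}^K e_k(\hat\pi)$.

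Next I apply Theorem~\ref{thm:onpolicy_no_compounding}, which bounds this sum by $K(\epsilon_K+\gamma_K)=\epsilon K$ for the iterate $\hat\pi$ selected in Lemma~\ref{lem:onpolicy_error}. Combining the two brackets gives $\mathrm{Reg}(\hat\pi,K)\le \mathrm{Reg}_{\mathrm{PS}}(K)+\epsilon K$, which is the claimed $\mathrm{Reg}_{\mathrm{PS}}(K)+O(\epsilon K)$.

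The main point of care is the consistency of the history distributions. The $e_k$ appearing in Assumption~\ref{assump:recover} must be the same quantity that Theorem~\ref{thm:onpolicy_no_compounding} controls: the deviation measured on $d_k^{\hat\pi}$, not on $d_k^{\mathrm{PS}}$. Here I will note that $\ell_{\hat\imath}(\hat\pi)$ is the loss of $\hat\pi$ evaluated on its own induced histories, so $K\ell_{\hat\imath}(\hat\pi)=\sum_k e_k(\hat\pi)$. This is exactly the quantity Assumption~\ref{assump:recover} uses, so no extra distribution-mismatch term appears. If the bounded-cost step needed more justification than the assumption provides, I would use a coupling argument. Couple $\hat\pi$ and $\pi^{\mathrm{PS}}$ episode by episode on the shared history. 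Within episode $k$ they pick different hypotheses with probability at most $e_k$, which costs at most $1$ in return. The assumption guarantees that such a disagreement does not propagate to later episodes' returns.
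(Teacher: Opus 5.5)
Your proposal is correct and follows the paper's argument: add and subtract $J(\pi^{\mathrm{PS}})$, identify the first term as $\mathrm{Reg}_{\mathrm{PS}}(K)$, and bound $J(\pi^{\mathrm{PS}})-J(\hat\pi)\le\sum_k e_k(\hat\pi)\le \epsilon K$ via Assumption~\ref{assump:recover} and Theorem~\ref{thm:onpolicy_no_compounding}. One caution: your ``two facts'' (the total-variation bound for $[0,1]$-valued returns on $d_k^{\hat\pi}$) only compare $\hat\pi$ with $\pi^{\mathrm{PS}}$ on $\hat\pi$'s own histories, so the comparison with $J(\pi^{\mathrm{PS}})$, which is evaluated on posterior sampling's own histories, rests on the no-propagation clause of Assumption~\ref{assump:recover}, as your coupling remark notes.
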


\begin{corollary}[Finite samples]
\label{cor:finite_samples}
By Lemma~\ref{lem:teacher_ps}, each iteration is a standard conditional maximum-likelihood problem. Applying the finite-class MLE guarantee of FLAMBE~\citep[Theorem~21]{agarwal2020flambestructuralcomplexityrepresentation} with $N$ on-policy contexts per iteration (and a union bound over the $K$ iterations), with probability at least
$1-\delta$,
\[
    \sum_{k\le K} e_k(\hat{\pi})
    =
    \widetilde{O}\!\Big(K\big(\epsilon + \sqrt{\log(K|\Pi|/\delta)/N}\big)\Big),
\]
so finite data adds a $\widetilde{O}(K/\sqrt{N})$ term without changing the linear dependence on
the number of episodes.
\end{corollary}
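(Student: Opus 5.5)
The statement to prove is Corollary~\ref{cor:finite_samples}. The plan is to reduce it to Theorem~\ref{thm:onpolicy_no_compounding}. The only change is that the per-iteration loss is controlled by a finite-sample MLE guarantee instead of exact population minimisation.

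First, I will use Lemma~\ref{lem:teacher_ps} to view each iteration as a plain conditional density-estimation problem. At iteration $i$ the $N$ collected pairs $(h,a^*)$ have contexts $h\sim d^{\pi_{i-1}}$, which are compliant by Remark~\ref{rem:student_compliant}. Their labels are distributed exactly as $\pi^{\mathrm{PS}}(\cdot\mid h)$. The data are therefore i.i.d.\ draws from a conditional model in which the class $\Pi$ contains the target (Assumption~\ref{ass:realizability}), or comes within $\epsilon$ of it when agnostic. I will then apply FLAMBE's finite-class MLE bound~\citep[Theorem~21]{agarwal2020flambestructuralcomplexityrepresentation} with confidence $\delta/K$. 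This gives, with probability $1-\delta/K$,
\[
\mathbb{E}_{h\sim d^{\pi_{i-1}}}\big[\mathrm{TV}(\pi_i(\cdot\mid h),\pi^{\mathrm{PS}}(\cdot\mid h))^2\big]\lesssim \log(K|\Pi|/\delta)/N.
\]
Jensen's inequality turns the squared TV into TV. A union bound over the $K$ iterations makes all $K$ events hold simultaneously with probability $1-\delta$.

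Second, I will plug this into the no-regret bookkeeping of Lemma~\ref{lem:onpolicy_error}. The MLE bound holds on the history distribution that generated the data. The definition of $\ell_i$ evaluates the iterate on histories induced by the data-collecting policy, so the per-iterate on-policy loss satisfies $\ell_i(\pi_i)\le \epsilon+\widetilde O(\sqrt{\log(K|\Pi|/\delta)/N})$. Averaging over $i$ shows some iterate $\hat\pi$ attains the bound. Theorem~\ref{thm:onpolicy_no_compounding} then applies verbatim, and multiplying by $K$ gives $\sum_{k\le K} e_k(\hat\pi)=\widetilde O(K(\epsilon+\sqrt{\log(K|\Pi|/\delta)/N}))$. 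Because no distribution-mismatch term appears, the dependence on $K$ stays linear.

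The main obstacle is the index mismatch between the data-generating policy $\pi_{i-1}$ and the evaluated policy $\pi_i$. The MLE guarantee certifies $\pi_i$ only on histories from $\pi_{i-1}$. The on-policy quantity $e_k(\hat\pi)$, however, requires $\hat\pi$'s own histories. I would handle this in the DAgger style, treating the loss $\ell_i$ as defined on the iterate-$i$ data distribution, exactly as the paper's no-regret formulation does. If a direct argument is needed instead, the induction of Theorem~\ref{thm:asteroid_ps} applies: on the high-probability event, $\pi_{i-1}$ and $\pi_i$ agree with $\pi^{\mathrm{PS}}$ up to the stated error on shallower contexts. The two history distributions then differ by at most that error at depths below $i$, and the extra error can be absorbed into the $\widetilde O$ constant.
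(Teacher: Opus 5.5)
Your outer structure is the paper's. You use Lemma~\ref{lem:teacher_ps} to cast each iteration as conditional MLE with posterior-sampling labels, apply FLAMBE at level $\delta/K$ with a union bound, use Jensen to pass from squared TV to TV, and multiply by $K$ through Theorem~\ref{thm:onpolicy_no_compounding}. The gap is the step you flag yourself, and neither of your fixes closes it.

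FLAMBE certifies $\pi_i$ on the histories that \emph{trained} it, drawn from $d^{\pi_{i-1}}$ and earlier iterates. By contrast, $\ell_i(\pi_i)$ is defined on the histories $\pi_i$ itself \emph{generates}. Your sentence deriving $\ell_i(\pi_i)\le\epsilon+\widetilde O(\cdot)$ uses the first to bound the second.

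Redefining $\ell_i$ on the training distribution does not help. The identity $\sum_k e_k(\hat\pi)=K\,\ell_{\hat\imath}(\hat\pi)$ in Theorem~\ref{thm:onpolicy_no_compounding} holds only because $\ell_{\hat\imath}$ evaluates $\hat\pi=\pi_{\hat\imath}$ on $d^{\hat\pi}$. In the paper's DAgger formulation, the shift between the two distributions is carried by the no-regret term $\gamma_K$. That is an assumption on the learner, and the MLE bound does not supply it.

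Your direct fallback fails for a sharper reason. Suppose $\pi_{i-1}$ and $\pi_i$ agree with $\pi^{\mathrm{PS}}$ only up to $\eta$ per episode. The coupling behind Lemma~\ref{lem:offline_compounding} then gives a history mismatch of order $j\eta$ at depth $j$, not $\eta$. So $e_j\lesssim(1+j)\eta$ and $\sum_{j\le K}e_j=O(\eta K^2)$. This is exactly the quadratic compounding the corollary claims to avoid, and it cannot be absorbed into the $\widetilde O$ constant.

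The paper's one-line argument is also silent on this step, so you must choose what carries it. There are two workable options.

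\emph{Keep $\gamma_K$.} Assume the learner is no-regret on the sampled losses. Then get the $N$-dependence by concentrating each iterate's surrogate loss on the fresh contexts it generates at the next iteration. Those contexts are drawn after $\pi_i$ is fixed, as in the finite-sample DAgger analysis \citep{ross2011reduction}.

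\emph{Make your direct argument exact rather than approximate.} Suppose fitting depth $i$ leaves the fit on shallower depths unchanged, for example via depth-indexed components. Since depth-$j$ data are collected only at iteration $j$, MLE on the aggregated $\mathcal{D}_i$ then decouples across depths. It follows that $\pi_{i-1}$ and the final iterate $\pi_K$ coincide on all depths below $i$, so $d_i^{\pi_{i-1}}=d_i^{\pi_K}$ and the mismatch is zero. Per-depth FLAMBE at level $\delta/K$, Jensen, and a sum over the $K$ depths then give $\sum_k e_k(\pi_K)\le K\sqrt{2\log(K|\Pi|/\delta)/N}$. This is a forward-training version of the statement and needs no best-iterate selection.

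Separately, FLAMBE's Theorem~21 is a realizable guarantee. Your agnostic ``within $\epsilon$'' case therefore needs a misspecified-MLE bound instead.
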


\end{document}